\newif\ifarxiv
\title[Algorithmic Chaining and the Role of Partial Feedback in Online Nonparametric Learning]{Algorithmic Chaining and the Role of Partial Feedback \\ in Online Nonparametric Learning}
\pgfplotsset{width=7cm,compat=1.8}
\newenvironment{proofref}[1]{\par\noindent{\bfseries\upshape Proof (of~#1)\ }}{\hfill\BlackBox\\[2mm]}
\newenvironment{sketchproof}[1]{\par\noindent{\bfseries\upshape Sketch of proof (of~#1)\ }}{\hfill\BlackBox\\[2mm]}
\renewcommand{\epsilon}{\varepsilon}
\renewcommand{\leq}{\leqslant}
\renewcommand{\geq}{\geqslant}
\renewcommand{\phi}{\varphi}
\renewcommand{\hat}{\widehat}
\renewcommand{\tilde}{\widetilde}
\newcommand{\R}{\mathbb{R}}
\newcommand{\cX}{\mathcal{X}}
\newcommand{\cF}{\mathcal{F}}
\newcommand{\cK}{\mathcal{K}}
\newcommand{\cN}{\mathcal{N}}
\newcommand{\Prob}{\mathbb{P}}
\newcommand{\E}{\mathbb{E}}
\newcommand{\iid}{{i.i.d.} }
\newcommand{\indicator}[1]{\mathbb{I}_{#1}}
\newcommand{\dd}{\mathrm{d}}
\newcommand{\cC}{\mathcal{C}}
\newcommand{\cL}{\mathcal{L}}
\newcommand{\cO}{\mathcal{O}}
\newcommand{\cP}{\mathcal{P}}
\newcommand{\cT}{\mathcal{T}}
\newcommand{\cY}{\mathcal{Y}}
\newcommand{\eqdef}{\triangleq}
\newcommand{\norm}[1]{\left\lVert#1\right\rVert}
\DeclareMathOperator{\Reg}{Reg}
\DeclareMathOperator*{\argmin}{arg\,min}
\DeclareMathOperator{\automEWA}{\mathrm{Exp4}}
\DeclareMathOperator{\depth}{\mathrm{depth}}
\DeclareMathOperator{\pen}{\mathrm{pen}}
\newcommand{\ve}{\varepsilon}
\newcommand{\scX}{\mathcal{X}}
\newcommand{\yhat}{\widehat{y}}
\newcommand{\ymin}{y^{\mathrm{min}}}
\newcommand{\theset}[2]{ \left\{ {#1} \,:\, {#2} \right\} }
\newcommand{\loss}{\ell}
\newcommand{\xbar}{\bar{x}}
\newcommand{\ybar}{\bar{y}}
\newlength{\minipagewidth}
\newcommand{\blfootnote}[1]{%
  \begingroup
  \renewcommand\thefootnote{}\footnote{\!\!\!\!{#1}}%
  \addtocounter{footnote}{-1}%
  \endgroup
}
\newcommand{\pushright}[1]{\ifmeasuring@#1\else\omit\hfill$\displaystyle#1$\fi\ignorespaces}
\newcommand{\pushleft}[1]{\ifmeasuring@#1\else\omit$\displaystyle#1$\hfill\fi\ignorespaces}
\begin{document}
\maketitle

\begin{abstract}
%

We \ifarxiv\blfootnote{Accepted for presentation at Conference on
Learning Theory (COLT) 2017.}\else\blfootnote{Extended abstract. Full version appears as ArXiv:1702.08211,v2.}\fi investigate contextual online learning with nonparametric (Lipschitz) comparison classes under different assumptions on losses and feedback information. For full information feedback and Lipschitz losses, we design the first explicit algorithm achieving the minimax regret rate (up to log factors). In a partial feedback model motivated by second-price auctions, we obtain algorithms for Lipschitz and semi-Lipschitz losses with regret bounds improving on the known bounds for standard bandit feedback. Our analysis combines novel results for contextual second-price auctions with a novel algorithmic approach based on chaining. When the context space is Euclidean, our chaining approach is efficient and delivers an even better regret bound.
\end{abstract}

\begin{keywords}
online learning, nonparametric, chaining, bandits.
\end{keywords}


\section{Introduction}

In online learning \citep{cbl06,shalev2011online,hazan2016introduction} an agent (or learner) interacts with an unknown and arbitrary environment in a sequence of rounds. At each round, the learner chooses an action from a given action space and incurs the loss associated with the chosen action. The loss functions, which are different in each round, are fixed by the environment at the beginning of the interaction. After choosing an action, the learner observes some feedback, which can be used to reduce his loss in subsequent rounds. A variety of different feedback models are discussed in the literature. The most common feedback model is full information, also known as prediction with expert advice, where the learner gets access to the entire loss function at the end of each round. Another common feedback model is bandit information, where the learner just observes the loss assigned to the action chosen in the current round. Feedback models in between full and bandit information are also possible, and can be used to describe many interesting online learning applications ---see e.g., \citep{AlonCGMMS14,alon2015online}. The performance of an online learner is measured using a notion of regret, which is typically defined as the amount by which the learner's cumulative loss exceeds the cumulative loss of the best fixed action in hindsight.

Online contextual learning is a generalization of online learning where the loss functions generated by the environment are paired with contexts from a given context space. On each round, before choosing an action, the learner observes the current context. In the presence of contextual information, the learner's regret is no longer defined against the best action in hindsight, but rather against the best policy (i.e., mapping from the context space to the action space) in a given reference class of policies. In agreement with the online learning framework, online contextual learning is nonstochastic. Namely, regret bounds must hold for arbitrary sequences of contexts and losses.

In order to capture complex environments, the reference class of policies should be as large as possible. In this work, we focus on nonparametric classes of policies, such as classes containing policies that are Lipschitz with respect to metrics defined on the context and action spaces. The best possible (minimax) growth rate of the regret, as a function of the number $T$ of rounds, is then determined by the interplay among the richness of the policy class, the constraints on the loss functions (e.g., Lipschitz, convex, etc.), and the type of feedback information (full, bandit, or in between). Whereas most of the previous works study online nonparametric learning with convex losses, in this paper we investigate nonparametric regret rates for general Lipschitz losses (in fact, some of our results apply to an even larger class of loss functions).

In the full information setting, a very general yet simple algorithmic approach to online nonparametric learning with convex and Lipschitz losses was introduced by \citet{HaMe-07colt-OnlineLearningPriorKnowledge}. For any reference class of Lipschitz policies, they proved a $\tilde{\cO}\big(T^{(d+1)/(d+2)}\big)$ upper bound\footnote{This bound has a polynomial dependence on the metric dimension of the action space, which is absorbed by the asymptotic notation.} on the regret for any context space of metric dimension $d$, where the $\tilde{\cO}$ notation hides logarithmic factors in $T$. In the same work, they also proved a $\Omega\big(T^{(d-1)/d}\big)$ lower bound. The gap between the upper and lower bound was closed by \citet{rakhlin2015online} for arbitrary Lipschitz (not necessarily convex) losses, showing that $T^{(d-1)/d}$ is indeed the minimax rate for full information. Yet, since their approach is nonconstructive, they did not give an explicit algorithm achieving this bound.

As noted elsewhere ---see, e.g., \citep{sliv14}--- the approach of \citet{HaMe-07colt-OnlineLearningPriorKnowledge} can be also adapted to prove a $\tilde{\cO}\big(T^{(d+p+1)/(d+p+2)}\big)$ upper bound on the regret against any class of Lipschitz policies in the bandit information setting with Lipschitz losses, where $p$ is the metric dimension of the action space. The lower bound $\Omega\big(T^{(p+1)/(p+2)}\big)$ proven for $d=0$ \citep{bubeck2011x,ksu08} rules out the possibility of improving the dependence on $p$ in the upper bound.

\paragraph{Our contributions.}
In the full information model, we show the first explicit algorithm achieving the minimax regret rate $\tilde{\cO}\big(T^{(d-1)/d}\big)$ for Lipschitz policies and Lipschitz losses (excluding logarithmic factors in $T$ and polynomial factors in the metric dimension of the action space). When the context space is $[0,1]^d$, our algorithm can be implemented efficiently (i.e., with a running time polynomial in $T$).

Motivated by a problem in online advertising where the action space is the $[0,1]$ interval, we also study a ``one-sided'' full information model in which the loss of each action greater than or equal to the chosen action is available to the learner after each round. For this feedback model, which lies between full and bandit information, we prove a regret bound for Lipschitz policies and Lipschitz losses of order $\tilde{\cO}\big(T^{d/(d+1)}\big)$, which is larger than the minimax regret for full information but smaller than the upper bound for bandit information when $p=1$. For the special case when the context space is $[0,1]^d$, we use a specialized approach offering the double advantage of an improved $\tilde{\cO}\big(T^{(d-1/3)/(d+2/3)}\big)$ regret bound which is also attained by a time-efficient algorithm.

We then study a concrete application for minimizing the seller's regret in contextual second-price auctions with reserve price, a setting where the loss function is not Lipschitz but only semi-Lipschitz. When the feedback after each auction is the seller's revenue together with the highest bid for the current auction, we prove a $\tilde{\cO}\big(T^{(d+1)/(d+2)}\big)$ regret bound against Lipschitz policies (in this setting, a policy maps contexts to reserve prices for the seller). As a by-product, we show the first $\tilde{\cO}\big(\sqrt{T}\big)$ regret bound on the seller's revenue in context-free second-price auctions under the same feedback model as above. Table~\ref{tab:rates} summarizes our results.
\begin{table}[!th]
\small
\begin{center}
\begin{tabular}{llll} \toprule
\bfseries Feedback model & \bfseries Loss functions \hspace*{10pt} & \hspace*{10pt} \bfseries  Upper bound \hspace*{-1.4cm} &  \\ \midrule
     \multirow{2}{*}{Bandit} & Lipschitz & $T^{\frac{d+2}{d+3}}$ & (Theorem~\ref{th:flat-lip}) \\
     & Convex & $T^{\frac{d+1}{d+2}}$ & (Corollary~\ref{cor:convex}) \\[2pt] \hline 
     & & \\[\dimexpr-\normalbaselineskip+2pt]
     \multirow{2}{*}{One-sided full information} & Semi-Lipschitz & $T^{\frac{d+1}{d+2}}$ &  (Theorem~\ref{t:secondpriceflat}) \\
     & Lipschitz & $T^{\frac{d-1/3}{d+2/3}}$ & (Theorem~\ref{thm:HierExp4star}) \\[2pt] \hline
     & & \\[\dimexpr-\normalbaselineskip+2pt]
     Full information & Lipschitz & $T^{\frac{d-1}{d}}$ & (Theorem~\ref{thm:HierarchicalEWA}) \\ 
    \bottomrule
\end{tabular}
\end{center}
\caption{Some regret bounds obtained in this paper. The rates are up to logarithmic factors for Lipschitz policies $f : [0,1]^d \to [0,1]$ with $d \geq 2$. All upper bounds are constructive (i.e., achieved by explict algorithms). The only matching lower bound is the one for full information feedback due to~\cite{HaMe-07colt-OnlineLearningPriorKnowledge}.
}
\label{tab:rates}
\end{table}

In order to prove our results, we approximate the action space using a finite covering (finite coverability is a necessary condition for our results to hold). This allows us to use the many existing algorithms for experts (full information feedback) and bandits when the action space is finite, such as Hedge \citep{FS97} and Exp3/Exp4 \citep{AuCBFrSc-02-NonStochasticBandits}. The simplest of our algorithms, adapted from \citet{HaMe-07colt-OnlineLearningPriorKnowledge}, incrementally covers the context space with balls of fixed radius. Each ball hosts an instance of an online learning algorithm which predicts in all rounds when the context falls into the ball. New balls are adaptively created when new contexts are observed which fall outside the existing balls (see Algorithm~\ref{alg:ContextualExp3} for an example). We use this simple construction to prove the regret bound for contextual second-price auctions, a setting where losses are not Lipschitz. In order to exploit the additional structure provided by Lipschitz losses, we resort to more sophisticated constructions based on chaining \citep{dudley1967sizes}. In particular, inspired by previous works in this area (especially the work of \citealp{GaillardGerchinovitz2015}), we design a chaining-inspired algorithm applied to a hierarchical covering of the policy space. Despite we are not the first ones to use chaining algorithmically in online learning, our idea of constructing a hierarchy of online learners, where each node uses its children as experts, is novel in this context as far as we know. Finally, the time-efficient algorithm achieving the improved regret bound is derived from a different (and more involved) chaining algorithm based on wavelet-like approximation techniques.

\paragraph{Setting and main definitions.}
We assume the context space $\cX$ is a metric space $(\cX,\rho_{\cX})$ of finite metric dimension $d$ and the action space $\cY$ is a metric space $(\cY,\rho_{\cY})$ of finite metric dimension $p$. Hence, there exist $C_{\cX},C_{\cY} > 0$ such that, for all $0 < \ve \leq 1$, $\cX$ and $\cY$ can be covered, respectively, with at most $C_{\cX}\ve^{-d}$ and at most $C_{\cY}\ve^{-p}$ balls of radius $\ve$. For any $0 < \ve \leq 1$, we use $\cY_{\epsilon}$ to denote any $\ve$-covering of $\cY$ of size $K_{\epsilon} \leq C_{\cY}\ve^{-p}$. Finally, we assume that $\cY$ has diameter bounded by $1$ with respect to metric $\rho_{\cY}$.

We consider the following online learning protocol with oblivious adversary and loss functions $\loss_t : \cY \to [0,1]$. Given an unknown sequence $(x_1,\loss_1),(x_2,\loss_2),\dots$ of contexts $x_t\in\cX$ and loss functions $\loss_t : \cY \to [0,1]$, for every round $t = 1,2,\dots$ :
\begin{enumerate}[topsep=0pt,parsep=0pt,itemsep=0pt]
	\item The environment reveals context $x_t \in \cX$;
	\item The learner selects an action $\yhat_t \in \cY$ and incurs loss $\loss_t\big(\yhat_t\big)$;
	\item The learner obtains feedback from the environment.
\end{enumerate}
Loss functions $\loss_t$ satisfy the $1$-Lipschitz\footnote{\label{foot:lip}Assuming a unit Lipschitz constant is without loss of generality because our algorithms are oblivious to it.
}
condition $\big|\loss_t(y) - \loss_t(y')\big| \leq \rho_{\cY}(y,y')$ for all $y,y' \in \cY$. However, we occasionally consider losses satisfying a weaker semi-Lipschitz condition. 

We study three different types of feedback: bandit feedback (the learner only observes the loss $\ell_t(\hat{y}_t)$ of the selected action $\hat{y}_t$), full information feedback (the learner can compute $\ell_t(y)$ for any $y\in\cY$), and one-sided full information feedback ($\cY \equiv [0,1]$, and the learner can compute $\loss_t(y)$ if and only if $y \geq \hat{y}_t$). Given a reference class $\cF \subseteq \cY^{\cX}$ of policies, the learner's goal is to minimize the regret against the best policy in the class,
\[
	\Reg_T(\cF)
\eqdef
	\E\left[\sum_{t=1}^T \ell_t\bigl(\hat{y}_t\bigr)\right] - \inf_{f \in \cF} \sum_{t=1}^T \ell_t\bigl(f(x_t)\bigr)\,,
\]
where the expectation is with respect to the learner's internal randomization.
We derive regret bounds for the competitor class $\cF$ made up of all bounded functions $f : \cX \to \cY$ that are $1$-Lipschitz\footnote{Assuming a unit Lipschitz constant $L_f$ for the functions in $f \in\cF$ is also without loss of generality because our algorithms are oblivious to it. The only exception is algorithm $\textrm{HierExp4}^\star$ of Section~\ref{sec:efficient-chaining}, which is only guaranteed to work with unit Lipschitz constants. However, a similar regret bound can be achieved for arbitrary (yet known) values of $L_f$ via a simple modification---see also \citep{bubeck2011lipschitz} for regret bounds optimized for $L_f$  in a stochastic and non-contextual bandit setting when $L_f$ is unknown.} w.r.t.\ $\rho_{\cX}$ and $\rho_{\cY}$. Namely, $\rho_{\cY}\big(f(x),f(x')\big) \leq \rho_{\cX}(x,x')$ for all $f\in\cF$ and all $x,x'\in\cX$.
We occasionally use the dot product notation $p_t\cdot\ell_t$ to indicate the expectation of $\ell_t$ according to law $p_t$. Finally, the set of all probability distributions over a finite set of $K$ elements is denoted by $\Delta(K)$.

\paragraph{Organization of the paper.}
The rest of the paper is organized as follows. In Section \ref{sec:related}, we give an overview of the related literature. Then, starting from the subsequent section, our results are presented in the order dictated by the amount of feedback available to the learner, from bandit feedback (Section \ref{sec:bandits}) to one-sided full information feedback (Section \ref{sec:sideInformation}) to full information feedback (Section \ref{sec:full-info}).


\section{Related Work}\label{sec:related}
%
Contextual online learning generalizes online convex optimization~\citep{hazan2016introduction} to nonconvex losses, nonparametric policies, and partial feedback. Papers about nonparametric online learning in full information include \citep{vovk2007competing,GaillardGerchinovitz2015} for the square loss, and \citep{HaMe-07colt-OnlineLearningPriorKnowledge,RaSr-15-OnlineNonparametricGeneralLoss} for general convex losses. In the bandit feedback model, earlier work on context-free bandits on metric spaces includes \citep{kleinberg04,ksu08}. The paper \citep{AuCBFrSc-02-NonStochasticBandits} introduces the Exp4 algorithm for nonstochastic contextual bandits when both the action space and the policy space are finite, and policies are maps from contexts to distributions over actions. Moreover, rather than observing the current context, the learner sees the output of each policy for that context. In the contextual bandit model of \citet{maillard2011adaptive}, context space and action space are finite, and the learner observes the current context while competing against the best policy among all functions mapping contexts to actions. Finally, a nonparametric bandit setting related to ours was studied by \citet{sliv14}. We refer the reader to the discussion after Theorem~\ref{th:flat-lip} for connections with our results.

Chaining \citep{dudley1967sizes} is a powerful technique to obtain tail bounds on the suprema of stochastic processes. In nonparametric online learning with full information feedback, chaining was used constructively by \citet{cesa1999prediction} to design an algorithm for linear losses, and nonconstructively by \citet{rakhlin2015online} to derive minimax rates for Lipschitz losses. Other notable examples of chaining are the stochastic bandit algorithms of \citet{contal2015optimization} and \citet{contal2016stochastic}. The constructive algorithmic chaining technique developed in this work is inspired by the nonparametric analysis of the full information setting of \citet{GaillardGerchinovitz2015}. However, their multi-variable EG algorithm heavily relies on convexity of losses and requires access to loss gradients. In order to cope with nonconvex losses and lack of gradient information, we develop a novel chaining approach based on a tree of hierarchical coverings of the policy class, where each internal tree node hosts a bandit algorithm.

In our nonstochastic online setting, chaining yields improved rates when the regret is decomposed into a sum of local regrets, each one scaling with the range of the local losses. However, deriving regret bounds that scale with the effective range of the losses is not always possible, as shown by \citet{Gerchinovitz2016} in the nonstochastic $K$-armed bandit setting. This result suggests that chaining might not be useful in online nonparametric learning when the feedback is bandit. However, as we show in this paper, algorithmic chaining does help improving the regret when the feedback is one-sided full information or full information. In full information, chaining-based algorithms deliver regret bounds that match (up to log factors) the nonconstructive bounds of~\citep{rakhlin2015online}.

In a different but interesting research thread on contextual bandits, the learner is confronted with the best within a finite (but large) class of policies over finitely-many actions, and is assumed to have access to this policy class through an optimization oracle for the offline full information problem. Relevant references include \citep{agarwaletal14,RK16,sks16}. The main concern is to devise (oracle-based) algorithms with small regret and requiring as few calls to the optimization oracle as possible.


\section{Warmup: Nonparametric Bandits}
\label{sec:bandits}
%
As a simple warmup exercise, we prove a known result ---see e.g., \citep{sliv14}. Namely, a regret bound for contextual bandits with Lipschitz policies and Lipschitz losses. ContextualExp3 (Algorithm~\ref{alg:ContextualExp3}) is a bandit version of the algorithm by~\citet{HaMe-07colt-OnlineLearningPriorKnowledge} and maintains a set of balls of fixed radius $\ve$ in the context space, where each ball hosts an instance of the Exp3 algorithm of \citet{AuCBFrSc-02-NonStochasticBandits}.\footnote{
Instead of Exp3 we could use INF \citep{audibert2010regret}, which enjoys a minimax optimal regret bound up to constant factors. This would avoid a polylog factor in $T$ in the bound. Since we do not optimize for polylog factors anyway, we opted for the better known algorithm.
}
At each round $t$, if a new incoming context $x_t \in \scX$ is not contained in any existing ball, then a new ball centered at $x_t$ is created, and a fresh instance of Exp3 is allocated to handle $x_t$. Otherwise, the Exp3 instance associated with the closest context so far w.r.t. $\rho_{\scX}$ is used to handle $x_t$. Each allocated Exp3 instance operates on the discretized action space $\cY_{\ve}$ whose size $K_{\ve}$ is at most $C_{\cY}\,\ve^{-p}$.
\begin{algorithm2e}[!ht]
{\bfseries Input:} Ball radius $\ve > 0$, $\ve$-covering $\cY_{\ve}$ of $\cY$ such that $|\cY_{\ve}| \le C_{\cY}\,\ve^{-p}$.\\
\For{$t=1,2,\dots$}{
\begin{enumerate}[topsep=0pt,parsep=0pt,itemsep=0pt]
	\item Get context $x_t \in \scX$;
	\item If $x_t$ does not belong to any existing ball, then create a new ball of radius $\ve$ centered on $x_t$, and allocate a fresh instance of Exp3;
	\item Let the active Exp3 instance be the instance allocated to the existing ball whose center $x_s$ is closest to $x_t$;
	\item Draw an action $\yhat_t$ using the active Exp3 instance;
	\item Get $\ell_t\big(\yhat_t\big)$ and use it to update the active Exp3 instance. 
\end{enumerate}}
\caption{ContextualExp3 (for bandit feedback)}
\label{alg:ContextualExp3}
\end{algorithm2e}
\ifarxiv
For completeness, the proof of the following theorem is provided in Appendix~\ref{as:folklore}.
\else
The proof of the following theorem is provided in \citep{arxiv}.
\fi
\begin{theorem}\label{th:flat-lip}
Fix any any sequence $(x_1,\loss_1),(x_2,\loss_2),\dots$ of contexts $x_t\in\cX$ and $1$-Lipschitz loss functions $\ell_t : \cY \to [0,1]$. If ContextualExp3 is run in the bandit feedback model with parameter\footnote
{
Here and throughout, $T$ is assumed to be large enough so as to ensure $\epsilon \leq 1$.
} 
$\ve = (\ln T)^{\frac{2}{p+d+2}}\,T^{-\frac{1}{p+d+2}}$, then its regret $\Reg_T(\cF)$ with respect to the set $\cF$ of $1$-Lipschitz functions $f : \cX\to\cY$ satisfies
$
	\Reg_T(\cF)
=
	{\tilde \cO}\big(T^\frac{p+d+1}{p+d+2}\big)
$, where the ${\tilde \cO}$ notation hides factors polynomial in $C_{\cX}$ and $C_{\cY}$, and $\ln T$ factors.
\end{theorem}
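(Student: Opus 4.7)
My plan is to prove the bound by a standard discretize-and-aggregate argument: introduce a discrete surrogate competitor, control the approximation error via Lipschitzness, and then invoke the per-ball Exp3 guarantee together with a Jensen-type aggregation across balls.

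First, I would fix any $f^* \in \cF$ attaining (up to arbitrary precision) the infimum in the definition of $\Reg_T(\cF)$. Let $N$ denote the (random) number of balls created by the algorithm and, for each ball $b$, let $x_{s_b}$ be its center, $T_b \subseteq \{1,\dots,T\}$ the rounds assigned to ball $b$, and $T_b^{\star} = |T_b|$. Because the algorithm creates a new ball exactly when $x_t$ lies at distance $>\ve$ from every existing center, the centers form an $\ve$-packing of $\cX$, so a standard packing-vs-covering argument gives $N \leq C_\cX'\,\ve^{-d}$ for some constant depending only on $C_\cX$ and $d$. Next I would define a discrete competitor by picking, for each ball $b$, a point $y_b^\star \in \cY_\ve$ closest to $f^*(x_{s_b})$. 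For any $t \in T_b$, two triangle inequalities combined with the $1$-Lipschitzness of $\ell_t$ and of $f^*$ give
\[
\bigl|\ell_t(f^*(x_t)) - \ell_t(y_b^\star)\bigr|
\;\leq\; \rho_{\cY}\bigl(f^*(x_t),f^*(x_{s_b})\bigr) + \rho_{\cY}\bigl(f^*(x_{s_b}),y_b^\star\bigr)
\;\leq\; \rho_{\cX}(x_t,x_{s_b}) + \ve
\;\leq\; 2\ve,
\]
so the overall approximation cost incurred by replacing $f^*$ with the piecewise-constant surrogate $t \mapsto y_b^\star$ on $T_b$ is at most $2\ve T$.

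Second, I would invoke the Exp3 regret bound \emph{per ball}: the instance of Exp3 running in ball $b$ over $K_\ve$ arms satisfies, for the fixed comparator $y_b^\star \in \cY_\ve$,
\[
\E\!\left[\sum_{t\in T_b}\ell_t(\yhat_t)\right]
\;-\;\sum_{t\in T_b}\ell_t(y_b^\star)
\;\leq\; C\,\sqrt{T_b^\star\,K_\ve\,\ln K_\ve},
\]
where $C$ is an absolute constant. Summing over balls, using the discretization bound above, then applying Cauchy--Schwarz (equivalently concavity of $\sqrt{\cdot}$) in the form $\sum_b \sqrt{T_b^\star}\leq \sqrt{N\sum_b T_b^\star} = \sqrt{NT}$, I obtain
\[
\Reg_T(\cF) \;\leq\; 2\ve T + C\,\sqrt{N\,T\,K_\ve\,\ln K_\ve}
\;\leq\; 2\ve T + C'\,\sqrt{T\,\ve^{-(p+d)}\,\ln(1/\ve)},
\]
with $C'$ depending polynomially on $C_\cX,C_\cY$.

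Finally, balancing the two terms by setting $\ve \asymp T^{-1/(p+d+2)}$ (up to a $(\ln T)^{2/(p+d+2)}$ correction that absorbs the $\ln(1/\ve)$ factor, as in the statement) yields both terms of order $\tilde{\cO}(T^{(p+d+1)/(p+d+2)})$, which is the claimed bound.

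The only genuinely delicate step is the per-ball sum: Exp3's guarantees are in expectation conditional on its sequence of losses, while the ball assignments $T_b$ are determined by the (oblivious) context stream and hence are nonrandom, so the expectation splits cleanly across balls. The remaining work is bookkeeping of constants in the packing bound and in the Exp3 regret, which I would keep inside the $\tilde{\cO}$ notation.
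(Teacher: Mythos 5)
Your proposal is correct and follows essentially the same discretize-and-aggregate strategy as the paper: balls with a fresh Exp3 per ball, a piecewise-constant surrogate competitor, a $2\ve T$ approximation charge from double Lipschitzness, and a bound $N\lesssim\ve^{-d}$ from the packing argument. The only substantive difference is the aggregation step: you tune each Exp3 instance to its own (unknown) horizon $T_b^\star$ to get a per-ball bound $\sqrt{T_b^\star K_\ve \ln K_\ve}$ and then aggregate by Cauchy--Schwarz, whereas the paper runs every instance with a single shared learning rate $\eta$, sums the linearized per-ball bounds $\frac{\ln K_\ve}{\eta}+\frac{\eta}{2}T_b^\star K_\ve$ directly (which telescopes in $T_b^\star$ with no Cauchy--Schwarz needed), and only then picks $\eta\asymp\sqrt{N_T\ln K_\ve/(TK_\ve)}$ once and for all. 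Both give the identical final rate, but the paper's version is directly implementable: $N_T$ is deterministically bounded and $T$ is known, so $\eta$ can be fixed up front; your per-ball $\sqrt{T_b^\star}$ bound implicitly requires each Exp3 instance to be horizon-adaptive (e.g.\ via a doubling trick or time-varying $\eta_t$), a detail you should make explicit, though it poses no real obstruction. Your observation that $T_b$ is nonrandom under an oblivious adversary, so expectations split cleanly across balls, matches the paper's remark.
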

%
%
%
%
A lower bound matching up to log factors the upper bound of Theorem~\ref{th:flat-lip} is contained in \citep{sliv14} ---see also \citep{lpp10} for earlier results in the same setting. However, our setting and his are subtly different: the adversary of \citet{sliv14} uses more general Lipschitz losses which, translated into our context, imply that the Lipschitz assumption is required to hold only for the composite function $\ell_t(f(\cdot))$, rather than the two functions $\ell_t$ and $f$ separately. Hence, being the adversary less constrained (and the comparison class wider), the lower bound contained in \citep{sliv14} does not seem to apply to our setting.
%
%
%

While we are unaware of a lower bound matching the upper bound in Theorem~\ref{th:flat-lip} when $\cF$ is the class of (global) Lipschitz functions and $d \geq 1$, in the noncontextual case ($d=0$), the lower bound $\Omega\big(T^{(p+1)/(p+2)}\big)$ proven by \citet{bubeck2011x,ksu08} shows that improvements on the dependence on $p$ are generally impossible. Yet,
%
%
%
%
the dependence on $p$ in the bound of Theorem~\ref{th:flat-lip} can be greatly improved in the special case when the Lipschitz losses are also convex. Assume $\cY$ is a convex and compact subset of $\R^p$. Then we use the same approach as in Theorem~\ref{th:flat-lip}, where the Exp3 algorithm hosted at each ball is replaced by an instance of the algorithm by~\citet{bubeck2016kernel}, run on the non-discretized action space $\cY$. The regret of the algorithm that replaces Exp3 is bounded by $\mathrm{poly}(p,\ln T)\sqrt{T}$. This immediately gives the following corollary.
\begin{corollary}
\label{cor:convex}
Fix any any sequence $(x_1,\loss_1),(x_2,\loss_2),\dots$ of contexts $x_t\in\cX$ and convex loss functions $\ell_t : \cY \to [0,1]$, where $\cY$ is a convex and compact subset of $\R^p$. Then there exists an algorithm for the bandit feedback model whose regret with respect to the set $\cF$ of $1$-Lipschitz functions satisfies $\Reg_T(\cF) \leq \mathrm{poly}(p,\ln T)T^{(d+1)/(d+2)}$, where $\mathrm{poly}$ is a polynomial function of its arguments.
\end{corollary}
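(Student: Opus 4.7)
The plan is to reuse the ContextualExp3 template of Algorithm~\ref{alg:ContextualExp3} verbatim, except that inside each ball in the context space we host an instance of the kernel-based bandit convex optimization algorithm of \citet{bubeck2016kernel} operating directly on the continuous set $\cY$, rather than an Exp3 instance on a discrete $\ve$-covering $\cY_\ve$. This avoids the discretization of the action space (which was responsible for the $\ve^{-p}$ factor in the proof of Theorem~\ref{th:flat-lip}) and replaces it by a base regret of order $\mathrm{poly}(p,\ln T)\sqrt{T_i}$ in each ball that receives $T_i$ contexts. Since we do not know $T_i$ in advance, a standard doubling trick on the base algorithm will be enough to retain this rate.

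Given a ball radius $\ve>0$, write $\mathcal{B}_1,\dots,\mathcal{B}_N$ for the balls created by the greedy covering procedure, so that $N \leq C_\cX \ve^{-d}$, and let $T_i$ be the number of rounds on which the active instance is the one attached to $\mathcal{B}_i$, with center $x_{s_i}$. For any $1$-Lipschitz competitor $f\in\cF$, I will decompose the regret on ball $i$ by comparing the base learner to the fixed action $f(x_{s_i})\in\cY$:
\[
\sum_{t \,:\, \hat{y}_t \in \mathcal{B}_i} \ell_t(\hat y_t) - \ell_t(f(x_t))
\;=\; \underbrace{\sum_{t} \ell_t(\hat y_t) - \ell_t\bigl(f(x_{s_i})\bigr)}_{\leq\; R^{\mathrm{BCO}}_i} \;+\; \underbrace{\sum_{t} \ell_t\bigl(f(x_{s_i})\bigr) - \ell_t\bigl(f(x_t)\bigr)}_{\leq \,2\ve T_i} \,,
\]
where the first term is at most the regret $R^{\mathrm{BCO}}_i \leq \mathrm{poly}(p,\ln T)\sqrt{T_i}$ guaranteed by \citet{bubeck2016kernel} against any fixed action (in particular $f(x_{s_i})$), and the second term is controlled by the chain of $1$-Lipschitz inequalities $|\ell_t(f(x_{s_i}))-\ell_t(f(x_t))|\leq \rho_\cY(f(x_{s_i}),f(x_t))\leq \rho_\cX(x_{s_i},x_t)\leq 2\ve$, since both $x_t$ and $x_{s_i}$ lie in $\mathcal{B}_i$.

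Summing across balls, using $\sum_i T_i = T$ and applying Cauchy--Schwarz to $\sum_i \sqrt{T_i}\leq \sqrt{NT}\leq \sqrt{C_\cX}\,\ve^{-d/2}\sqrt{T}$, I obtain
\[
\Reg_T(\cF) \;\leq\; \mathrm{poly}(p,\ln T)\,\ve^{-d/2}\sqrt{T} \;+\; 2\ve T \,.
\]
Optimizing the radius by balancing the two terms yields $\ve = T^{-1/(d+2)}$ (up to $\mathrm{poly}(p,\ln T)$ factors), which produces the claimed $\mathrm{poly}(p,\ln T)\,T^{(d+1)/(d+2)}$ regret.

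I do not anticipate a real mathematical obstacle; the argument is essentially plug-and-play once one replaces Exp3 by a Lipschitz-free bandit convex oracle. The one point that deserves some care is ensuring that the Bubeck--Eldan base regret degrades gracefully when the number $T_i$ of rounds in each ball is unknown a~priori and potentially much smaller than $T$: this is handled by wrapping the base learner in a doubling trick on its own local round counter, which preserves the $\mathrm{poly}(p,\ln T)\sqrt{T_i}$ rate (up to constants absorbed in the polylog factor) and does not affect the final tuning of $\ve$.
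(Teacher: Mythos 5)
Your proposal follows exactly the route the paper (implicitly) takes: plug the kernel-based BCO algorithm of Bubeck--Eldan--Lee into the ContextualExp3 template in place of Exp3, keep the per-ball decomposition and the Lipschitz approximation error, sum with Cauchy--Schwarz, and tune $\ve$. The paper compresses this into a single sentence (``the same approach as in Theorem~\ref{th:flat-lip}, \ldots\ immediately gives the corollary''), so you are not diverging from it, merely filling in the omitted bookkeeping; your remark about wrapping each local instance in a doubling trick to cope with the unknown per-ball round count $T_i$ is a legitimate detail that the paper glosses over, and the $2\ve$ (rather than $\ve$, since $x_{s_i}$ is the ball's center so $\rho_\cX(x_{s_i},x_t)\le\ve$) is only a harmless constant slack.
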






\section{One-Sided Full Information Feedback}
\label{sec:sideInformation}

In this section we show that better nonparametric rates can be achieved in the one-sided full information setting, where the feedback is larger than the standard bandit feedback but smaller than the full information feedback. More precisely, we consider the same setting as in Section~\ref{sec:bandits} in the special case when the action space $\cY$ is $[0,1]$. We further assume that, after each play $\yhat_t \in \cY$, the learner can compute the loss $\ell_t(y)$ of any number of actions $y \geq \yhat_t$. This in contrast to observing only $\loss_t\big(\hat{y}_t\big)$, as in the standard bandit setting. We start with an important special case: maximizing the seller's revenue in a sequence of repeated second-price auctions. In Section~\ref{sec:chaining}, we use the chaining technique to design a general algorithm for arbitrary Lipschitz losses in the one-sided full information model. An efficient variant of this algorithm is obtained using a more involved construction in Section~\ref{sec:efficient-chaining}.

\subsection{Nonparametric second-price auctions}
\label{sec:auctions}
%
%
In online advertising, publishers sell their online ad space to advertisers through second-price auctions managed by ad exchanges. For each impression (ad display) created on the publisher's website, the ad exchange runs an auction on the fly. Empirical evidence~\citep{OS11} shows that an informed choice of the seller's reserve price, disqualifying any bid below it, can indeed have a significant impact on the revenue of the seller. Regret minimization in second-price auctions was studied by~\citet{CeGeMa-15-ReservePriceOptimization} in a non-contextual setting. They showed that, when buyers draw their bids i.i.d.\ from the same unknown distribution on $[0,1]$, there exists an efficient strategy for setting reserve prices such that the seller's regret is bounded by~$\tilde{\cO}\big(\sqrt{T}\big)$ with high probability with respect to the bid distribution. Here we extend those results to a nonstochastic and nonparametric contextual setting with nonstochastic bids, and prove a regret bound of order $T^{(d+1)/(d+2)}$ where $d$ is the context space dimension. This improves on the bound $T^{(d+2)/(d+3)}$ of Theorem~\ref{th:flat-lip} when $p=1$. 
\ifarxiv
As a byproduct, in Theorem~\ref{alg:exp3floor} (Appendix~\ref{app:exp3floor}) we prove the first $\tilde{\cO}\big(\sqrt{T}\big)$ regret bound for the seller in nonstochastic and noncontextual second-price auctions. 
\else
As a byproduct, taking $d=0$, this proves the first $\tilde{\cO}\big(\sqrt{T}\big)$ regret bound for the seller in nonstochastic and noncontextual second-price auctions ---see also \citep[Theorem~3]{arxiv}.
\fi
Unlike~\citep{CeGeMa-15-ReservePriceOptimization}, where the feedback after each auction was ``strictly bandit'' (i.e., just the seller's revenue), here we assume the seller is also observing the highest bid together with the revenue. This richer feedback, which is key to proving our results, is made available by some ad exchanges such as AppNexus.

The seller's revenue in a second-price auction is computed as follows: if the reserve price $\hat{y}$ is not larger than the second-highest bid $b(2)$, then the item is sold to the highest bidder and the seller's revenue is $b(2)$. If $\hat{y}$ is between $b(2)$ and the highest bid $b(1)$, then the item is sold to the highest bidder but the seller's revenue is the reserve price. Finally, if $\hat{y}$ is bigger than $b(1)$, then the item is not sold and the seller's revenue is zero. Formally, the seller's revenue is $g\big(\hat{y},b(1),b(2)\big) = \max\big\{\hat{y},b(2)\big\}\indicator{\hat{y} \le b(1)}$. Note that the revenue only depends on the reserve price $\hat{y}$ and on the two highest bids $b(1) \ge b(2)$, which ---by assumption--- belong all to the unit interval $[0,1]$.

In the online contextual version of the problem, unknown sequences of contexts $x_1,x_2,\ldots\in\cX$ and bids are fixed beforehand (in the case of online advertising, the context could be public information about the targeted customers). At the beginning of each auction $t=1,2,\dots$, the seller observes context $x_t$ and computes a reserve price $\hat{y}_t \in [0,1]$. Then, bids $b_t(1),b_t(2)$ are collected by the auctioneer, and the seller (which is not the same as the auctioneer) observes his revenue $g_t\big(\hat{y}_t\big)=g\big(\hat{y}_t,b_t(1),b_t(2)\big)$, together with the highest bid $b_t(1)$. Crucially, knowing $g_t(\hat y_t)$ and $b_t(1)$ allows to compute $g_t(y)$ for all $y \geq \hat y_t$. For technical reasons, we use losses $\loss_t\big(\hat{y}_t\big) = 1 - g_t\big(\hat{y}_t\big)$ instead of revenues, see Figure~\ref{f:second_price_loss} for a pictorial representation.
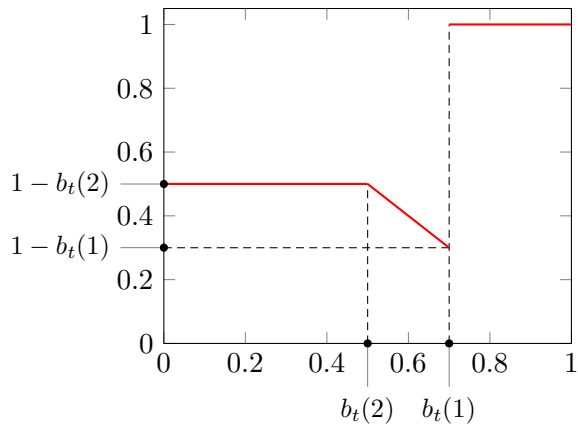
\begin{figure}[ht!]
\begin{center}
\tikzset{
every pin/.style={font=\small},
small dot/.style={fill=black,circle,scale=0.3}
}
\begin{tikzpicture}
\begin{axis}[
clip=false,
xmin=0, xmax=1,
ymin=0, ymax=1.05,
yticklabel pos=left,
ytick={0,0.2,0.4,0.6,0.8,1}
]
\addplot[no markers,densely dashed] coordinates { (0.5,0) (0.5,0.5)};
\addplot[no markers,densely dashed] coordinates { (0.7,0) (0.7,1.0) };
\addplot[no markers,densely dashed] coordinates { (0,0.3) (0.7,0.3) };
\addplot[domain=0:0.5,thick,color=red] {0.5};
\addplot[domain=0.5:0.7,thick,color=red] {1-x};
\addplot[domain=0.7:1,thick,color=red] {1};
\node[small dot,pin=-90:{$b_t(2)$}] at (xticklabel* cs:0.5) {};
\node[small dot,pin=-90:{$b_t(1)$}] at (xticklabel* cs:0.7) {};
\node[small dot,pin=180:{$1-b_t(2)$}] at (yticklabel* cs:0.476) {};
\node[small dot,pin=180:{$1-b_t(1)$}] at (yticklabel* cs:0.286) {};
\end{axis}
\end{tikzpicture}
\vspace*{-5mm}
\end{center}
\caption{\label{f:second_price_loss}
The loss function $\ell_t(\hat y_t) = 1-\max\{\hat y_t,b_t(2)\}\indicator{\hat{y}_t \le b_t(1)}$ when $b_t(1) = 0.7$ and $b_t(2) = 0.5$.
}
\end{figure}

Remarkably, the loss functions $\ell_t$ are not Lipschitz and not even continuous, and so this problem falls outside the scope of standard results for contextual bandits. Instead, the losses only satisfy the semi-Lipschitz property $\ell_t(y+\delta) \ge \ell_t(y) - \delta$ for all $0 \leq y \leq y+\delta \leq 1$. We now state a bound on the regret $\Reg_T(\cF)$ with respect to any class $\cF$ of Lipschitz functions $f : \cX \to [0,1]$. The algorithm that achieves this bound is ContextualRTB (where RTB stands for Real Time Bidding ---see Algorithm
\ifarxiv
\ref{alg:exp3floor} in Appendix~\ref{app:exp3floor}),
\else
Exp3-RTB in \citep{arxiv}),
\fi a variant of ContextualExp3 (Algorithm \ref{alg:ContextualExp3}), where each ball hosts an instance of Exp3-RTB, instead of Exp3. The proof is given in
\ifarxiv 
Appendix~\ref{as:proofsecondpriceflat}.
\else
the full version of the paper, \citep{arxiv}.
\fi
%
%
\begin{theorem}\label{t:secondpriceflat}
Fix any sequence of contexts $x_t \in \cX$ and bid pairs $0 \leq b_t(2) \leq b_t(1) \leq 1$ for $t \geq 1$.
If ContextualRTB is run with parameter $\smash{\epsilon = T^{-\frac{1}{d+2}}}$ and Exp3-RTB is tuned with parameter $\gamma=\ve$, then the regret with respect to any class of $1$-Lipschitz functions $\smash{f : \cX \to [0,1]}$ satisfies
$
	\smash{\Reg_T(\cF) = {\tilde \cO}\big( T^{\frac{d+1}{d+2}}\big)}
$,
where $d$ is the dimension of $\cX$ and the ${\tilde \cO}$ notation hides constants and $\ln T$ factors.
\end{theorem}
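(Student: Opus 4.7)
The plan is to decompose $\Reg_T(\cF)$ into a context-space discretization bias plus a sum of per-ball regrets of the Exp3-RTB instances. Let $f^{\star}$ achieve $\inf_{f\in\cF}\sum_t \ell_t(f(x_t))$. The principal subtlety, compared with the Lipschitz case of Theorem~\ref{th:flat-lip}, is that $\ell_t$ is only semi-Lipschitz: $\ell_t(y+\delta)\ge \ell_t(y)-\delta$, so replacing $f^{\star}(x_t)$ by a nearby grid point is safe only if the grid point lies \emph{below} $f^{\star}(x_t)$. This constraint dictates the choice of per-ball comparator and is where the one-sided feedback becomes essential.

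First I would analyze the context-space discretization. Denote the balls produced by ContextualRTB by $B_1,\dots,B_{N_{\epsilon}}$; the creation rule of Algorithm~\ref{alg:ContextualExp3} ensures that any two contexts in the same ball are within $\rho_{\cX}$-distance $2\epsilon$, and that the ball centers form an $\epsilon$-packing of $\cX$, so $N_{\epsilon} = \tilde\cO(\epsilon^{-d})$. For each $i$ I set $m_i = \inf_{t\in B_i} f^{\star}(x_t)$ and take $y_i^{\star}$ to be the largest element of $\cY_{\epsilon}$ with $y_i^{\star}\le m_i$. The $1$-Lipschitzness of $f^{\star}\in\cF$ together with the definition of $\cY_{\epsilon}$ then give $0 \le f^{\star}(x_t) - y_i^{\star} \le 3\epsilon$ for every $t\in B_i$, and the semi-Lipschitz hypothesis applied with $y = y_i^{\star}$ and $\delta = f^{\star}(x_t) - y_i^{\star}\ge 0$ yields $\ell_t(y_i^{\star})\le \ell_t(f^{\star}(x_t)) + 3\epsilon$. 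Summing over $t$,
\[
\Reg_T(\cF) \;\le\; \sum_{i=1}^{N_{\epsilon}} \Bigl(\sum_{t\in B_i} \E\bigl[\ell_t(\hat y_t)\bigr] - \sum_{t\in B_i} \ell_t(y_i^{\star})\Bigr) + 3\epsilon T\,.
\]

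Each parenthesized term is exactly the regret of the $i$-th Exp3-RTB instance against the fixed action $y_i^{\star}\in\cY_{\epsilon}$. The next step is to plug in the Exp3-RTB guarantee (Theorem~\ref{alg:exp3floor}): with $\gamma = \epsilon$, the one-sided feedback (which reveals $\ell_t(y)$ for every $y\ge\hat y_t$) yields $\tilde\cO(\sqrt{T_i})$ regret with only \emph{logarithmic} dependence on $K_{\epsilon}$. A Cauchy--Schwarz step over the $N_{\epsilon}$ balls then gives $\sum_i \sqrt{T_i}\le \sqrt{N_{\epsilon} T} = \tilde\cO\bigl(\sqrt{T/\epsilon^{d}}\bigr)$, hence $\Reg_T(\cF)\le \tilde\cO\bigl(\sqrt{T/\epsilon^{d}} + \epsilon T\bigr)$. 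Balancing the two terms at $\epsilon = T^{-1/(d+2)}$ delivers the claimed $\tilde\cO\bigl(T^{(d+1)/(d+2)}\bigr)$ rate.

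The hard part will be the Exp3-RTB guarantee itself: standard Exp3 regret carries a $\sqrt{K_{\epsilon}}$ factor, and naively plugging that into the decomposition above would only reproduce the $T^{(d+2)/(d+3)}$ rate of Theorem~\ref{th:flat-lip}. For the $T^{(d+1)/(d+2)}$ improvement, Exp3-RTB must exploit the fact that a single play $\hat y_t$ exposes the entire loss profile above $\hat y_t$ in order to attain a full-information-style regret (polylogarithmic, rather than polynomial, in $K_{\epsilon}$). Conditional on that bound --- which is what Theorem~\ref{alg:exp3floor} provides --- the contextual reduction sketched above is essentially routine.
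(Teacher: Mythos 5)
Your overall decomposition --- cover the context space with balls, choose a per-ball comparator lying \emph{below} $f^{\star}$, invoke the semi-Lipschitz inequality, and add the Exp3-RTB per-ball regrets --- matches the paper's proof in Appendix~\ref{as:proofsecondpriceflat}. (The paper uses the continuous comparator $\ymin_s=\min_{t\in T_s}f(x_t)$ rather than projecting to the grid, getting a $2\epsilon$ rather than $3\epsilon$ bias, but that is immaterial; Theorem~\ref{thm:exp3floor} already bounds the regret of Exp3-RTB against any $y\in[0,1]$, so the extra grid-projection step is harmless and not needed.)

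There is, however, a genuine misstatement in the middle of your argument. You claim that Exp3-RTB tuned with the \emph{fixed} exploration parameter $\gamma=\epsilon$ yields regret $\tilde\cO\bigl(\sqrt{T_i}\bigr)$ on the $T_i$ rounds handled by ball $i$, and then apply Cauchy--Schwarz over balls. That bound does not hold under the stated hypotheses. Theorem~\ref{thm:exp3floor} gives regret of the form $\tilde\cO\bigl(\gamma T_i + 1/\gamma\bigr)$ for a horizon-$T_i$ run with parameter $\gamma$; this is $\tilde\cO\bigl(\sqrt{T_i}\bigr)$ \emph{only} when $\gamma\approx T_i^{-1/2}$, i.e.\ tuned per ball, which is not what the theorem statement (or your setup) specifies. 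With $\gamma=\epsilon=T^{-1/(d+2)}$ held fixed across balls, the per-ball regret is $\tilde\cO\bigl(\epsilon T_i + 1/\epsilon\bigr)$, which by AM--GM is at least $2\sqrt{T_i}$ and can be much larger when $T_i$ is small. The repair is in fact simpler than what you wrote: no Cauchy--Schwarz is needed. Just sum the correct per-ball bounds directly,
\begin{equation*}
\sum_{i=1}^{N_\epsilon}\tilde\cO\!\left(\epsilon T_i + \frac{1}{\epsilon}\right)
= \tilde\cO\!\left(\epsilon T + \frac{N_\epsilon}{\epsilon}\right)
= \tilde\cO\!\left(\epsilon T + \epsilon^{-(d+1)}\right),
\end{equation*}
add the $\tilde\cO(\epsilon T)$ discretization bias, and balance at $\epsilon = T^{-1/(d+2)}$. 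You land on the same $\tilde\cO\bigl(T^{(d+1)/(d+2)}\bigr)$ --- the two expressions happen to coincide at the balance point --- but the reasoning as written detours through a per-ball bound that Exp3-RTB does not actually give you with a globally fixed $\gamma$.
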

ContextualRTB and ContextualExp3 of Section~\ref{sec:bandits} can be modified so to avoid knowing the horizon $T$ and so that the dimension $d$ of the context space is replaced in the bound by the (unknown, and possibly much smaller) dimension of the set of contexts actually occurring in the sequence chosen by the adversary. This modification involves using a time-varying radius $\ve$ and a doubling trick to check when the current guess for the dimension is violated by the current number of balls. The omitted proof of this statement goes along the lines of the proof in \cite[Theorem~1]{RoOrCB-15-NonparametricClassification}.


\subsection{Chaining the bandits}
\label{sec:chaining}
We now show that whenever the richer feedback structure ---i.e., the learner can compute the loss $\ell_t(y)$ of any number of actions $y \geq \yhat_t$--- is combined with Lipschitz losses (rather than just semi-Lipschitz), then an improved regret bound $T^{d/(d+1)}$ can be derived. The key technical idea enabling this improvement is the application of the chaining technique to a hierarchical covering of the policy space (as opposed to the flat covering of the context space used in both Section \ref{sec:bandits} and Section~\ref{sec:auctions}). We start with a computationally inefficient algorithm that works for arbitrary policy classes $\cF$ (not only Lipschitz) and is easier to understand. In Section~\ref{sec:efficient-chaining} we derive an efficient variant for $\cF$ that are Lipschitz. In this case we obtain even better regret bounds via a penalization trick.

A way of understanding the chaining approach is to view the hierarchical covering of the policy class $\cF$ as a tree whose nodes are functions in $\cF$, and where the nodes at each depth $m$ define a $(2^{-m})$-covering of $\cF$. The tree represents any function $f^* \in \cF$ (e.g., the function with the smallest cumulative loss) by a unique path (or chain) $f_0 \to f_1 \to\dots\to f_M \to f^*$, where $f_0$ is the root and $f_M$ is the function best approximating $f^*$ in the cover at the largest available depth $M$. By relying on this representation, we control the regret against any function in $\cF$ by running an instance of an online bandit algorithm $A$ on each node of the tree. The instance $A_f$ at node $f$ uses the predictions of the instances running on the nodes that are children of $f$ as expert advice. The action drawn by instance $A_0$ running on the root node is the output of the tree. For any given sequence of pairs $(x_t,\loss_t)$ of contexts and losses, the regret against $f^*$ with path $f_0 \to f_1 \to\dots\to f_M \to f^*$ can then be written (ignoring some constants) as
\[	\sum_{t=1}^T \Big( \E\left[\loss_t\big(A_0(x_t)\big)\right] - \loss_t\big(f^*(x_t)\big) \Big) \leq \sum_{m=0}^{M-1}\E\!\left[ \sum_{t=1}^T \Big( \loss_t\big(A_m(x_t)\big) - \loss_t\big(A_{m+1}(x_t)\big) \Big) \right] + 2^{-M}T
\]
where $A_m$ is the instance running on node $f_m$ for $m = 0, \ldots, M-1$ and $A_M \equiv f_M$. The last term $2^{-M}T$ accounts for the cost of approximating $f^*$ with the closest function $f_M$ in a $(2^{-M})$-cover of $\cF$ under suitable Lipschitz assumptions. The outer sum in the right-hand side of the above display can be viewed as a sum of $M$ regrets, where the $m$-th term in the sum is the regret of $A_m$ against the instances running on the children of the node hosting $A_m$. Since we face an expert learning problem in a partial information setting, the Exp4 algorithm of~\citet{AuCBFrSc-02-NonStochasticBandits} is a natural choice for the learner $A$. However, a first issue to consider is that we are using $A_0$ to draw actions in the bandit problem, and so the other Exp4 instances receive loss estimates that are based on the distribution used by $A_0$ rather than being based on their own distributions. A second issue is that our regret decomposition crucially relies on the fact that each instance $A_m$ only competes (in the sense of regret) against functions $f$ at the leaves of the subtree rooted at the node where $A_m$ runs. By construction, these functions at the leaves are roughly $(2^{-m})$-close to each other and ---by Lipschitzness--- so are their losses. As a consequence, the regret of $A_m$ should scale with the true loss range $2^{-m}$. Via an appropriate modification of the original Exp4 algorithm, we manage to address both these issues. In particular, in order to make the regret dependent on the loss range, we heavily rely on the one-sided full information model assumed in this section. Finally, the hierarchical covering requires that losses be Lipschitz, rather than just semi-Lipschitz as in the application of Subsection~\ref{sec:auctions}, which uses a simpler flat covering.

Fix any class $\cF$ of functions $f : \cX \to [0,1]$. Let us introduce the sup norm
\begin{equation}
\label{eq:supnorm}
	\norm{f-g}_\infty = \sup_{x \in \cX} \big|f(x)-g(x)\big|~.
\end{equation}
We denote by $\cN_\infty\big(\cF,\epsilon\big)$ the cardinality of the smallest $\epsilon$-cover of $\cF$ w.r.t. the sup norm.
Throughout this section, our only assumption on $\cF$ is that $\cN_\infty\big(\cF,\epsilon\big) < +\infty$ for all $\epsilon > 0$ (this property is known as \emph{total boundedness}). In Section~\ref{sec:efficient-chaining} we will focus on the special case $\cF = \big\{f:[0,1]^d \to [0,1] : \textrm{$f$ is $1$-Lipschitz}\bigr\}$ to derive an efficient version of our algorithm.

We now define a tree $\cT_{\cF}$ of depth $M$, whose nodes are labeled by functions in the class $\cF$, so that functions corresponding to nodes with a close common ancestor are close to one another according to the sup norm~(\ref{eq:supnorm}). For all $m=0,1,\ldots,M$, let $\cF_m$ be a $(2^{-m})$-covering of $\cF$ in sup norm with minimal cardinality $N_m = \cN_\infty(\cF,2^{-m})$. Since the diameter of $(\cF,\norm{\cdot}_{\infty})$ is bounded by $1$, we have $N_0 = 1$ and $\cF_0 = \{f_0\}$ for some $f_0 \in \cF$. For each $m=0,1,\dots,M$ and for every $f_v \in \cF_m$ we have a node $v$ in $\cT_{\cF}$ at depth $m$. The parent of a node $w$ at depth $m+1$ is some node $v$ at depth~$m$ such that
\[
	v \in \argmin_{v'\,:\,\depth(v')=m} \norm{f_{v'} - f_w}_{\infty} \quad \textrm{(ties broken arbitrarily)}
\]
and we say that $w$ is a child of $v$. Let $\cL$ be the set of all the leaves of $\cT_{\cF}$, $\cL_v$ be the set of all the leaves under $v \in \cT_{\cF}$ (i.e., the leaves of the subtree rooted at $v$), and $\cC_v$ be the set of children of $v \in \cT_{\cF}$.


\begin{figure}[!th]
\begin{center}
\resizebox{\textwidth}{!}{
\begin{tikzpicture}[level/.style={sibling distance=60mm/#1,circle,inner sep=0pt, minimum size=1cm}]
\node [circle,draw,inner sep=0pt, minimum size=1cm,fill=blue!20] (u) {$u$}
  child {node [circle,draw,fill=blue!20] (a) {}
    child {node [circle,draw,fill=blue!20] (b) {}
        edge from parent[draw=none]
    }
    child {node [circle,draw,fill=blue!20] (g) {}
        edge from parent[draw=none]
    }
  }
  child {node [circle,draw,fill=blue!20] (v) {$v$}
    child {node [circle,draw,fill=blue!20] (k) {}
        edge from parent[draw=none]
    }
    child {node [solid,circle,draw,fill=blue!20] (w) {$w$} 
        edge from parent[draw=none]
   }
};
\path (a) -- (v) node [midway] {\dots};
\path (v) -- (w) node [midway] {\rotatebox{135}{\dots}};
\path (v) -- (k) node [midway] {\rotatebox{45}{\dots}};
\path (a) -- (g) node [midway] {\rotatebox{135}{\dots}};
\path (a) -- (b) node [midway] {\rotatebox{45}{\dots}};
\path (b) -- (g) node [midway] {\dots};
\path (g) -- (k) node [midway] {\dots};
\path (k) -- (w) node [midway] {\dots};

\draw (u.south) node[below,color=blue!50]{Exp4};
\draw (a.south) node[below,color=blue!50]{Exp4};
\draw (v.south) node[below,color=blue!50]{Exp4};

\tikzset{
    position label/.style={
       below = 3pt,
       text height = 1.5ex,
       text depth = 1ex
    },
   brace/.style={
     decoration={brace, mirror},
     decorate
   }
}

\draw [brace,decoration={raise=1ex}] (k.south) -- (w.south) node [position label, pos=0.5,yshift=-1ex] {Leaves under $v \ = \ \cL_v$};

 \begin{scope}[every node/.style={right}]
   \path (u  -| w) ++(5mm,0) node {} ++(5mm,0) node {Level $m$ \qquad  $\leadsto 2^{-m}$ covering of $\cF$};
   \path (v  -| w) ++(5mm,0) node {} ++(5mm,0) node {Level $m+1$ \ $\leadsto 2^{-(m+1)}$ covering};
   \path (w -| w) ++(5mm,0) node {} ++(5mm,0) node {Level $M$ ($=$leaves $\cL$) $\leadsto 2^{-M}$ covering};
 \end{scope}

\end{tikzpicture}}
\vspace*{-1cm}
\end{center}
\caption{Hierarchical covering of the function space (used in Algorithm~\ref{alg:HierarchicalExp4}).}
\label{fig:tree}
\end{figure}

Our new bandit algorithm HierExp4 (Algorithm~\ref{alg:HierarchicalExp4} below) is a hierarchical composition of instances of Exp4 on the tree $\cT_{\cF}$ constructed above (see Figure~\ref{fig:tree}). Let $K=2^M$ and $\cK = \{y_1,\dots,y_K\}$, where $y_k = 2^{-M}(k-1)$ for $k=1,\dots,2^M$, be our discretization of the action space $\cY = [0,1]$. At every round $t$, after observing context $x_t \in \cX$, each leaf $v \in \cL$ recommends the best approximation of $f_v(x_t)$ in $\cK$,
$
i_t(v) \in \argmin_{i=1,\dots,K} \big|y_i - f_v(x_t)\big|
$.
Therefore, the leaves $v \in \cL$ correspond to deterministic strategies $t \mapsto i_t(v)$, and we will find it convenient to view a set of leaves $\cL$ as the set of actions played by those leaves at time $t$. Each internal node $v \in \cT_{\cF} \setminus \cL$ runs an instance of Exp4 using the children of $v$ as experts. More precisely, we use a variant of Exp4 
\ifarxiv
(see Algorithm~\ref{alg:exp4} in Appendix~\ref{sec:Exp4range})
\else
(see \cite{arxiv})
\fi which adapts to the effective range of the losses. Let $\automEWA_v$ be the instance of the Exp4 variant run on node $v$. At each time $t$, this instance updates a distribution $q_t(v,\cdot) \in \Delta(|\cC_v|)$ over experts in $\cC_v$ and a distribution $p_t(v,\cdot) \in \Delta(K)$ over actions in $\cK$ defined by $p_t(v,i) = \sum_{w \in \cC_v} q_t(v,w) p_t(w,i)$.

Let $v_0$ be the root of $\cT_{\cF}$. The prediction of HierExp4 at time $t$ is $\hat{y}_t = y_{I_t} \in \cK$, where $I_t$ is drawn according to a mixture of $p_t(v_0,\cdot)$ and a unit mass on the minimal action $y_1 \in \cK$.



For each $v \in \cT_{\cF}\setminus\cL$, let
$\cK_t(v) = \theset{i}{(\exists w \in \cC_v)\; p_t(w,i) > 0}$ and $j_t(v) = \max\cK_t(v)$.
Note that $\hat{\loss}_t(v,i)$ in~\eqref{eq:estimatedloss} has to be explicitly computed only for those actions $i$ such that $i \geq I_t$ and $i \in \cK_t(v)$. This is because $\hat{\loss}_t(v,i)$ is needed for the computation of $\tilde{\ell}_t(v,w)$ only when $p_t(w,i) > 0$. Therefore, whenever $\hat{\loss}_t(v,i)$ has to be computed for some $i$, then $I_t \leq i \leq \max \cK_t(v) = j_t(v)$, so that $\loss_t\big(y_{j_t(v)}\big)$ is observed and $\hat{\loss}_t(v,i)$ is well defined.

\begin{algorithm2e}[!ht]
\SetKwInOut{Input}{Input}
\SetKwInOut{Init}{Initialization}

\Input{Tree $\cT_{\cF}$ with root $v_0$ and leaves $\cL$, exploration parameter $\gamma \in (0,1)$, learning rate sequences $\eta_1(v) \geq \eta_2(v) \geq\cdots > 0$ for $v \in \cT_{\cF} \setminus \cL$.
}
\Init{
Set $q_1(v,\cdot)$ to the uniform distribution in $\Delta(|\cC_v|)$ for every $v \in \cT_{\cF} \setminus \cL$.
} 

\For{$t=1,2,\ldots$}{
\begin{enumerate}[topsep=0pt,parsep=0pt,itemsep=0pt]
	\item Get context $x_t \in \cX$;
	\item Set $p_t(v,i) = \indicator{i=i_t(v)}$ for all $i\in \cK$ and for all $v \in \cL$;
	\item Set $p_t(v,i) = q_t(v,\cdot) \cdot p_t(\cdot,i)$ for all $i\in \cK$ and for all $v \in \cT_{\cF} \setminus \cL$;
	\item Draw $I_t \sim p^*_t$ and play $\hat{y}_t = y_{I_t}$, where $p^*_t(i) = (1-\gamma) p_t(v_0,i) + \gamma\indicator{i = 1}$ for all $i \in \cK$;
	\item Observe $\ell_t(y)$ for all $y \geq y_{I_t}$;
	\item For every $v \in \cT_{\cF} \setminus \cL$ and for every $i \in \cK_t(v)$ compute
	\begin{equation}
		\label{eq:estimatedloss}
		\hat{\ell}_t(v,i) = \frac{\ell_t(y_i) - \ell_t\big(y_{j_t(v)}\big)}{\sum_{k=1}^i p^*_t(k)}\indicator{I_t \leq i} \,,
	\end{equation}
	where $\cK_t(v) = \theset{i}{(\exists w \in \cC_v)\; p_t(w,i) > 0}$ and $j_t(v) = \max\cK_t(v)$.
	\item For each $v \in \cT_{\cF} \setminus \cL$ and for each $w \in \cC_v$ compute the expert loss
	$
		\tilde{\ell}_t(v,w) = p_t(w,\cdot)\cdot\hat{\ell}_t(v,\cdot)
	$ and perform the update
	\begin{equation}
	\label{eq:EWAweights}
		q_{t+1}(v,w) = \frac{\exp\!\left(-\eta_{t+1}(v) \sum_{s=1}^t \tilde{\loss}_s(v,w) \right)}{\sum_{w'\in\cC_v} \exp\!\left(-\eta_{t+1}(v) \sum_{s=1}^t \tilde{\loss}_s(v,w') \right)}
	\end{equation}
\end{enumerate}
}
\caption{
\label{alg:HierarchicalExp4}
HierExp4 (for one-sided full information feedback)
}
\end{algorithm2e}
Next, we show that the regret of HierExp4 is at most of the order of $T^{d/(d+1)}$, which improves on the rate $T^{(d+1)/(d+2)}$ obtained in Section~\ref{sec:auctions} without using chaining. The required proofs are contained in 
\ifarxiv
Appendix~\ref{as:missingproofs}.
\else
the full version of the paper \citep{arxiv}.
\fi
\begin{theorem}
\label{thm:HierarchicalExp4}
Fix any class $\cF$ of functions $f : \cX\to [0,1]$ and any sequence $(x_1,\loss_1),(x_2,\loss_2),\dots$ of contexts $x_t\in\cX$ and $1$-Lipschitz loss functions $\ell_t : [0,1] \to [0,1]$. Assume the HierExp4 (Algorithm~\ref{alg:HierarchicalExp4}) is run with one-sided full information feedback using tree $\cT_{\cF}$ of depth $M = \lfloor \ln_2(1/\gamma) \rfloor$. Moreover, the learning rate $\eta_t(v)$ used at each node $v$ at depth $m=0,\dots,M-1$ is given by
\ifarxiv
~(\ref{eq:adarate}) in Appendix~\ref{sec:Exp4range},
\else
\begin{equation}
\label{eq:adarate}
	\eta_t(v) = \min\left\{\gamma 2^{m-4}, \sqrt{\frac{2\big(\sqrt{2}-1\big)\ln |\cC_v|}{(e-2)\tilde{V}_{t-1}(v)}} \right\} \,,
\end{equation}
\fi 
where ${\tilde V}_{t-1}(v)$ is the cumulative variance of $\tilde{\loss}_s(v,\cdot)$ according to $q_s(v,\cdot)$ up to time $s=t-1$. Then for all $T \geq 1$ the regret satisfies\\[-0.3cm]
\begin{align*}
	\Reg_T(\cF) & \leq
	5 \gamma T + 2^7 \int_{\gamma/2}^{1/2} \left(\sqrt{\frac{T}{\gamma} \ln\cN_\infty(\cF,\varepsilon)} + \frac{1}{\gamma} \Bigl(\ln\cN_\infty(\cF,\varepsilon) + 1 \Bigr) \right) \dd \varepsilon~.
\end{align*}
\end{theorem}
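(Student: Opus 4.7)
The plan is to instantiate the chaining decomposition sketched just before the algorithm. Fix any $f^* \in \cF$, pick a leaf $v_M \in \cL$ with $\norm{f_{v_M} - f^*}_\infty \leq 2^{-M}$, and let $v_0 \to v_1 \to \cdots \to v_M$ be its ancestor path; the tree construction and the triangle inequality give $\norm{f_{v_m} - f_{v_{m+1}}}_\infty \leq 3 \cdot 2^{-(m+1)}$, which transfers to losses by $1$-Lipschitzness. Unrolling $p^*_t = (1-\gamma) p_t(v_0,\cdot) + \gamma\, \indicator{\cdot = 1}$ and then $p_t(v_m,\cdot) = q_t(v_m,\cdot)\cdot p_t(\cdot,\cdot)$ along the chain telescopes the per-round excess loss as
\[
p^*_t\cdot\ell_t - \ell_t(f^*(x_t)) \;\leq\; \gamma \;+\; \sum_{m=0}^{M-1}\bigl(p_t(v_m,\cdot) - p_t(v_{m+1},\cdot)\bigr)\cdot\ell_t \;+\; \bigl(p_t(v_M,\cdot)\cdot\ell_t - \ell_t(f^*(x_t))\bigr),
\]
with the last term at most $2\cdot 2^{-M}$ by approximation and by the $2^{-M}$ grid spacing of $\cK$.

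The next step is to bound the sum over $t$ of the $m$-th bracket, which is precisely the expected regret of $\automEWA_{v_m}$ against the degenerate ``expert'' $v_{m+1} \in \cC_{v_m}$, by an \emph{effective-range-adaptive} Exp4 guarantee. Every leaf $w' \in \cL_{v_m}$ satisfies $\norm{f_{w'} - f_{v_m}}_\infty = O(2^{-m})$, so by $1$-Lipschitzness all actions recommended by experts beneath $v_m$ live in an interval of length $r_{v_m} = O(2^{-m})$. The one-sided feedback is exploited through the estimator~\eqref{eq:estimatedloss}: subtracting $\ell_t(y_{j_t(v)})$ shrinks the numerator to $O(r_v)$ and cancels in expert-difference regrets (making the estimator unbiased for those differences), and $\ell_t(y_{j_t(v)})$ is always observable because $I_t \leq i \in \cK_t(v)$ forces $I_t \leq j_t(v)$. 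Combined with the ranked denominator $\sum_{k=1}^{i}p^*_t(k) \geq \gamma$, this gives $\tilde V_t(v) = O(r_v^2 T/\gamma)$; plugging into the tuning~\eqref{eq:adarate} and running a standard second-order Exp4/AdaHedge analysis yields, for every $w\in\cC_v$,
\[
\sum_{t=1}^T \bigl(p_t(v,\cdot) - p_t(w,\cdot)\bigr)\cdot\ell_t \;=\; O\!\left(r_v\sqrt{\tfrac{T}{\gamma}\ln|\cC_v|} \;+\; \tfrac{r_v^2}{\gamma}\bigl(\ln|\cC_v|+1\bigr)\right).
\]
Achieving \emph{linear} scaling in $r_v$ in the leading term---instead of the nominal $1$---is the main obstacle: the clipping $\eta \leq \gamma 2^{m-4}$ in~\eqref{eq:adarate} and the variance bound have to be combined carefully so that the range $r_v$ propagates through the Exp4 analysis.

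Finally, summing the per-node bounds with $r_{v_m} = O(2^{-m})$ along the chain, using $|\cC_{v_m}| \leq N_{m+1} = \cN_\infty(\cF, 2^{-(m+1)})$, and choosing $M = \lfloor \log_2(1/\gamma) \rfloor$ so that $2 \cdot 2^{-M} T$ is absorbed into the $5\gamma T$ term, leaves a discrete sum over $m = 0,\ldots,M-1$ that I convert to the target integral via monotonicity of $\varepsilon \mapsto \cN_\infty(\cF,\varepsilon)$: for any non-decreasing $g$,
\[
2^{-m}\, g\!\left(\cN_\infty\bigl(\cF,2^{-(m+1)}\bigr)\right) \;\leq\; 2\int_{2^{-(m+2)}}^{2^{-(m+1)}} g\bigl(\cN_\infty(\cF,\varepsilon)\bigr)\, \dd\varepsilon.
\]
Applying this with $g(N) = \sqrt{T \ln N/\gamma}$ for the first term, and (after bounding $2^{-2m} \leq 2^{-m}$) with $g(N) = (\ln N + 1)/\gamma$ for the second, then telescoping the intervals $[2^{-(m+2)}, 2^{-(m+1)}]$ over $m = 0,\ldots,M-1$, produces a single integral on $[\gamma/2, 1/2]$ whose integrand matches the one in the statement.
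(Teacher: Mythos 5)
Your plan is a faithful reconstruction of the paper's proof: the same chaining decomposition along a root-to-leaf path, the same identification that leaves under a depth-$m$ node have losses confined to a range $O(2^{-m})$ (Lemma~\ref{lem:treeDistance}), the same use of a range-adaptive Exp4 guarantee (Theorem~\ref{thm:Exp4-node}) whose variance bound exploits the one-sided feedback via the shifted, observable estimator~\eqref{eq:estimatedloss}, and the same sum-to-integral conversion. The only deviations are cosmetic---you decompose per round before summing, and your intermediate constants (e.g.\ $3\cdot 2^{-(m+1)}$ rather than $2^{-m}$ for the parent--child sup distance) are slightly looser but of the right order---so your argument yields the stated bound up to constants, and is essentially the paper's own.
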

In particular, if $\cX \equiv [0,1]^d$ is endowed with the sup norm $\rho_\cX(x,x')=\|x-x'\|_\infty$, then the set $\cF$ of all $1$-Lipschitz functions from $\cX$ to $[0,1]$ satisfies $\ln \cN_\infty(\cF,\epsilon) \lesssim \epsilon^{-d}$. Theorem~\ref{thm:HierarchicalExp4} thus entails the following corollary.
%
\begin{corollary}
\label{cor:HierearchicalExp4}
Under the assumptions of Theorem~\ref{thm:HierarchicalExp4}, if $\cF$ is the set of all $1$-Lipschitz functions $f : [0,1]^d \to [0,1]$, then the regret of HierExp4 satisfies\\[-0.2cm]
\[
	\Reg_T(\cF)
=
	\left\{ 
		\begin{array}{ll}
			\cO\big(T^{2/3}\big) & \text{if $d=1$} \\
			\cO\big(T^{2/3}(\ln T)^{2/3}\big) & \text{if $d=2$} \\
			\cO\big(T^{d/(d+1)}\big) & \text{if $d\geq 3$}
		\end{array} 
	\right.
\]
where the last inequality is obtained by optimizing the choice of $\gamma$ for the different choices of $d$.
\end{corollary}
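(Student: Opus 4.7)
The plan is to specialize the integral bound of Theorem~\ref{thm:HierarchicalExp4} to the class of $1$-Lipschitz functions on $[0,1]^d$, then optimize the parameter $\gamma$ case by case. The starting ingredient is the classical Kolmogorov--Tikhomirov metric entropy estimate, which for such $\cF$ gives $\ln\cN_\infty(\cF,\varepsilon)\leq C_d\,\varepsilon^{-d}$ for all $0<\varepsilon\leq 1/2$ and some $C_d>0$ depending only on $d$. Plugging this into Theorem~\ref{thm:HierarchicalExp4} gives, up to absolute constants,
\[
\Reg_T(\cF)\lesssim \gamma T+\sqrt{\tfrac{T}{\gamma}}\int_{\gamma/2}^{1/2}\varepsilon^{-d/2}\,\dd\varepsilon+\frac{1}{\gamma}\int_{\gamma/2}^{1/2}\varepsilon^{-d}\,\dd\varepsilon+\frac{1}{\gamma}.
\]

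The next step is to evaluate both integrals and isolate three regimes driven by how the exponents $d/2$ and $d$ compare to $1$. For $d=1$, the first integral is $O(1)$ and the second is $O(\ln(1/\gamma))$, so the bound simplifies to $\gamma T+\sqrt{T/\gamma}+\gamma^{-1}\ln(1/\gamma)$. For $d=2$, the first integral lies at the logarithmic threshold (the antiderivative of $\varepsilon^{-1}$ is a log), while the second is $O(1/\gamma)$, giving $\gamma T+\sqrt{T/\gamma}\,\ln(1/\gamma)+\gamma^{-2}$. For $d\geq 3$, both integrals are dominated by the lower endpoint $\gamma/2$, producing $\gamma T+\sqrt{T}\,\gamma^{(1-d)/2}+\gamma^{-d}$.

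Finally, I balance the two leading terms in each regime by tuning $\gamma$. For $d=1$ and $d=2$, setting $\gamma T$ equal to the chaining integral term yields $\gamma\asymp T^{-1/3}$ and $\gamma\asymp T^{-1/3}(\ln T)^{2/3}$ respectively, producing the announced $O(T^{2/3})$ and $O(T^{2/3}(\ln T)^{2/3})$ rates (one checks in each case that the residual $\gamma^{-1}\ln(1/\gamma)$ or $\gamma^{-2}$ is of strictly lower order). For $d\geq 3$, equating $\gamma T$ with $\sqrt{T}\,\gamma^{(1-d)/2}$ gives $\gamma\asymp T^{-1/(d+1)}$ and regret $\gamma T\asymp T^{d/(d+1)}$; here the residual $\gamma^{-d}=T^{d/(d+1)}$ happens to match the leading order exactly and is absorbed into the constant. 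The only delicate point, and the main obstacle worth watching, is the transition at $d=2$, where the first integral picks up a logarithmic factor that is then amplified by the balancing equation and produces the $(\ln T)^{2/3}$ correction; everything else is elementary calculus once the three cases are separated.
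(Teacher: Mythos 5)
Your proof is correct and takes essentially the same route as the paper: plug the metric-entropy estimate $\ln\cN_\infty(\cF,\varepsilon)=\cO(\varepsilon^{-d})$ into Theorem~\ref{thm:HierarchicalExp4}, evaluate the two integrals in the three regimes $d=1$, $d=2$, $d\geq3$, and then balance terms in $\gamma$. The paper leaves the $\gamma$-tuning implicit (``optimizing $\gamma$ \ldots{} concludes the proof''), whereas you make the choices $\gamma\asymp T^{-1/3}$, $T^{-1/3}(\ln T)^{2/3}$, $T^{-1/(d+1)}$ explicit and verify the residual terms are lower order---a useful bit of bookkeeping, but not a different argument.
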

%
%
%

\ifarxiv
\else
\begin{sketchproof}{Theorem~\ref{thm:HierarchicalExp4}}
As we said earlier, the key contribution of chaining is that it allows us to sum up local regret bounds scaling with the range of the local losses. We divide our proof into four parts, and sketch the main arguments below.

\smallskip \noindent
\emph{Part~1: small local ranges.} By construction of the tree $\cT_\cF$, the losses associated with neighboring nodes are close to one another ---see \citep[Lemma~13]{arxiv}. This implies that, if $v \in \cT_\cF$ is a node at level $m \geq 0$, then the losses associated to its children $w,w' \in  \cC_v$ are close:
$
	|p_t(w,\cdot) \cdot \ell_t - p_t(w',\cdot) \cdot \ell_t| \leq 2^{-m+3} \,.
$ 

\smallskip \noindent
\emph{Part~2: apply a version of Exp4 that scales with the loss range.} Now, by definition, each node $v \in \cT_\cF$ runs a version of Exp4 with $|\cC_v| \leq N_{m+1}$ experts (its children), whose losses belong to a range of size $E_{m+1} := 2^{-m+3}$.
In full generality Exp4 cannot scale with the range $E_{m+1}$, but here this is possible because of the richer feedback structure induced by the total order on the actions ---see \citep[Theorem~10]{arxiv}. We get the following regret bound for node $v$ with respect to its children $w$:
\[
	\max_{w \in \cC_v} \E\!\left[ \sum_{t=1}^T p_t(v,\cdot) \cdot \ell_t - \sum_{t=1}^T p_t(w,\cdot) \cdot \ell_t \right]
\lesssim
	E_{m+1}\sqrt{\frac{T \ln N_{m+1}}{\gamma}} + \cancel{E_{m+1} \frac{\ln N_{m+1}}{\gamma}} \,,
\] 
where $\gamma > 0$ is the exploration parameter. For simplicity, $\lesssim$ denotes an inequality up to constant factors; we also ignore the last additive term in this sketch.

\smallskip \noindent
\emph{Part~3: sum over a path to get the regret of the root}. Now consider the path $v_0 \to v_{1} \to \dots \to v_M = w$ from the root $v_0$ to some leaf $v_M = w$. Recalling that $p_t(w,i) = \indicator{i=i_t(w)}$ for any leaf $w$, we get
\begin{align}
\nonumber
	\E\!\left[\sum_{t=1}^T p_t(v_0,\cdot) \cdot \ell_t\right] - \sum_{t=1}^T \ell_{t}(y_{i_t(w)}) &=
\nonumber
	\sum_{m=0}^{M-1}\E\!\left[ \sum_{t=1}^T p_t(v_m,\cdot) \cdot \ell_t - \sum_{t=1}^T p_t(v_{m+1},\cdot) \cdot \ell_t \right]
\\&\lesssim
\label{eq:regretNode-sketch}
	\sum_{m=0}^{M-1}2^{-m} \sqrt{\frac{T \ln N_{m+1}}{\gamma}}~.
\end{align}

\smallskip \noindent
\emph{Part~4: Comparing our prediction to that of the root $v_0$ and approximating $\cF$ with $\cL$}.
Relating our prediction $\hat y_t$ with the root $v_0$, we have $\E[\ell_t(\hat y_t)] = \E\big[(1-\gamma)p_t(v_0,\cdot )\cdot \ell_t + \gamma\,\ell_t(y_1)\big]$, which entails
\begin{equation}
	\E\left[\sum_{t=1}^T \ell_t(\hat y_t) \right] \leq \E\!\left[\sum_{t=1}^T p_t(v_0,\cdot) \cdot \ell_t\right] + \gamma T \,.
	\label{eq:rootyhat}
\end{equation}
Moreover, because $\cL$ is a $(2^{-M}$)-covering of $\cF$ and $\cK$ is a $(2^{-M}$)-covering of $[0,1]$, for any $f \in \cF$ there exists $w \in \cL$ such that $\bigl| \ell_t\bigl(y_{i_t(w)}\bigr) - \ell_t\bigl(f(x_t)\bigr) \bigr| \leq \bigl| y_{i_t(w)} - f(x_t)\bigr| \leq 2^{1-M}$ for all $t$ (by definition of $i_t(w)$). Plugging the approximation $\ell_t\bigl(y_{i_t(w)}\bigr) \leq \ell_t\bigl(f(x_t)\bigr)  + 2^{1-M}$ into~\eqref{eq:regretNode-sketch}, and combining with~\eqref{eq:rootyhat}, we finally get
\[
	\Reg_T(\cF) := \E\!\left[\sum_{t=1}^T  \ell_t(\hat y_t)\right] - \inf_{f \in \cF} \sum_{t=1}^T \ell_{t}\bigl(f(x_t)\bigr)
	\lesssim
	(2^{-M} + \gamma) T
	+ \sum_{m=0}^{M-1}2^{-m}\sqrt{\frac{T \ln N_{m+1}}{\gamma}} \,.
\]
The proof is concluded by using $M = \lfloor \ln_2(1/\gamma) \rfloor$, $N_{m+1} = \cN_\infty(\cF,2^{-(m+1)})$, and approximating the last sum by an integral.
\end{sketchproof}
\fi


\subsection{Efficient chaining}
\label{sec:efficient-chaining}
Though very general, HierExp4 (Algorithm~\ref{alg:HierarchicalExp4}) may be very inefficient. For example, when $\cF$ is the set of all $1$-Lipschitz functions from $[0,1]^d$ to $[0,1]$, a direct implementation of HierExp4 would require $\exp\bigl(\mathrm{poly}(T)\bigr)$ weight updates at every round.
In this section we tackle the special case when $\cF$ is the class of all $1$-Lipschitz functions $f : [0,1]^d  \to [0,1]$ w.r.t. the sup norm on $[0,1]^d$ (for simplicity). We construct an ad-hoc hierarchical covering of $\cF$ and define a variant of HierExp4 whose running time at every round is polynomial in $T$. We rely on a well-known wavelet-like approximation technique which was used earlier ---see, e.g., \citep{GaillardGerchinovitz2015}--- for online nonparametric regression with full information feedback. However, we replace their multi-variable Exponentiated Gradient algorithm, which requires convex losses and gradient information, with a more involved chaining algorithm that still enjoys a polynomial running time. The definitions of our covering tree $\cT_{\cF}^{\star}$ and of our algorithm $\textrm{HierExp4}^\star$, as well as the proof of the following regret bound, can be found in 
\ifarxiv
Appendix~\ref{as:proofHierExp4star}.
\else
the full version of the paper \citep{arxiv}.
\fi
The exact value of $c_T$ (depending at most logarithmically on $T$) is also provided there.
\vspace{-0.3cm}
\begin{theorem}
\label{thm:HierExp4star}
Let $\cF$ be the set of all $1$-Lipschitz functions $f : [0,1]^d \to [0,1]$ w.r.t. the sup norm on $[0,1]^d$. Consider $T \geq 3$
and any sequence $(x_1,\loss_1),\ldots,(x_T,\loss_T)$ of contexts $x_t\in [0,1]^d$ and $1$-Lipschitz loss functions $\ell_t : [0,1] \to [0,1]$. Assume $\textrm{HierExp4}^\star$ 
\ifarxiv
(Algorithm~\ref{alg:HierExp4star} in Appendix~\ref{as:proofHierExp4star}) 
\else
\citep{arxiv}
\fi
is run with one-sided full information feedback using tree $\cT^{\star}_{\cF}$ of depth $M = \lceil\ln_2(1/\gamma)\rceil$, exploration parameter $\gamma = T^{-1/2}(\ln T)^{-1}\indicator{d=1}+T^{-1/(d+2/3)}\indicator{d > 1}$, learning rate $\eta_m = c_T 2^{m(\frac{d}{4}+1)} \gamma^{\frac{1}{2}}T^{-\frac{1}{4}}$, and penalization $\smash{\alpha_m = \sum_{j=m+1}^M 2^{4-2j}\eta_j}$ for $m=0,\dots,M-1$. Then the regret satisfies\\[-0.2cm]
\[
	\Reg_T(\cF)
	= \left\{\begin{array}{ll}
		\mathcal{O}\big(\sqrt{T \ln T}\big) & \textrm{if $d = 1$,} \\
		\mathcal{O}\big(T^{\frac{d-1/3}{d+2/3}}  (\ln T)^{3/2}\big) & \textrm{if $d \geq 2$.}
	\end{array} \right.
\]
Moreover, the running time at every round is $\mathcal{O}\bigl(T^a\bigr)$ with $a=(1 + \ln_2 3)/(d+2/3)$.
\end{theorem}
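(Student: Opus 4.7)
The plan is to specialize the chaining framework of Theorem~\ref{thm:HierarchicalExp4} to the Lipschitz class $\cF$ by replacing the abstract $(2^{-m})$-cover with a dyadic, localized hierarchical cover $\cT^{\star}_{\cF}$ that is efficiently updatable. Concretely, at depth $m$ I would partition $[0,1]^d$ into $2^{md}$ dyadic cubes of side $2^{-m}$ and discretize the action space $[0,1]$ on a grid of step $2^{-m}$. Each node at depth $m$ is associated to a cube $C$ together with a value choice on the grid, and corresponds implicitly to a piecewise-constant function on the finer partition. The children of a cube-node at depth $m$ are the cube-nodes indexed by the $2^d$ sub-cubes of $C$ at depth $m+1$, with admissible value choices restricted to values that differ from the parent value by $O(2^{-m})$, so that only Lipschitz refinements are represented. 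Because every context $x_t$ belongs to exactly one cube at each depth, at each round the algorithm only updates one local aggregator per depth; this is what makes polynomial running time achievable, since the tree is built lazily cube-by-cube and the storage never exceeds a polynomial in $t$.

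Next, I would carry out the regret analysis along the four-part scheme used for Theorem~\ref{thm:HierarchicalExp4}, but done locally on each cube. By Lipschitzness the loss range between a node at depth $m$ and its children is $O(2^{-m})$, so the range-adaptive Exp4 guarantee gives a per-cube contribution of order $2^{-m}\sqrt{T_C\ln|\cC_v|/\gamma}$, where $T_C$ is the number of rounds with $x_t \in C$. Summing over all cubes at depth $m$ and applying Cauchy--Schwarz with $\sum_C T_C = T$ turns this into $2^{-m}\sqrt{T\cdot 2^{md}\ln|\cC_v|/\gamma}$. The key new ingredient compared with Theorem~\ref{thm:HierarchicalExp4} is the penalization $\alpha_m = \sum_{j=m+1}^M 2^{4-2j}\eta_j$ added to the cumulative loss inside the weight update~\eqref{eq:EWAweights}: it acts as a soft bias against unnecessarily deep chains, counterbalancing the second-order Exp4 terms that would otherwise degrade the analysis at shallow depths. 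Combining the resulting chain sum with the approximation error $2^{-M}T$ and the exploration cost $\gamma T$, and optimizing in $\gamma$ and $M$ with the prescribed $\eta_m = c_T\,2^{m(d/4+1)}\gamma^{1/2}T^{-1/4}$, delivers the announced regret $\mathcal{O}\bigl(T^{(d-1/3)/(d+2/3)}(\ln T)^{3/2}\bigr)$ for $d \geq 2$ and $\mathcal{O}\bigl(\sqrt{T\ln T}\bigr)$ for $d=1$.

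For the running time I would observe that at every round $t$ the update touches only the $M$ cubes containing $x_t$; a careful bookkeeping of the local Exp4 updates, whose cost is dominated by the product of the action-grid size $2^M$ and of a branching factor coming from the refinement rule, yields a per-round cost of $\mathcal{O}\bigl(T^{(1+\ln_2 3)/(d+2/3)}\bigr)$ after substituting $2^M \sim \gamma^{-1}$. The hardest part I anticipate is the joint calibration of $\eta_m$ and $\alpha_m$: one needs $\eta_m$ small enough that the range-adaptive Exp4 guarantee remains valid on each cube at depth $m$ (this is what forces the $2^{m(d/4+1)}\gamma^{1/2}T^{-1/4}$ scaling), yet large enough, together with $\alpha_m$, to ensure that the chaining sum $\sum_m 2^{-m}\sqrt{T\cdot 2^{md}/\gamma}$ beats the naive $T^{d/(d+1)}$ rate of Theorem~\ref{thm:HierarchicalExp4}; verifying that these two requirements can be jointly satisfied, and that the penalization $\alpha_m$ does not introduce a dominant bias term, is the delicate technical heart of the argument.
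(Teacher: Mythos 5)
Your high-level blueprint matches the paper's: dyadic cubes in the context space, a ternary refinement $\{-1,0,1\}$ of the value at each level (the paper's wavelet decomposition), alternating ``binning'' and Exp4 nodes, and a penalized loss estimate. However, the heart of the theorem --- how the penalization $\alpha_m$ actually buys the improved rate $T^{(d-1/3)/(d+2/3)}$ over the $T^{d/(d+1)}$ of Corollary~\ref{cor:HierearchicalExp4} --- is missing, and the chaining sum you write down would in fact reproduce the old rate, not beat it.

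Concretely, you state that the per-cube contribution is of order $2^{-m}\sqrt{T_C/\gamma}$ and, after Cauchy--Schwarz, that the depth-$m$ contribution is $2^{-m}\sqrt{T\,2^{md}/\gamma}$. Summing this over $m$ up to $M\approx\ln_2(1/\gamma)$ and optimizing against the approximation and exploration terms $2^{-M}T + \gamma T$ gives back exactly $T^{d/(d+1)}$, so nothing is gained. The crucial mechanism in the paper's argument is that each depth-$m$ Exp4 node uses a \emph{constant} learning rate $\eta_m$ and adds a penalty $\pen_t(k) = -\alpha_m/P_t^\ast(k)$ to the loss estimates; the recursion $\alpha_{m-1} = \alpha_m + 4\eta_m E_m^2$ then makes the variance penalty produced by level $m$ exactly cancel against the penalty assumed by level $m-1$. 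The net effect (Corollary~\ref{cor:Exp4-penalized} applied along a root-to-leaf path and telescoped) is that the $1/\gamma$-scaled variance term is paid \emph{once} at the root, as $(\alpha_0+16\eta_0)\,T\ln(e/\gamma) \approx \gamma^{1/2}T^{3/4}\ln T$, rather than inside the chain sum at every depth. The surviving chain sum is $\sum_m \bigl( 2^{(m+1)d}\ln 3/\eta_m + 4\eta_m T\alpha_m^2/\gamma^2 \bigr)$, which with $\eta_m = c\, 2^{m(d/4+1)}\gamma^{1/2}T^{-1/4}$ scales as $\gamma^{1/2 - 3d/4}T^{1/4}$, and balancing this against $\gamma T$ gives $\gamma = T^{-1/(d+2/3)}$ and the stated regret. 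Describing $\alpha_m$ as a ``soft bias against unnecessarily deep chains'' misrepresents it; its role is not to prune the tree but to telescope the variance along each path. Without identifying this cancellation, the proposal cannot reach the rate in the statement. (The running-time sketch is fine modulo quantitative details: at each round only one cube per depth is activated, the number of activated Exp4 nodes is $\Theta(3^M)$, the action grid has $2^M$ elements, and $2^M \approx \gamma^{-1}$ gives the claimed $T^{(1+\ln_2 3)/(d+2/3)}$.)
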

The above result improves on Corollary~\ref{cor:HierearchicalExp4} in two ways. First, as we said, the running time is now polynomial in $T$, contrary to what could be obtained via a direct implementation of HierExp4. Second, when $d \geq 2$, the regret bound is of order $T^{(d-1/3)/(d+2/3)}$, improving on the rate $T^{d/(d+1)}$ from Corollary~\ref{cor:HierearchicalExp4}. Remarkably, Theorem~\ref{thm:HierExp4star} also yields a regret of $\tilde{\mathcal{O}}(\sqrt{T})$ for nonparametric bandits with one-sided full information feedback in dimension $d=1$. The improvement on the rates compared to HierExp4 is possible because we use a variant of Exp4 with \emph{penalized} loss estimates. This allows for a careful hierarchical control of the variance terms inspired by 
\ifarxiv
our analysis of Exp3-RTB in Appendix~\ref{app:exp3floor}.
\else
the analysis of Exp3-RTB in \citep{arxiv}.
\fi

Note that the time complexity decreases as the dimension $d$ increases. Indeed, when $d$ increases the regret gets worse but, at the same time, the size of the discretized action space and the number of layers in our wavelet-like approximation can be both set to smaller values.


\section{A Tight Bound for Full Information through an Explicit Algorithm}
\label{sec:full-info}

In this section we apply the machinery developed in Section~\ref{sec:chaining} to the full information setting, where after each round $t$ the learner can compute the loss $\loss_t(y)$ of any number of actions $y\in\cY$. We obtain the first explicit algorithm achieving, up to logarithmic factors, the minimax regret rate $T^{(d-1)/d}$ for all classes of Lipschitz functions, where $d$ is the dimension of the context space. This achieves the same upper bound as the one proven by \citet{rakhlin2015online} in a nonconstructive manner, and matches the lower bound of \citet{HaMe-07colt-OnlineLearningPriorKnowledge}. Our approach generalizes the approach of \cite{GaillardGerchinovitz2015} to nonconvex Lipschitz losses.
We consider a full information variant of HierExp4 (Algorithm~\ref{alg:HierarchicalExp4}, Section~\ref{sec:chaining}), where ---using the same notation as in Section~\ref{sec:chaining}--- the Exp4 instances running on the nodes of the tree $\cT_{\cF}$ are replaced by instances of Hedge ---e.g., \citep{Bubeck2012}. Note that, due to the full information assumption, the new algorithm, called HierHedge, observes losses at all leaves $v \in \cL$. As a consequence, no exploration is needed and so we can set $\gamma = 0$. For the same reason, the estimated loss vectors defined in~\eqref{eq:estimatedloss} can be replaced with the true loss vectors, $\ell_t$. See 
\ifarxiv
Algorithm~\ref{alg:HierarchicalEWA} in Appendix~\ref{as:proofHierarchicalEWA}
for a definition of HierHedge. The same appendix also contains a proof of the next result.
\else
\citep{arxiv} for a definition of HierHedge. The latter also contains a proof of the next result.
\fi
\begin{theorem} 
\label{thm:HierarchicalEWA}
Fix any class $\cF$ of functions $f : \cX\to\cY$ and any sequence $(x_1,\loss_1),(x_2,\loss_2),\dots$ of contexts $x_t\in\cX$ and $1$-Lipschitz loss functions $\ell_t : \cY \to [0,1]$. Assume HierHedge 
\ifarxiv(Algorithm~\ref{alg:HierarchicalEWA} in Appendix~\ref{as:proofHierarchicalEWA}) \else\citep{arxiv} \fi is run with full information feedback on the tree $\cT_{\cF}$ of depth $M = \lfloor \ln_2(1/\ve) \rfloor$ with action set $\cY_{\ve}$ for $\ve > 0$. Moreover, the learning rate $\eta_t(v)$ used at each node $v$ at depth $m=0,\dots,M-1$ is given by~(\ref{eq:adarate}) \ifarxiv in Appendix~\ref{sec:Exp4range}, with $E = 2^{-m+3}$, $N = |\cC_v|$, and ${\tilde V}_{t-1}$ being the cumulative variance of $\tilde{\loss}_s$ according to $q_s(v,\cdot)$ up to time $s=t-1$\fi. Then for all $T \geq 1$ the regret satisfies
\[
	\Reg_T(\cF)
\leq
	5\ve T + 2^7 \int_{\ve/2}^{1/2} \left( 2\sqrt{T\ln\cN_{\infty}(\cF,x)} + \ln\cN_{\infty}(\cF,x) \right) \dd x~.
\]
In particular, if $d \geq 3$ and $\cF$ is the set of $1$-Lipschitz functions $f : [0,1]^d \to [0,1]^p$, where $[0,1]^d$ and $[0,1]^p$ are endowed with their sup norms, the choice $\epsilon = (p/T)^{1/d}$ yields
$
\Reg_T(\cF) = \tilde{O}\big(T^{(d-1)/d}\big)
$,
while for $1\leq d\leq 2$ the regret is of order $\sqrt{pT}$, ignoring logarithmic factors.
\end{theorem}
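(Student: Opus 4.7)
The plan is to mirror the four-part chaining analysis sketched for Theorem~\ref{thm:HierarchicalExp4}, exploiting two simplifications afforded by full information: no exploration is needed, and each internal node's Hedge instance can be fed the true expert losses $p_t(w,\cdot)\cdot\ell_t$ directly, so the cumulative variance $\tilde V_{t-1}(v)$ is controlled deterministically by the effective range of the experts rather than by an importance-weighted estimator. First I would fix an arbitrary $f^\ast \in \cF$ and pick the leaf $v_M \in \cL$ whose associated function $f_{v_M}$ is closest to $f^\ast$ in sup norm; by the hierarchical construction, $\norm{f_{v_M} - f^\ast}_\infty \leq 2^{-M}$, and along the unique chain $v_0 \to v_1 \to \dots \to v_M$ we have $\norm{f_{v_m} - f_{v_{m+1}}}_\infty \leq 2^{-m+2}$. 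Combined with the $1$-Lipschitz property of $\ell_t$ and the triangle inequality, this forces the Hedge instance at any internal node $v$ at depth $m$ to face expert losses $p_t(w,\cdot)\cdot\ell_t$ (for $w\in\cC_v$) lying in a common interval of width at most $E_{m+1} := 2^{-m+3}$.

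Next I would invoke a range-adaptive Hedge regret bound against any single child $w \in \cC_v$. With the learning rate~(\ref{eq:adarate}) calibrated using $E = E_{m+1}$, $N = |\cC_v| \leq \cN_\infty(\cF,2^{-(m+1)})$, and the deterministic variance bound $\tilde V_{t-1}(v) \leq T E_{m+1}^2 / 4$ (Popoviciu), the standard analysis of Hedge yields a per-node regret of order $E_{m+1}\sqrt{T \ln N_{m+1}} + \ln N_{m+1}$, where $N_{m+1} = \cN_\infty(\cF,2^{-(m+1)})$. Telescoping these bounds along the chain and using that full-information HierHedge plays exactly according to $p_t(v_0,\cdot)$ (no exploration mass), I obtain
\[
	\E\!\left[\sum_{t=1}^T p_t(v_0,\cdot)\cdot\ell_t\right] - \sum_{t=1}^T \ell_t\bigl(f_{v_M}(x_t)\bigr) \lesssim \sum_{m=0}^{M-1} 2^{-m}\sqrt{T \ln N_{m+1}} + 2^{-m}\ln N_{m+1},
\]
after accounting for the Lipschitz cost of at most $\ve$ per round incurred when each leaf projects $f_{v_M}(x_t)$ onto its nearest point in the $\ve$-covering $\cY_\ve$.

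To finish, I would pay the two remaining approximation errors: $\norm{f_{v_M} - f^\ast}_\infty \leq 2^{-M}$ and the $\ve$-discretization of the action space, which with $M = \lfloor \ln_2(1/\ve) \rfloor$ together contribute at most $3\ve T$ (absorbed in the $5\ve T$ leading term after constants). Recognizing the geometric sum as a Riemann-style approximation of $\int_{\ve/2}^{1/2}\bigl(\sqrt{T \ln \cN_\infty(\cF,x)} + \ln \cN_\infty(\cF,x)\bigr)\dd x$, via the change of variable $x = 2^{-m-1}$ and the fact that $2^{-m}$ is proportional to the width of the $m$-th dyadic block, yields the stated bound up to the universal constant $2^7$. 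For the concrete class of $1$-Lipschitz $f:[0,1]^d \to [0,1]^p$ I would then plug in $\ln \cN_\infty(\cF,x) \lesssim p\,x^{-d}$ and balance $\ve T$ against the integral: $\ve = (p/T)^{1/d}$ gives $\tilde O\bigl(T^{(d-1)/d}\bigr)$ when $d \geq 3$, while for $d \leq 2$ the integrand is tame enough that a choice of order $\sqrt{p/T}$ recovers $\sqrt{pT}$ up to logs. The main technical obstacle I anticipate is verifying that the range-aware Hedge analysis with the rate~(\ref{eq:adarate}) really yields the clean $E\sqrt{T \ln N} + \ln N$ scaling once the variance is bounded purely through the deterministic range; all remaining ingredients are bookkeeping on the tree and reuse the Lipschitz approximation arguments already established for HierExp4.
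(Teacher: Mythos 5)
Your proposal matches the paper's proof essentially step for step: chain decomposition along the root-to-leaf path, the local-range bound $E_{m+1}=2^{-m+3}$ coming from the tree covering (Lemma~\ref{lem:treeDistance}), the range-adaptive Hedge regret bound from \citep{Cesa-Bianchi2007} yielding $E\sqrt{T\ln N}+E\ln N$ per node, telescoping, the $5\ve T$ approximation term from $2^{-M}\leq 2\ve$ and the $\ve$-discretization of $\cY$, and the geometric-sum-to-Dudley-integral step. The only cosmetic difference is that you bound the cumulative Hedge variance via Popoviciu ($\tilde V_T\leq TE^2/4$) while the paper simply uses $\tilde V_T\leq TE^2$; this is immaterial to the final rate.
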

When using the sup norms, the dimension $p$ of the action space only appears as a multiplicative factor $p^{1/d}$ in the regret bound for Lipschitz functions. Note also that an efficient version of HierHedge for Lipschitz functions can be derived along the same lines as the construction in Section~\ref{sec:efficient-chaining}. 

\acks{\sloppypar{This work was partially supported by the CIMI (Centre International de Math\'{e}matiques et d'Informatique) Excellence program. Pierre Gaillard and S\'{e}bastien Gerchinovitz acknowledge  the  support  of  the  French  Agence  Nationale  de  la Recherche  (ANR),  under  grants ANR-13-BS01-0005 (project SPADRO) and ANR-13-CORD-0020 (project ALICIA).}}

\bibliography{RefsNonparametricLearning,nicolo}

\ifarxiv
\appendix


\section{A Useful Lemma}
\label{sec:EWA-nonnegative}
The following is a standard result in the analysis of Hedge with variable learning rate ---see, e.g., the analysis of \citep[Theorem~3.2]{Bubeck2012}. Recall that, when run on a loss sequence $\ell_1,\ell_2,\dots$ Hedge computes distributions $p_1,p_2,\dots$ where $p_1$ is uniform and $p_t(i)$ is proportional to $\exp\big(-\eta_{t}\sum_{s=1}^{t-1} \ell_s(i) \big)$ for all actions $i$.
\begin{lemma} 
\label{lem:boundEWA-appe}
The sequence $p_1,p_2,\dots$ of probability distributions computed by Hedge when run on $K$ actions with learning rates $\eta_1 \geq \eta_2 \geq \cdots > 0$ and losses $\ell_t(k) \geq 0$ satisfies
\[
	\sum_{t=1}^T \sum_{i=1}^K p_t(i) \ell_t(i) - \min_{k=1,\dots,K} \sum_{t=1}^T \ell_t(k)
\leq
	\frac{\ln K}{\eta_T} + \frac{1}{2} \sum_{t=1}^T \eta_t\sum_{i=1}^K p_t(i) \ell_t(i)^2~.
\]
\end{lemma}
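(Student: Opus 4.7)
The plan is to argue via a standard potential-function telescoping argument, adapted to a variable learning rate. I would set $L_t(i) := \sum_{s \leq t} \ell_s(i)$ (with $L_0 \equiv 0$) and introduce the potential $\Phi_t := \frac{1}{\eta_{t+1}} \ln\!\bigl(\tfrac{1}{K}\sum_{i=1}^K e^{-\eta_{t+1} L_t(i)}\bigr)$, with the convention $\eta_{T+1} := \eta_T$. A direct computation gives $\Phi_0 = 0$, while lower bounding the sum by its largest term yields $\Phi_T \geq -\min_k L_T(k) - (\ln K)/\eta_T$. So after summing, the whole proof reduces to establishing the per-round bound
\[
\Phi_t - \Phi_{t-1} \leq -\sum_{i=1}^K p_t(i)\ell_t(i) + \frac{\eta_t}{2} \sum_{i=1}^K p_t(i)\ell_t(i)^2 .
\]

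To prove this per-round bound, I would introduce the auxiliary quantity $A_t := \frac{1}{\eta_t} \ln\!\bigl(\tfrac{1}{K}\sum_i e^{-\eta_t L_t(i)}\bigr)$ and split $\Phi_t - \Phi_{t-1} = (\Phi_t - A_t) + (A_t - \Phi_{t-1})$. The first piece isolates the effect of changing the learning rate from $\eta_t$ to $\eta_{t+1}$ on the same cumulative loss $L_t$, and I would argue that it is nonpositive by monotonicity of $\eta \mapsto \tfrac{1}{\eta}\ln\tfrac{1}{K}\sum_i e^{-\eta L_t(i)}$: this quantity is the secant slope from $0$ to $\eta$ of the convex function $\kappa(\eta) := \ln \tfrac{1}{K}\sum_i e^{-\eta L_t(i)}$, which vanishes at $\eta = 0$, so $\kappa(\eta)/\eta$ is non-decreasing in $\eta$. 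Combined with $\eta_{t+1}\leq\eta_t$, this gives $\Phi_t - A_t \leq 0$.

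For the second piece, unwinding the definition of the Hedge weights $p_t(i) \propto e^{-\eta_t L_{t-1}(i)}$ rewrites $A_t - \Phi_{t-1} = \tfrac{1}{\eta_t}\ln\sum_i p_t(i) e^{-\eta_t \ell_t(i)}$. From here the standard trick applies: the elementary inequality $e^{-x}\leq 1-x+x^2/2$ (valid for $x \geq 0$, applicable because $\eta_t\ell_t(i)\geq 0$ by the nonnegativity assumption on the losses) followed by $\ln(1+u)\leq u$ yields precisely the right-hand side of the per-round bound. The main subtlety I expect is making sure that the two ingredients of the variable-$\eta$ analysis --- the monotonicity of the secant slope and the nonnegativity of the losses --- are invoked at the right places in the split; apart from that, the argument reduces to a short telescoping computation.
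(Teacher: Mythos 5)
The paper does not prove Lemma~\ref{lem:boundEWA-appe}; it is stated as a known result and attributed to the analysis in Bubeck's survey (Theorem~3.2 of the cited reference), so there is no in-paper proof to compare against. Your proof is correct and is essentially the standard potential argument behind that reference: the $\eta_{t+1}$-scaled log-sum-exp potential $\Phi_t$, the split $\Phi_t-\Phi_{t-1}=(\Phi_t-A_t)+(A_t-\Phi_{t-1})$, the monotonicity of the secant slope $\eta\mapsto\kappa(\eta)/\eta$ (using convexity of the log-MGF and $\kappa(0)=0$) to absorb the change of learning rate, and the inequalities $e^{-x}\leq 1-x+x^2/2$ for $x\geq 0$ and $\ln(1+u)\leq u$ for the one-step drift. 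You correctly identify both places where the hypotheses matter: $\eta_{t+1}\leq\eta_t$ makes the first term nonpositive, and nonnegativity of the losses is exactly what licenses the second-order bound on $e^{-x}$ (without it one would need a range-based bound such as Hoeffding's lemma, yielding a different variance proxy). No gaps.
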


\section{Proof of Theorem \ref{th:flat-lip}}\label{as:folklore}
\begin{proof}
%
Let $N_t$ be the number of balls created by the ContextualExp algorithm after $t$ rounds, $B_s$ be the $s$-th ball so created, being $\xbar_s$ its center ($\xbar_s$ is some past context), and $T_s$ be the subsequence of rounds $t$ such that $x_t$ is handled by the $s$-th ball. Notice that $N_t$ and $T_s$ are deterministic quantities, since the $x_t$'s are generated obliviously.
Since $f$ is $1$-Lipschitz and $\loss_t$ is also $1$-Lipschitz, for all $x_t$ handled by $B_s$, we can write
$
	\big|\loss_t\big(f(x_t)\big) - \loss_t\big(f(\xbar_s)\big)\big| \le \ve\,.
$
Now fix any $1$-Lipschitz policy $f$. For each $s=1,\dots,N_T$, there exists $\ybar_s \in \cY_{\ve}$ such that $\rho_{\cY}\big(\ybar_s,f(\xbar_s)\big) \le \ve$. Then we can write
\begin{align}
	\E\left[\sum_{t=1}^T \loss_t(\yhat_t)\right] &- \sum_{t=1}^T \loss_t\big(f(x_t)\big)
\\ &=
	\sum_{s=1}^{N_T} \sum_{t \in T_s} \Big( \E\big[\loss_t(\yhat_t)\big] - \loss_t\big(f(x_t)\big) \Big)
\notag\\ &=
	\sum_{s=1}^{N_T} \sum_{t \in T_s} \Big( \E\big[\loss_t(\yhat_t)\big] - \loss_t\big(\ybar_s\big) + \loss_t\big(\ybar_s\big) - \loss_t(f(\xbar_s)) + \loss_t(f(\xbar_s)) - \loss_t(f(x_t)) \Big)
\notag\\ &\le
	\sum_{s=1}^{N_T} \sum_{t \in T_s} \Big( \E\big[\loss_t(\yhat_t)\big] - \loss_t\big(\ybar_s\big) \Big)
+ 2T\ve\,.\label{e:intermediate}
\end{align}
We now apply to each $s = 1, \ldots, N_T$ the standard regret bound of Exp3 with learning rate $\eta$ (e.g., \citep{Bubeck2012}) w.r.t. the best action $\ybar_s$. This yields
\[
\sum_{t \in T_s} \Big( \E\big[\loss_t(\yhat_t)\big] - \loss_t\big(\ybar_s\big) \Big) \leq \frac{\ln K_{\ve}}{\eta} + \frac{\eta}{2}\,|T_s|\,K_{\ve}\,,
\]
implying
\[
\sum_{s=1}^{N_T}\sum_{t \in T_s} \Big( \E\big[\loss_t(\yhat_t)\big] - \loss_t\big(\ybar_s\big) \Big) \leq \frac{N_T \ln K_{\ve}}{\eta} + \frac{\eta}{2}\,T\,K_{\ve}\,.
\]
Combining with (\ref{e:intermediate}), setting $\eta = \sqrt{\frac{2 N_T\,\ln K_{\ve}}{T\,K_{\ve}}}$, recalling that $K_{\ve} \leq C_{\cY}\,\ve^{-p}$ and observing that, by the way balls are constructed, $N_T$ can never exceed the size of the smallest $\epsilon/2$-cover of $\scX$ (that is, $N_T \leq C_{\cX}\,(\ve/2)^{-d}$), we obtain
\[
\Reg_T(\cF) \leq \sqrt{2\,T\,N_T\,K_{\ve}\,\ln K_{\ve}} + 2\,T\,\ve = \cO\left(\sqrt{T\,\ve^{-(d+p)}\,\ln\frac{1}{\ve} } + T\,\ve\right)\,.
\]
Setting $\ve = \left(\left(\frac{p+d}{2}\right)\frac{\ln T}{T^{1/2}}\right)^{\frac{2}{p+d+2}}$ gives
$
\Reg_T(\cF)
= {\tilde \cO}\left(T^\frac{p+d+1}{p+d+2}\right)
$ 
as claimed.
%
\end{proof}

\section{The Exp3-RTB algorithm for reserve-price optimization}
\label{app:exp3floor}
\cite{CeGeMa-15-ReservePriceOptimization} showed that a regret of order $\tilde{\cO}\big(\sqrt{T}\big)$ can be achieved with high probability for the problem of regret minimization in second-price auctions with i.i.d.~bids, when the feedback received after each auction is the seller's revenue. In this appendix, we show that the same regret rate can be obtained even when the sequence of bids is nonstochastic, provided the feedback also includes the highest bid. We use this result in order to upper bound the contextual regret in Section~\ref{sec:auctions}.

We consider a setting slightly more general than second-price auctions. Fix any unknown sequence $\ell_1,\ell_2,\dots$ of loss functions $\ell_t:[0,1] \to [0,1]$ satisfying the semi-Lipschitz condition,
\begin{equation}
\label{eq:semi-Lipschitz}
	\ell_t(y+\delta) \ge \ell_t(y) - \delta \qquad \text{for all $0 \leq y \leq y+\delta \leq 1$.}
\end{equation}
In each auction instance $t=1,2,\dots$, the learner selects a reserve price $\hat y_t \in \cY = [0,1]$ and suffers the loss $\ell_t(\hat y_t)$. The learner's feedback is $\ell_t(y)$ for all $y \geq \hat y_t$ (i.e., the one-sided full information feedback). As explained in Section~\ref{sec:auctions}, this setting includes online revenue maximization in second-price auctions as a special case when the learner's feedback includes both revenue and highest bid. The learner's regret is defined by
\[
	\Reg_T \eqdef  \E\left[ \sum_{t=1}^T \ell_t(\hat y_t)\right] - \inf_{0\leq y\leq 1} \sum_{t=1}^T \ell_t(y)~,
\] 
where the expectation is with respect to the randomness in the predictions $\hat y_t$.  We introduce the Exp3-RTB algorithm (Algorithm~\ref{alg:exp3floor} below), a variant of Exp3~\citep{AuCBFrSc-02-NonStochasticBandits} exploiting the richer feedback $\big\{\ell_t(y) \,:\, y \geq \hat y_t\big\}$. The algorithm uses a discretization of the action space $[0,1]$ in $K =  \lceil 1/\gamma \rceil$ actions $y_k := (k-1) \gamma$ for $k=1,\dots,K$.

\begin{algorithm2e}[!ht]
\SetKwInOut{Input}{Input}
\SetKwInOut{Init}{Initialization}
\Input{Exploration parameter $\gamma > 0$.}
\Init{Set learning rate $\eta = \gamma/2$ and uniform distribution $p_1$ over $\{1,\dots,K\}$ where $K =  \lceil 1/\gamma \rceil$;}
\For{$t=1,2,\dots$}{
\begin{enumerate}[topsep=0pt,parsep=0pt,itemsep=0pt]
	\item compute distribution $q_t(k) = (1-\gamma) p_t(k) + \gamma \indicator{k=1}$ for $k=1,\dots,K$;
	\item draw $I_t \sim q_t$ and choose $\hat y_t = y_{I_t} = (I_t-1) \gamma$;
	\item for each $k=1,\dots,K$, compute the estimated loss
	\[
		\hat \ell_t(k) = \frac{\ell_t(y_k)}{\sum_{j=1}^k q_t(j)} \indicator{I_t \leq k}
	\]
	\item for each $k=1,\dots,K$, compute the new probability assignment
	\[
		p_{t+1}(k) = \frac{\exp\big(-\eta \sum_{s=1}^t \hat \ell_s(k)\big)}{\sum_{j=1}^K \exp\left(-\eta\sum_{s=1}^t \hat \ell_s(j)\right)}~.
	\]
\end{enumerate}}
\caption{Exp3-RTB (for one-sided full information feedback)}
\label{alg:exp3floor}
\end{algorithm2e}
\begin{theorem} \label{thm:exp3floor}
In the one-sided full information feedback, the Exp3-RTB algorithm tuned with $\gamma > 0$ satisfies
\[
	\Reg_T
\leq
	\gamma T \left(2 +  \frac{1}{4} \ln\frac{e}{\gamma}\right) + \frac{2 \ln \lceil 1/\gamma \rceil}{\gamma}~.
\]
In particular, $\gamma = T^{-1/2}$ gives
$
	\Reg_T = \tilde{\cO}\big(\sqrt{T}\big)
$.
\end{theorem}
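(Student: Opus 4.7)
The plan is to run a standard Hedge-style analysis on the non-negative estimated losses $\hat\ell_t$ and then pay three additional costs: the exploration penalty from drawing $I_t\sim q_t$ rather than $p_t$, the discretization penalty controlled by the semi-Lipschitz property, and the variance penalty of the importance-weighted estimator. The essential observation is that, by the one-sided feedback, $Q_t(k) := \sum_{j\leq k} q_t(j)$ equals the probability of observing $\ell_t(y_k)$ at round $t$, so $\hat\ell_t(k)$ is unbiased conditionally on the past; moreover, the monotonicity of $Q_t$ in $k$ lets one telescope the variance through $\ln(e/\gamma)$ instead of the naive $1/\gamma$.

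First, I would apply Lemma~\ref{lem:boundEWA-appe} with constant learning rate $\eta = \gamma/2$ and $K = \lceil 1/\gamma\rceil$ actions, obtaining, for any fixed index $k^\star$, $\sum_{t=1}^T p_t\cdot \hat\ell_t - \sum_{t=1}^T \hat\ell_t(k^\star) \leq \frac{\ln K}{\eta} + \frac{\eta}{2}\sum_{t=1}^T \sum_{k=1}^K p_t(k)\hat\ell_t(k)^2$. Taking expectations, the left-hand side becomes $\E[\sum_t p_t\cdot \ell_t] - \sum_t \ell_t(y_{k^\star})$, since $\hat\ell_t(k)$ is an unbiased estimator of $\ell_t(y_k)$ and $p_t$ is determined by the past randomness.

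The key computation is the variance bound. Using $\ell_t(y_k) \leq 1$ and $p_t(k) \leq q_t(k)/(1-\gamma)$ (which holds for every $k$ because $q_t(k) \geq (1-\gamma)p_t(k)$), one gets $\E[\sum_k p_t(k)\hat\ell_t(k)^2] = \sum_k p_t(k)\ell_t(y_k)^2/Q_t(k) \leq (1-\gamma)^{-1}\sum_k q_t(k)/Q_t(k)$. The rightmost sum telescopes: for $k=1$ the summand equals $1$, while for $k\geq 2$ the inequality $(Q_t(k)-Q_t(k-1))/Q_t(k)\leq \ln(Q_t(k)/Q_t(k-1))$ followed by summation yields $\ln(Q_t(K)/Q_t(1))\leq \ln(1/\gamma)$, because $Q_t(K) = 1$ and $Q_t(1) \geq \gamma$. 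Hence $\sum_k q_t(k)/Q_t(k) \leq 1 + \ln(1/\gamma) = \ln(e/\gamma)$, and the per-round variance is bounded by $\ln(e/\gamma)/(1-\gamma)$.

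Finally, I would combine the pieces. Because $\E[\ell_t(\hat y_t)] = (1-\gamma)\E[p_t\cdot\ell_t] + \gamma\,\ell_t(0)$, multiplying the expected Hedge bound by $(1-\gamma)$ cancels the $1/(1-\gamma)$ in the variance bound and only inserts a $\gamma T$ exploration term. For the discretization, picking $k^\star$ with $y_{k^\star}\leq y^\star<y_{k^\star+1}$ and applying the semi-Lipschitz condition $\ell_t(y^\star)=\ell_t(y_{k^\star}+(y^\star-y_{k^\star}))\geq \ell_t(y_{k^\star})-\gamma$ gives $\sum_t \ell_t(y_{k^\star}) \leq \sum_t \ell_t(y^\star) + \gamma T$. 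Substituting $\eta = \gamma/2$ and assembling the three $\gamma T$-scale terms with the leading $2\ln K/\gamma$ reproduces the stated bound. The main obstacle is precisely the variance step: the crude bound $\hat\ell_t(k)\leq 1/\gamma$ would produce an unusable $T/\gamma$ term, and the refinement crucially exploits both the ordered action set (so that $Q_t(k)$ is monotone in $k$) and the one-sided feedback (so that the importance-weight denominator is exactly $Q_t(k)$), together with the $(1-\gamma)$ factor borne by the non-exploring mass of $q_t$, which is what removes the stray $1/(1-\gamma)$ from the final constants.
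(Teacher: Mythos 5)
Your proposal is correct and follows essentially the same path as the paper's proof: the same Hedge lemma, the same variance bound $\sum_k p_t(k)\hat\ell_t(k)^2 \leq (1-\gamma)^{-1}\sum_k q_t(k)/Q_t(k)$, the same logarithmic telescoping to $\ln(e/\gamma)$ (the paper upper-bounds the sum by an integral, whereas you use the algebraically equivalent inequality $(b-a)/b\leq\ln(b/a)$), the same $(1-\gamma)$ cancellation when passing from $p_t$ to $q_t$, and the same semi-Lipschitz discretization step. The only cosmetic differences are notational.
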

\begin{proof}
The proof follows the same lines as the regret analysis of Exp3 in~\citep{AuCBFrSc-02-NonStochasticBandits}. The key change is a tighter control of the variance term allowed by the richer feedback.

Pick any reserve price $y_k = (k-1)\gamma$.
We first control the regret associated with actions drawn from $p_t$ (the regret associated with $q_t$ will be studied as a direct consequence). More precisely, since the estimated losses $\hat{\ell}_t(j)$ are nonnegative, we can apply Lemma~\ref{lem:boundEWA-appe} to get
\begin{equation}
 \sum_{t=1}^T p_{t} \cdot \hat  \ell_t - \sum_{t=1}^T \hat \ell_t(k) \leq \frac{\eta}{2} \sum_{t=1}^T \sum_{j=1}^K p_t(j) \hat \ell_t(j)^2  + \frac{\ln K}{\eta}~.
 \label{eq:boundestimatedloss}
\end{equation}
Writing $\E_{t-1}[\, \cdot \, ]$ for the expectation conditioned on $I_1,\dots,I_{t-1}$, we note that 
\[
	\E_{t-1}\big[\hat \ell_t(j)\big] = \ell_t(y_j)
\quad \text{and} \quad
	\E_{t-1}\big[ p_t(j) \hat{\ell}_t(j)^2\big] = \frac{p_t(j) \ell_t(y_j)^2}{\sum_{i=1}^j q_t(i)} \leq \frac{q_t(j)}{(1-\gamma)\sum_{i=1}^j q_t(i)}~,
\]
where we used the definition of $q_t$ and the fact that $|\ell_t(y_j)| \leq 1$ by assumption. Therefore, taking expectation on both sides of~\eqref{eq:boundestimatedloss} entails, by the tower rule for expectations,
\[
	\E\left[\sum_{t=1}^T p_{t} \cdot \ell_t \right] - \sum_{t=1}^T \ell_t(y_k) \leq \frac{\eta}{2 (1-\gamma)} \sum_{t=1}^T  \E\left[ \sum_{j=1}^K  \frac{q_t(j)}{\sum_{i=1}^j q_t(i)}\right] +  \frac{\ln K}{\eta}~.
\]
Setting $s_{t}(j) \eqdef \sum_{i=1}^{j} q_t(i)$, we can upper bound the sum with an integral,
\begin{align*}
	\sum_{j=1}^K  \frac{q_t(j)}{\sum_{i=1}^j q_t(i)} & = 1 + \sum_{j=2}^K \frac{s_t(j) - s_t(j-1)}{s_t(j)} = 1+ \sum_{j=2}^K \int_{s_t (j-1)}^{s_t(j)} \frac{\dd x}{s_t(j)}  \\
	 & \leq 1 + \sum_{j=2}^K \int_{s_t(j-1)}^{s_t(j)} \frac{\dd x}{x}  = 1 + \int_{q_t(1)}^{1} \frac{\dd x}{x} \leq 1 - \ln q_t(1) \leq 1 + \ln \frac{1}{\gamma}~,
\end{align*}
where we used $q_t(1) \geq \gamma$. Therefore, substituting into the previous bound, we get
\begin{equation}
	\E\left[\sum_{t=1}^T p_{t} \cdot \ell_t \right] - \sum_{t=1}^T \ell_t(y_k) \leq \frac{\eta T \ln(e/\gamma)}{2 (1-\gamma)} + \frac{\ln K}{\eta}~.
\label{eq:Exp3Floor-regretbeforemixing}
\end{equation}
We now control the regret of the predictions $\hat y_t = y_{I_t}$, where $I_t$ is drawn from $q_t = (1-\gamma) p_t  + \gamma \delta_1$. By the tower rule,
\begin{align}
 \E\left[\sum_{t=1}^T  \ell_t(\hat y_t)\right] - \sum_{t=1}^T \ell_t(y_k)
		&  = \E\left[\sum_{t=1}^T \bigl( (1-\gamma) p_t \cdot \ell_t + \gamma \ell_t(y_1) \bigr)\right] - \sum_{t=1}^T \ell_t(y_k) \nonumber \\ 
		& \leq (1-\gamma) \, \E\left[\sum_{t=1}^T  p_t \cdot \ell_t - \sum_{t=1}^T \ell_t(y_k) \right] + \gamma T \nonumber \\
		& \leq \frac{\eta T \ln(e/\gamma)}{2} + \frac{\ln K}{\eta} + \gamma T~,
		\label{eq:regret-yk}
\end{align}
where the last inequality is by~\eqref{eq:Exp3Floor-regretbeforemixing}.

To conclude the proof, we upper bound the regret against any fixed $y \in [0,1]$. Since there exists $k \in \{1,\dots,K\}$ such that $y \in [y_k,y_k+\gamma]$, and since each $\ell_t$ satisfies the semi-Lipschitz condition~\eqref{eq:semi-Lipschitz}, we have $\ell_t(y) \geq \ell_t(y_k) - \gamma$. This gives
\[
	\min_{k=1,\dots,K} \E\left[ \sum_{t=1}^T \ell_t(y_k)\right] \leq \min_{0\leq y\leq 1} \sum_{t=1}^T \ell_t(y)  + \gamma\,T~.
\]
Replacing the last inequality into~\eqref{eq:regret-yk}, and recalling that $K = \lceil 1/\gamma\rceil$ and $\eta = \tfrac{\gamma}{2}$, finally yields
\[
	\Reg_T  \leq \frac{\gamma T}{4}\ln\frac{e}{\gamma} + \frac{2\ln \lceil 1/\gamma \rceil}{\gamma} + 2\gamma T \,.
\]
Choosing $\gamma \approx T^{-1/2}$ concludes the proof.
\end{proof}

\section{Proof of Theorem \ref{t:secondpriceflat}}\label{as:proofsecondpriceflat}
\begin{proof}
The proof is very similar to that of Theorem~\ref{th:flat-lip}. We only highlight the main differences. Let $f \in \cF$ be any 1-Lipschitz function from $\cX$ to $[0,1]$. 
For all $s = 1,\dots,N_T$, we define the constant approximation of $f$ in the $s$-th ball by
$
	\ymin_s = \min_{t \in T_s} f(x_t)
$.
Since $f$ is $1$-Lipschitz and the balls have radius $\epsilon$, we have $\max_{t,t' \in T_s}\big|f(x_t) - f(x_{t'})\big| \le 2\ve$. Hence, for all $t \in T_s$, by the semi-Lipschitz property~\eqref{eq:semi-Lipschitz},
\begin{equation}
	\ell_t\big(f(x_t)\big) \ge \ell_t(\ymin_s) - 2\ve~.
	\label{eq:upper-lipschitz}
\end{equation}
Therefore, 
\begin{align*}
	\sum_{t=1}^T \E[\ell_t(\yhat_t)] - \sum_{t=1}^T \ell_t\big(f(x_t)\big)  
& =
	\sum_{s=1}^{N_T} \sum_{t \in T_s} \Big(\E[\ell_t(\yhat_t)] - \ell_t\big(f(x_t)\big)  \Big)
\\ &=
\sum_{s=1}^{N_T} \underbrace{\sum_{t \in T_s} \Big(\E[\ell_t(\yhat_t)] - \ell_t(\ymin_s)  \Big)}_{R_s}
	+
	\sum_{s=1}^{N_T} \sum_{t \in T_s} \Big(\ell_t(\ymin_s) -  \ell_t\big(f(x_t)\big)\Big)
\\&\le
	\gamma T \left(2 +  \frac{1}{4}\ln\frac{e}{\gamma}\right) + \frac{2N_T \ln \lceil 1/\gamma \rceil}{\gamma} + 2\ve T~.
\end{align*}
Each term $R_s$ is the regret suffered by the $s$-th instance of Exp3-RTB against the constant value $\ymin_s$, and so we bound it using Theorem~\ref{thm:exp3floor} in Appendix~\ref{app:exp3floor} and then sum over $s$ recalling that $\sum_s T_s = T$. The other double sum is bounded by $2T\ve$ using~(\ref{eq:upper-lipschitz}).
We bound $N_T$ as in Theorem~\ref{th:flat-lip} obtaining
\[
	\Reg_T(\cF) \lesssim \gamma T \left(1 + \ln\frac{1}{\gamma}\right) + \frac{\ve^{-d}}{\gamma}\ln\frac{1}{\gamma} + \ve T~.
\]
Finally, choosing $\epsilon = \gamma = T^{-1/(d+2)} \leq 1$ gives
\[
	\Reg_T(\cF) = \tilde{\cO}\left(T^{\frac{d+1}{d+2}}\right)
\]
concluding the proof.
\end{proof}

\section{Exp4 regret bound scaling with the range of the losses}
\label{sec:Exp4range}
In this section we revisit the Exp4 algorithm~\citep[Figure~4.1]{Bubeck2012} and prove a regret bound for the one-sided full information model that scales with the range of the losses. In view of the application to chaining, we formulate this result in a setting similar to \citep{maillard2011adaptive}. Namely, when the action $I_t$ played at time $t$ is not necessarily drawn from the distribution prescribed by Exp4.

Exp4 (see Algorithm~\ref{alg:exp4}) operates in a bandit setting with $K$ actions and $N$ experts. We assume a total order $1 < \cdots < K$ on the action set $\cK = \{1,\dots,K\}$. At each time step $t=1,2,\dots$ the advice $\xi_t(j,\cdot)$ of expert $j$ is a probability distribution over the $K$ actions. The learner combines the expert advice using convex coefficients $q_t \in \Delta(N)$. These coefficients are computed by Hedge based on the expert losses $\tilde{\ell}_t(j) = \xi_t(j,\cdot)\cdot\hat{\ell}_t$ for $j=1,\dots,N$, where $\hat{\ell}_t$ is a vector of suitably defined loss estimates. The distribution prescribed by Exp4 is $p_t = \sum_{j=1}^N q_t(j) \xi_t(j,\cdot) \in \Delta(K)$, but the learner's play at time $t$ is $I_t \sim p^*_t$ for some other distribution $p^*_t \in \Delta(K)$. The feedback at time $t$ is the vector of losses $\ell_t(i)$ for all $i \geq I_t$.

\begin{algorithm2e}[!ht]
\SetKwInOut{Input}{Input}
\SetKwInOut{Init}{Initialization}
\Input{Learning rate sequence $\eta_1 \geq \eta_2 \geq \cdots > 0$.}
\Init{Set $q_1$ to the uniform distribution over $\{1,\dots,N\}$;}
\For{$t=1,2,\dots$}{
\begin{enumerate}[topsep=0pt,parsep=0pt,itemsep=0pt]
	\item get expert advice $\xi_t(1,\cdot),\dots,\xi_t(N,\cdot) \in \Delta(K)$;
	\item compute distribution $p^*_t$ over $\{1,\dots,K\}$;
	\item draw $I_t \sim p^*_t$ and observe $\ell_t(i)$ for all $i \geq I_t$;
	\item compute loss estimates $\hat{\ell}_t(i)$ for $i=1,\dots,K$;
	\item compute expert losses $\tilde{\ell}_t(j) = \xi_t(j,\cdot)\cdot\hat{\ell}_t$ for $j=1,\dots,N$;
	\item compute the new probability assignment
	\[
		q_{t+1}(j) = \frac{\exp\left(-\eta_{t+1} \sum_{s=1}^t \tilde{\ell}_s(j)\right)}{\sum_{k=1}^N \exp\left(-\eta_{t+1} \sum_{s=1}^t \tilde{\ell}_s(k)\right)} \qquad \text{for each $j=1,\dots,N$.}
	\]
\end{enumerate}}
\caption{Exp4 with unspecified sampling distributions $p^*_t$ (for one-sided full info feedback)}
\label{alg:exp4}
\end{algorithm2e}
In this setting, the goal is to minimize the regret with respect to the performance of the best expert,
\[
	\max_{j=1,\dots,N} \E\!\left[\sum_{t=1}^T p_t \cdot \ell_t - \sum_{t=1}^T \xi_t(j,\cdot) \cdot \ell_t \right]
\]
in expectation with respect to the random draw of $I_1,\dots,I_T$, where $\xi_t(j,\cdot) \cdot \ell_t$ is the expected loss of expert $j$ at time $t$. In view of our chaining application, we defined the quantities $\xi_t(j,i)$ as random variables because the expert advice at time $t$ might depend on the past realizations of $I_1,\dots,I_{t-1}$. For all $k \in \cK$ let
\[
	P^*_t(k) = \sum_{i=1}^k p^*_t(i)~. 
\]
Also, let $\cK_t \equiv \theset{k \in \cK}{ (\exists j)\, \xi_t(j,k) > 0}$ (in chaining, $\cK_t$ is typically small compared to $\cK$).
The next result shows that, when Exp4 is applied to arbitrary real-valued losses, there is a way of choosing the learning rate sequence so that the regret scales with a quantity smaller than the largest loss value. More specifically, the regret scales with a known bound $E$ on the size of the effective range of the losses
\[
	\max_{t=1,\dots,T} \max_{k,k' \in \cK_t} \big|\ell_t(k) - \ell_t(k')\big| \leq E~.
\]
The rich feedback structure $(\ell_t(k), k \geq I_t)$ is crucial to get our result, since it enables us to use $\ell_{t}\big(\max \cK_t\big)$ in the definition of the loss estimates $\hat \ell_{t}(k)$ below. Indeed, as explained in Section~\ref{sec:chaining}, $\hat \ell_{t}(k)$ has to be explicitly computed only for those actions $k$ such that $I_t \leq k$ and $\xi_t(j,k)>0$ for some  $j$, i.e., $k \in \cK_t$. In this case, $I_t \leq k \leq \max \cK_t$ so that $\ell_{t}\big(\max \cK_t\big)$ is observed.
\begin{theorem}
\label{thm:Exp4-node}
Let $1 < \cdots < K$ be a total order on the action set $\cK = \{1,\dots,K\}$. Suppose Exp4 (Algorithm~\ref{alg:exp4}) is run with one-sided full information feedback on an arbitrary loss sequence $\ell_1,\ell_2,\dots$ with loss estimates 
\[
	\hat \ell_{t}(k) = \frac{\ell_{t}(k) - \ell_{t}\big(\max \cK_t\big)}{P^*_t(k)} \, \indicator{I_t \leq k} \qquad \text{for $k=1,\dots,K$}
\] 
and adaptive learning rate
\begin{equation}
\label{eq:adarate}
	\eta_t = \min\left\{ \frac{\gamma}{2E}, \sqrt{\frac{2\big(\sqrt{2}-1\big)(\ln N)}{(e-2)\tilde{V}_{t-1}}} \right\} \qquad \text{for $t \geq 2$}
\end{equation}
for some parameter $E>0$ and where $\tilde{V}_{t-1} = \sum_{s=1}^{t-1} \sum_{j=1}^N q_s(j) \bigl(\tilde{\loss}_s(j) - q_s \cdot \tilde{\loss}_s \bigr)^2$ is the cumulative variance of Hedge up to time $t-1$. If $p^*_t(1) \geq \gamma$ and
\[
	\max_{t=1,\dots,T} \max_{k,k' \in \cK_t} \big|\ell_t(k) - \ell_t(k')\big| \leq E
\]
almost surely for all $t \geq 1$, then, for all $T \geq 1$,
\[
	\max_{j=1,\dots,K} \E\!\left[\sum_{t=1}^T p_t \cdot \ell_t - \sum_{t=1}^T \xi_t(j,\cdot) \cdot \ell_t\right]
\leq
	4E\sqrt{\frac{T \ln N}{\gamma}} + \frac{E}{\gamma}(4\ln N + 1) \,.
\]
\end{theorem}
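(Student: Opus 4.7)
My plan is to reduce the regret against the true losses $\ell_t$ to a regret statement about the expert surrogate losses $\tilde{\ell}_t(j) = \xi_t(j,\cdot)\cdot\hat{\ell}_t$, and then invoke a second-order Hedge analysis with the predictable adaptive learning rate prescribed by~\eqref{eq:adarate}. The starting observation is an unbiasedness-up-to-a-shift identity: since $\xi_t$ depends only on $I_1,\dots,I_{t-1}$ and $\E\big[\indicator{I_t\leq k}\mid I_1,\dots,I_{t-1}\big] = P^*_t(k)$, one has $\E\big[\hat{\ell}_t(k)\mid I_1,\dots,I_{t-1}\big] = \ell_t(k) - \ell_t(\max\cK_t)$ for every $k\in\cK_t$, and hence $\E\big[\tilde{\ell}_t(j)\mid I_1,\dots,I_{t-1}\big] = \xi_t(j,\cdot)\cdot\ell_t - \ell_t(\max\cK_t)$. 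The shift $\ell_t(\max\cK_t)$ is independent of $j$ and of $q_t$, so it cancels when any expert is compared to the Hedge prediction, giving
\[
    \E\!\left[\sum_{t=1}^T p_t\cdot\ell_t - \sum_{t=1}^T \xi_t(j,\cdot)\cdot\ell_t\right] = \E\!\left[\sum_{t=1}^T q_t\cdot\tilde{\ell}_t - \sum_{t=1}^T \tilde{\ell}_t(j)\right].
\]

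Next I would control the range and the variance of the $\tilde{\ell}_t$'s. The assumption $|\ell_t(k)-\ell_t(\max\cK_t)|\leq E$ for $k\in\cK_t$, combined with $P^*_t(k)\geq p^*_t(1)\geq\gamma$, yields $|\hat{\ell}_t(k)|\leq E/\gamma$ and, by convexity, $|\tilde{\ell}_t(j)|\leq E/\gamma$; the cap $\eta_t\leq\gamma/(2E)$ then ensures $\eta_t|\tilde{\ell}_t(j)|\leq 1/2$ almost surely. For the second moment, Jensen's inequality together with $\sum_j q_t(j)\xi_t(j,\cdot)=p_t$ gives $\sum_j q_t(j)\tilde{\ell}_t(j)^2 \leq \sum_k p_t(k)\hat{\ell}_t(k)^2$; since $\E\big[\hat{\ell}_t(k)^2\mid I_1,\dots,I_{t-1}\big] = (\ell_t(k)-\ell_t(\max\cK_t))^2/P^*_t(k)$ is supported on $\cK_t$ and bounded by $E^2/\gamma$, I obtain $\E\!\left[\tilde V_T\right]\leq TE^2/\gamma$.

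The core step is then a standard second-order Hedge bound for signed losses. Because $\eta_t|\tilde{\ell}_t(j)-q_t\cdot\tilde{\ell}_t|\leq 1$, the inequality $e^x\leq 1+x+(e-2)x^2$ for $|x|\leq 1$ gives, via the usual potential-function argument (in the spirit of the analysis behind Lemma~\ref{lem:boundEWA-appe}, but with centred and signed losses), the almost-sure bound $\sum_t q_t\cdot\tilde{\ell}_t - \sum_t\tilde{\ell}_t(j) \leq \ln N/\eta_T + (e-2)\sum_t\eta_t V_t$, where $V_t = \sum_j q_t(j)(\tilde{\ell}_t(j)-q_t\cdot\tilde{\ell}_t)^2$. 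Plugging in~\eqref{eq:adarate} and invoking the standard self-confident lemma $\sum_t V_t/\sqrt{\tilde V_{t-1}} \leq 2(\sqrt{2}-1)^{-1/2}\sqrt{\tilde V_T}$ (which is precisely what the constant $2(\sqrt{2}-1)/(e-2)$ in $\eta_t$ is tuned to balance), both terms on the right-hand side become of order $\sqrt{\tilde V_T\ln N}$, plus an additive overhead $(E/\gamma)\ln N$ coming from the capped regime $\eta_t=\gamma/(2E)$. Taking expectation and using Jensen with $\E[\tilde V_T]\leq TE^2/\gamma$ to get $\E\big[\sqrt{\tilde V_T\ln N}\big]\leq E\sqrt{T\ln N/\gamma}$, and collecting constants, then matches the stated bound.

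The delicate point, and the only step really requiring care, is the constant-tracking in the adaptive analysis: handling cleanly the transition between the capped regime $\eta_t=\gamma/(2E)$ and the self-confident regime $\eta_t\propto(\tilde V_{t-1})^{-1/2}$, making sure that the first round (where $\tilde V_0=0$) does not blow up, and calibrating the sum-to-integral lemma so that the leading prefactor comes out to~$4$. The reduction in the first step and the variance bound are otherwise direct consequences of the construction of $\hat{\ell}_t$ and of the one-sided feedback, which is what allows $\hat \ell_t(k)$ to be defined using the observable reference $\ell_t(\max\cK_t)$ and is therefore crucial to obtaining a bound that scales with the range $E$ rather than with the raw magnitude of the losses.
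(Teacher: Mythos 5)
Your plan is correct and mirrors the paper's proof: the shift $\ell_t(\max\cK_t)$ cancels after taking conditional expectations (since $p_t$ and each $\xi_t(j,\cdot)$ are supported on $\cK_t$), the effective range of $\tilde\ell_t$ is $2E/\gamma$ so the cap $\eta_t\le\gamma/(2E)$ keeps $\eta_t\bigl|\tilde\ell_t(j)-q_t\cdot\tilde\ell_t\bigr|\le 1$, the variance $\sum_j q_t(j)\tilde\ell_t(j)^2$ is bounded via Jensen and the fact that $\E_{t-1}\bigl[\indicator{I_t\le k}/P_t^*(k)^2\bigr]=1/P_t^*(k)$ to get $\E[\tilde V_T]\le TE^2/\gamma$, and the result then follows from a second-order adaptive-rate Hedge bound together with Jensen on $\sqrt{\tilde V_T}$. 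The only cosmetic difference is that the paper invokes \citep[Theorem~5]{Cesa-Bianchi2007} directly for that Hedge step, whereas you sketch re-deriving it from $e^x\le 1+x+(e-2)x^2$ and the self-confident lemma, which is precisely the argument behind that cited theorem.
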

Note that the above regret bound does not depend on the number $K$ of actions, but instead on a lower bound $\gamma$ on the probability of observing the smallest action.
\begin{proof}
From the definition of $\cK_t$ and because $P^*_t(k) \geq p^*_t(1) \geq \gamma$,
\begin{align*}
	\max_{i,j=1,\dots,N} \big|\tilde{\ell}_t(i) - \tilde{\ell}_t(j)\big|
&=
	\max_{i,j=1,\dots,N} \bigl|\xi_t(i,\cdot) \cdot \hat{\ell}_t - \xi_t(j,\cdot) \cdot \hat{\ell}_t\bigr|
\\ &\leq
	\max_{k,k'\in \cK_t} \bigl|\hat{\ell}_t(k) - \hat{\ell}_t(k')\bigr|
\\ &\leq
	\frac{2E}{\min_{k\in\cK}{P^*_t(k)}} \leq \frac{2E}{\gamma} \;.
\end{align*}
Since
$
	\tilde{E} = (2E)/\gamma
$
is an upper bound on the size of the range of the losses $\tilde{\ell}_t(j)$, we can use the bound of \citep[Theorem~5]{Cesa-Bianchi2007}, which applies to Hedge run on arbitrary real-valued losses with the learning rate~(\ref{eq:adarate}). This gives us
\begin{equation}
\label{eq:EWAregret}
	\sum_{t=1}^T q_t\cdot\tilde{\ell}_t
\leq
	\min_{j=1,\dots,N} \sum_{t=1}^T \tilde{\ell}_t(j) + 4 \sqrt{ (\ln N) \tilde{V}_T} + 2\tilde{E}\ln N + \frac{\tilde{E}}{2}~.
\end{equation}
Note that $q_t\cdot\tilde{\ell}_t = p_t\cdot\hat{\ell}_t$. Moreover, $K_t = \max \cK_t$ is measurable with respect to $(I_1,\ldots,I_{t-1})$, implying
$
	\E_{t-1}\big[ \hat \ell_t(k) \big] = \ell_t(k) - \ell_t(K_t)
$. 
Therefore, taking the expectation on both sides of~\eqref{eq:EWAregret} with respect to the random draw of $I_1,\dots,I_T$, and using Jensen together with $\tilde{V}_T \le \sum_{t=1}^T\sum_{j=1}^N q_t(j)\cdot\tilde{\ell}_t(j)^2$, we get
\begin{align}
\nonumber
	\max_{j=1,\dots,N} &\E\!\left[\sum_{t=1}^T p_t \cdot \ell_t - \sum_{t=1}^T \xi_t(j,\cdot) \cdot \ell_{t}\right]
\\ &\leq
\label{eq:Rtexp4}
	4 \sqrt{ (\ln N) \sum_{t=1}^T \E\!\left[ \sum_{j=1}^N q_t(j) \Big(\xi_t(j,\cdot) \cdot \hat{\ell}_t\Big)^2\right]} + \frac{4E}{\gamma}\ln N + \frac{E}{\gamma}~.
\end{align}
The variance term inside the square root can be upper bounded as follows. Using Jensen again,
\begin{align*}
	\sum_{j=1}^N q_t(j) \Big(\xi_t(j,\cdot) \cdot \hat{\ell}_t\Big)^2  
&\leq
	\sum_{j=1}^N q_t(j) \sum_{k=1}^K \xi_t(j,k) \hat{\ell}_t(k)^2
\\&=
	\sum_{j=1}^N q_t(j) \sum_{k=1}^K \xi_t(j,k) \left(\frac{\ell_t(k) - \ell_t(K_t)}{P^*_t(k)} \right)^2 \indicator{I_t \leq k}
\\&\leq
	\frac{E^2}{\gamma} \sum_{j=1}^N q_t(j)\sum_{k=1}^K  \frac{\xi_t(j,k)}{P^*_t(k)} \, \indicator{I_t \leq k}
\end{align*}
where the last inequality is because $\big| \ell_t(k) - \ell_t(K_t) \big| \leq E$ when $\xi_t(j,k)>0$ and because $P^*_t(k) \geq p^*_t(1) \geq \gamma$. Therefore, recalling $P^*_t(k) = \Prob_{t-1}(I_t \leq k)$,
\[
	\E_{t-1}\left[ \sum_{j=1}^N q_t(j) \big( \xi_t(j,k) \cdot \hat \ell_{t} \big)^2 \right]
\leq
	\frac{E^2}{\gamma} \sum_{j=1}^N q_t(j)\sum_{k=1}^K  \frac{\xi_t(j,k)}{P^*_t(k)} \Prob_{t-1}(I_t \leq k)
=
	\frac{E^2}{\gamma}~.
\]
Substituting the last bound in~\eqref{eq:Rtexp4} concludes the proof.
\end{proof}
Next we extend the previous result to penalized loss estimates, which is useful in Section~\ref{sec:efficient-chaining} to control the variance terms all along the covering tree.
\begin{theorem}
\label{thm:Exp4-penalized}
Let $1 < \cdots < K$ be a total order on the action set $\cK = \{1,\dots,K\}$. Let $E,F > 0$. Consider any penalty $\pen_t \in \R^K$ measurable with respect to $(I_1,\ldots,I_{t-1})$ at time $t$. Suppose Exp4 (Algorithm~\ref{alg:exp4}) is run with one-sided full information feedback on an arbitrary loss sequence $\ell_t\in\R^K$, $t \geq 1$, with loss estimates 
\[
	\hat \ell_{t}(k) = \frac{\ell_{t}(k) - \ell_{t}\big(\max \cK_t\big) + E }{P^*_t(k)} \, \indicator{I_t \leq k} + \pen_t(k) + F  \qquad \text{for $k=1,\dots,K$}
\] 
and constant learning rate $\eta > 0$. If we have, for all $t \geq 1$, almost surely,
\[
	\max_{t=1,\dots,T} \max_{k,k' \in \cK_t} \big|\ell_t(k) - \ell_t(k')\big| \leq E \qquad \textrm{and} \qquad \max_{t=1,\dots,T} \max_{k \in \cK_t} \big|\pen_t(k)\big| \leq F~,
\]
then, for all $T \geq 1$,
\begin{align*}
	\max_{j=1,\dots,K} &\E\!\left[\sum_{t=1}^T p_t \cdot \bigl(\ell_t +\pen_t\bigr) - \sum_{t=1}^T \xi_t(j,\cdot) \cdot \bigl(\ell_t +\pen_t\bigr)\right] \\
&\leq \frac{\ln N}{\eta} + 4 \eta T F^2 + 4 \eta E^2  \sum_{t=1}^T \E\!\left[\sum_{k=1}^K  \frac{p_t(k)}{P^*_t(k)}\right] \,.
\end{align*}
\end{theorem}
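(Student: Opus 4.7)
The plan is to parallel the proof of Theorem~\ref{thm:Exp4-node}, but to exploit two design features of the penalized estimator. First, the shifts $E$ and $F$ make $\hat\ell_t(k) \geq 0$ on the effective support $\cK_t$: the fraction is nonnegative because $|\ell_t(k) - \ell_t(\max\cK_t)| \leq E$, and $\pen_t(k) + F \geq 0$ because $|\pen_t(k)| \leq F$. Second, the shifts $E$, $F$, and $-\ell_t(\max\cK_t)$ are scalars independent of the expert index, so they will cancel out in any regret difference. Nonnegativity lets us invoke the standard Hedge bound with \emph{constant} learning rate (Lemma~\ref{lem:boundEWA-appe}) in place of the adaptive-rate bound used in Theorem~\ref{thm:Exp4-node}.

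Applying Lemma~\ref{lem:boundEWA-appe} to the expert losses $\tilde\ell_t(j) = \xi_t(j,\cdot) \cdot \hat\ell_t \geq 0$ yields, for every expert $j$,
\[
    \sum_{t=1}^T q_t \cdot \tilde\ell_t - \sum_{t=1}^T \tilde\ell_t(j) \leq \frac{\ln N}{\eta} + \frac{\eta}{2} \sum_{t=1}^T \sum_{i=1}^N q_t(i)\, \tilde\ell_t(i)^2,
\]
together with the identity $q_t \cdot \tilde\ell_t = p_t \cdot \hat\ell_t$. Since $\ell_t$, $\pen_t$, $P^*_t$, $\xi_t$, $p_t$, and $\max\cK_t$ are all measurable with respect to $(I_1,\ldots,I_{t-1})$ and $\E_{t-1}[\indicator{I_t \leq k}] = P^*_t(k)$, I would compute
\[
    \E_{t-1}[\hat\ell_t(k)] = \ell_t(k) + \pen_t(k) + \bigl(E + F - \ell_t(\max\cK_t)\bigr).
\]
Because the parenthesized scalar is $k$-independent, it integrates to the same value against both $p_t$ and $\xi_t(j,\cdot)$, hence cancels in
\[
    \E[p_t \cdot \hat\ell_t] - \E[\tilde\ell_t(j)] = \E\bigl[p_t \cdot (\ell_t + \pen_t) - \xi_t(j,\cdot) \cdot (\ell_t + \pen_t)\bigr],
\]
which is exactly the quantity on the left-hand side of the theorem.

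To control the residual variance term $\tfrac{\eta}{2}\E[\sum_t \sum_i q_t(i)\tilde\ell_t(i)^2]$, I would first apply Jensen's inequality and the identity $\sum_i q_t(i)\xi_t(i,k) = p_t(k)$ to bound it by $\tfrac{\eta}{2}\E[\sum_t p_t \cdot \hat\ell_t^2]$. Next, using $(a+b)^2 \leq 2a^2 + 2b^2$ together with $0 \leq \ell_t(k) - \ell_t(\max\cK_t) + E \leq 2E$ and $|\pen_t(k) + F| \leq 2F$ on $\cK_t$ yields
\[
    \hat\ell_t(k)^2 \leq \frac{8 E^2}{P^*_t(k)^2}\indicator{I_t \leq k} + 8 F^2,
\]
and taking conditional expectation (using $\E_{t-1}[\indicator{I_t \leq k}] = P^*_t(k)$) gives $\E_{t-1}[p_t \cdot \hat\ell_t^2] \leq 8 E^2 \sum_k p_t(k)/P^*_t(k) + 8 F^2$. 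Combining all three steps produces the stated bound, with constants $4\eta E^2$ and $4\eta T F^2$.

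No genuinely difficult step is anticipated. The main subtlety will be verifying that the shift $E + F - \ell_t(\max\cK_t)$ is indeed both expert-independent and past-measurable, so that the cancellation in the second step is valid; the only quantitative price relative to Theorem~\ref{thm:Exp4-node} is a factor $2$ from the $(a+b)^2$ split and the additional $4\eta T F^2$ term arising from the bounded penalty contribution to the variance.
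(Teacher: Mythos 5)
Your proposal is correct and mirrors the paper's proof step by step: nonnegativity of $\hat\ell_t$ on $\cK_t$ to apply Lemma~\ref{lem:boundEWA-appe} with constant $\eta$, cancellation of the expert-independent shift $E+F-\ell_t(\max\cK_t)$ in the conditional expectation, and the Jensen plus elementary quadratic split to bound the variance term. The only cosmetic difference is that the paper applies $(a+b+c+d)^2 \leq 4(a^2+b^2+c^2+d^2)$ in one step while you apply $(a+b)^2 \leq 2a^2+2b^2$ after grouping, but both give the same constants $8E^2$ and $8F^2$ and hence the same final bound.
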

\begin{proof}
From the definition of $\cK_t$ and because $E$ and $F$ are upper bounds on the losses and penalties associated with actions in $\cK_t$, 
\begin{align*}
	\min_{j=1,\dots,N} \tilde{\ell}_t(j)
& =
	\min_{j=1,\dots,N} \xi_t(j,\cdot) \cdot \hat{\ell}_t
\geq \min_{k \in \cK_t} \hat{\ell}_t(k) \geq 0 \,.
\end{align*}
We can thus use the regret bound for Hedge with weight vectors $q_t$, constant learning rate $\eta$, and nonnegative losses $\tilde{\ell}_t(k)$ (see~Lemma~\ref{lem:boundEWA-appe} in Appendix~\ref{sec:EWA-nonnegative}):
\begin{equation}
\label{eq:EWAregret-bis}
	\sum_{t=1}^T q_t\cdot\tilde{\ell}_t
\leq
	\min_{j=1,\dots,N} \sum_{t=1}^T \tilde{\ell}_t(j) + \frac{\ln N}{\eta} + \frac{\eta}{2}\sum_{t=1}^T \sum_{j=1}^N q_t(j) \tilde{\ell}_t(j)^2 \,.
\end{equation}
Note that $q_t\cdot\tilde{\ell}_t = p_t\cdot\hat{\ell}_t$. Moreover, $K_t = \max \cK_t$ and $\pen_t$ are measurable with respect to $(I_1,\ldots,I_{t-1})$, implying
$
	\E_{t-1}\big[ \hat \ell_t(k) \big] = \ell_t(k) - \ell_t(K_t) + E + \pen_t(k) + F
$. 
Therefore, taking the expectation on both sides of~\eqref{eq:EWAregret-bis} with respect to the random draw of $I_1,\dots,I_T$, we get
\begin{align}
\nonumber
	\max_{j=1,\dots,N} &\E\!\left[\sum_{t=1}^T p_t \cdot \bigl(\ell_t + \pen_t \bigr) - \sum_{t=1}^T \xi_t(j,\cdot) \cdot \bigl(\ell_t + \pen_t \bigr)\right]
\\ &\leq
\label{eq:Rtexp4-bis}
	\frac{\ln N}{\eta} + \frac{\eta}{2}\sum_{t=1}^T \E\!\left[\sum_{j=1}^N q_t(j) \Big(\xi_t(j,\cdot) \cdot \hat{\ell}_t\Big)^2\right] \,.
\end{align}
The variance term can be upper bounded as follows. Using Jensen's inequality,
\begin{align*}
	\sum_{j=1}^N q_t(j) \Big(\xi_t(j,\cdot) \cdot \hat{\ell}_t\Big)^2  
&\leq
	\sum_{j=1}^N q_t(j) \sum_{k=1}^K \xi_t(j,k) \hat{\ell}_t(k)^2
\\&=
	\sum_{k=1}^K \sum_{j=1}^N q_t(j) \xi_t(j,k) \left(\frac{\ell_t(k) - \ell_t(K_t) + E}{P^*_t(k)} \, \indicator{I_t \leq k} + \pen_t(k) + F \right)^2 
\\&\leq
	8 F^2 + 8 E^2 \sum_{k=1}^K  \frac{p_t(k)}{P^*_t(k)^2} \, \indicator{I_t \leq k} \,,
\end{align*}
where the last inequality is because $\big| \ell_t(k) - \ell_t(K_t) \big| \leq E$ and $\big|\pen_t(k)\big| \leq F$ when $\xi_t(j,k)>0$ and because $(a+b+c+d)^2 \leq 4 \bigl(a^2+b^2+c^2+d^2\bigr)$. Therefore, recalling $P^*_t(k) = \Prob_{t-1}(I_t \leq k)$,
\[
	\E_{t-1}\!\left[ \sum_{j=1}^N q_t(j) \big( \xi_t(j,k) \cdot \hat \ell_{t} \big)^2 \right]
\leq
	8 F^2 + 8 E^2 \sum_{k=1}^K  \frac{p_t(k)}{P^*_t(k)} \,.
\]
Substituting the last bound in~\eqref{eq:Rtexp4-bis} concludes the proof.
\end{proof}
Note that when $\pen_t \equiv 0$ and $p_t^*(1) \geq \gamma$ almost surely for all $t \geq 1$, Theorem~\ref{thm:Exp4-penalized} above used with $\eta = (2 E)^{-1} \sqrt{\gamma \ln(N)/T}$ yields a regret bound of $4 E \sqrt{T \ln(N)/ \gamma}$, similarly to Theorem~\ref{thm:Exp4-node}.

We use yet another corollary in Section~\ref{sec:efficient-chaining}. It follows directly from the choice of $\pen_t(k) = - \alpha/P_t^*(k)$ and $F = \alpha/\gamma$ in Theorem~\ref{thm:Exp4-penalized}.
\begin{corollary}
\label{cor:Exp4-penalized}
Let $1 < \cdots < K$ be a total order on the action set $\cK = \{1,\dots,K\}$. Let $E, \alpha, \gamma > 0$ be three parameters. Suppose Exp4 (Algorithm~\ref{alg:exp4}) is run on an arbitrary loss sequence $\ell_t\in\R^K$, $t \geq 1$, with loss estimates 
\[
	\hat \ell_{t}(k) = \frac{\ell_{t}(k) - \ell_{t}\big(\max \cK_t\big) + E}{P^*_t(k)} \, \indicator{I_t \leq k} - \frac{\alpha}{P^*_t(k)} + \frac{\alpha}{\gamma}  \qquad \text{for $k=1,\dots,K$}
\] 
and constant learning rate $\eta > 0$. If we have, for all $t \geq 1$, almost surely, 
\[
\max_{t=1,\dots,T} \max_{k,k' \in \cK_t} \big|\ell_t(k) - \ell_t(k')\big| \leq E \qquad \textrm{and} \qquad p_t^*(1) \geq \gamma \,,
\]
then, for all $T \geq 1$,
\begin{align*}
	& \max_{j=1,\dots,K} \E\!\left[\sum_{t=1}^T \sum_{k=1}^K p_t(k) \left(\ell_t(k) - \frac{\alpha + 4 \eta E^2}{P^*_t(k)}\right) - \sum_{t=1}^T \sum_{k=1}^K \xi_t(j,k) \left(\ell_t(k) - \frac{\alpha}{P^*_t(k)}\right)\right] \\
& \qquad \leq \frac{\ln N}{\eta} + \frac{4 \eta T \alpha^2}{\gamma^2} \,.
\end{align*}
\end{corollary}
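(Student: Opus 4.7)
\begin{proofref}{Corollary~\ref{cor:Exp4-penalized}}
The plan is to apply Theorem~\ref{thm:Exp4-penalized} as a black box with the specific choice of penalty $\pen_t(k) = -\alpha/P^*_t(k)$ and bound $F = \alpha/\gamma$, and then rearrange to move the variance term from the right-hand side to the left-hand side.

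First I would verify that the hypotheses of Theorem~\ref{thm:Exp4-penalized} are satisfied. With $\pen_t(k) = -\alpha/P^*_t(k)$, the loss estimates prescribed by Theorem~\ref{thm:Exp4-penalized} become exactly the estimates $\hat\ell_t(k)$ stated in Corollary~\ref{cor:Exp4-penalized}. The measurability of $\pen_t$ with respect to $(I_1,\dots,I_{t-1})$ holds because $P^*_t$ is itself measurable with respect to that past randomness. The assumption $\max_{t,k,k'\in\cK_t}|\ell_t(k)-\ell_t(k')|\leq E$ is assumed in the corollary, and the penalty bound
\[
	\max_{k\in\cK_t}\big|\pen_t(k)\big| = \max_{k\in\cK_t}\frac{\alpha}{P^*_t(k)} \leq \frac{\alpha}{p^*_t(1)} \leq \frac{\alpha}{\gamma} = F
\]
follows immediately from the hypothesis $p^*_t(1) \geq \gamma$ and the monotonicity $P^*_t(k) \geq P^*_t(1) = p^*_t(1)$.

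With these verifications, Theorem~\ref{thm:Exp4-penalized} yields
\begin{align*}
	\max_{j=1,\dots,N} \E\Biggl[\sum_{t=1}^T p_t\cdot(\ell_t+\pen_t) &- \sum_{t=1}^T \xi_t(j,\cdot)\cdot(\ell_t+\pen_t)\Biggr] \\
	&\leq \frac{\ln N}{\eta} + 4\eta T F^2 + 4\eta E^2 \sum_{t=1}^T \E\!\left[\sum_{k=1}^K \frac{p_t(k)}{P^*_t(k)}\right].
\end{align*}
Substituting $\pen_t(k) = -\alpha/P^*_t(k)$ and $F = \alpha/\gamma$, the left-hand side becomes
\[
	\max_{j}\E\!\left[\sum_{t=1}^T\sum_{k=1}^K p_t(k)\!\left(\ell_t(k)-\frac{\alpha}{P^*_t(k)}\right) - \sum_{t=1}^T\sum_{k=1}^K \xi_t(j,k)\!\left(\ell_t(k)-\frac{\alpha}{P^*_t(k)}\right)\right]
\]
while $4\eta T F^2 = 4\eta T\alpha^2/\gamma^2$.

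The final step is simply to rearrange: the residual variance term $4\eta E^2 \sum_t \E[\sum_k p_t(k)/P^*_t(k)]$ on the right-hand side depends only on the algorithm's distributions $p_t$ (and not on the comparator $j$), so I can subtract it from both sides and absorb it into the first sum on the left. This replaces $\alpha/P^*_t(k)$ by $(\alpha + 4\eta E^2)/P^*_t(k)$ in the $p_t$-weighted sum (and only there), yielding exactly the bound stated in the corollary. No step is a real obstacle here; the only care needed is to check that the rearrangement only modifies the $p_t$-term and leaves the comparator term untouched, which is why the penalty structure $\pen_t(k) = -\alpha/P^*_t(k)$ (independent of $k$'s affiliation to the algorithm) was chosen.
\end{proofref}
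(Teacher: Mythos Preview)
Your proposal is correct and matches the paper's own proof, which simply says the corollary follows from Theorem~\ref{thm:Exp4-penalized} with $\pen_t(k) = -\alpha/P^*_t(k)$ and $F = \alpha/\gamma$. You have filled in exactly the intended details: verifying the penalty bound via $P^*_t(k) \geq p^*_t(1) \geq \gamma$, applying the theorem, and moving the $j$-independent variance term to the left to turn $\alpha$ into $\alpha + 4\eta E^2$ in the $p_t$-weighted sum only.
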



\section{Missing Proofs from Section \ref{sec:chaining}}\label{as:missingproofs}
We prove Theorem~\ref{thm:HierarchicalExp4} and Corollary \ref{cor:HierearchicalExp4}. As we said in the main text, the key contribution of chaining is that it allows us to sum up local regret bounds scaling as the range of the local losses. This is possible because of the richer feedback structure induced by the total order on the actions. We first state a lemma indicating that the losses associated with neighboring nodes are indeed close to one another. Recall that $M$ is the depth of $\cT_{\cF}$.
\begin{lemma} \label{lem:treeDistance}
Let $v \in \cT$ be any node at level $m \in \{0,1,\dots,M-1\}$. Then all leaves $w,w' \in \cL_v$ satisfy $\big\|f_w - f_{w'}\big\|_\infty \leq 2^{-m+2}$. Therefore, $\big|\ell_t(y_{i_t(w)}) - \ell_t(y_{i_t(w')})\big| \leq 2^{-m+3}$ for all $t \geq 1$.
\end{lemma}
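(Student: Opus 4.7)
\begin{proofref}{Lemma~\ref{lem:treeDistance} (plan)}
The plan is to exploit the covering properties of the $\cF_k$'s hierarchically along the tree, and then push the bound through the discretization $\cK$ and the $1$-Lipschitzness of $\ell_t$.

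First, I would control the distance between any leaf $w \in \cL_v$ and the root $v$ of its subtree. By definition, the parent of a node $u$ at depth $k+1$ is chosen in $\cF_k$ as the nearest element to $f_u$; since $\cF_k$ is a $(2^{-k})$-covering of $\cF$, this nearest element lies within $2^{-k}$ of $f_u$ in sup norm. Writing the path from $v$ to $w$ as $v = w_m \to w_{m+1} \to \dots \to w_M = w$, the triangle inequality gives
\[
	\bigl\|f_w - f_v\bigr\|_\infty
\leq
	\sum_{k=m}^{M-1} \bigl\|f_{w_{k+1}} - f_{w_k}\bigr\|_\infty
\leq
	\sum_{k=m}^{M-1} 2^{-k}
<
	2^{-m+1}~.
\]
A second triangle inequality applied to two leaves $w, w' \in \cL_v$ then yields
$
\|f_w - f_{w'}\|_\infty \leq 2\cdot 2^{-m+1} = 2^{-m+2}
$,
which is the first claim.

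For the loss bound, I would use the definition of $i_t(w)$ as the nearest index in $\cK = \{0,2^{-M},\dots,1-2^{-M}\}$ to $f_w(x_t)$, so that $|y_{i_t(w)} - f_w(x_t)| \leq 2^{-M}$, and similarly for $w'$. Since $m \leq M-1$, we have $2^{-M} \leq 2^{-m-1}$, and combining with the first part yields
\[
	\bigl|y_{i_t(w)} - y_{i_t(w')}\bigr|
\leq
	\bigl|y_{i_t(w)} - f_w(x_t)\bigr| + \bigl|f_w(x_t) - f_{w'}(x_t)\bigr| + \bigl|f_{w'}(x_t) - y_{i_t(w')}\bigr|
\leq
	2^{-m} + 2^{-m+2}
\leq
	2^{-m+3}~.
\]
The conclusion follows from the $1$-Lipschitzness of $\ell_t$.

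There is no real obstacle here: the entire argument is a double application of the triangle inequality, once along the tree (using that each $\cF_k$ is a $(2^{-k})$-covering), and once in the action space (using the resolution $2^{-M}$ of $\cK$). The only point to watch is keeping the constant at $2^{-m+3}$ rather than something larger, which requires using $M \geq m+1$ when bounding the discretization error by $2^{-m-1}$.
\end{proofref}
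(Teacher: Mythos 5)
Your proof is correct and takes essentially the same approach as the paper: a triangle inequality along the tree path from $v$ to the leaf $w$ (using the $(2^{-k})$-covering property of each $\cF_k$), a second triangle inequality between two leaves sharing the ancestor $v$, and a final triangle inequality in action space combining the $2^{-M}$ discretization error of $\cK$ with the function distance, followed by Lipschitzness of $\ell_t$. The constant bookkeeping (using $M \geq m+1$) also matches the paper's argument.
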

\begin{proof}
Consider a path $v = v_m \rightarrow v_{m+1} \rightarrow \cdots \rightarrow v_M = w$ joining $v$ to leaf $w$ in the tree. For each $k=m,\dots,M-1$, since $\cF_k$ is a $(2^{-k})$-covering of $\cF$ in sup norm, we have $\|f_{v_k} - f_{v_{k+1}}\|_\infty \leq 2^{-k}$. Therefore,
\begin{equation*}
	\|f_v - f_w\|_\infty = \left\|\sum_{k=m}^{M-1} (f_{v_k} - f_{v_{k+1}}) \right\|_\infty  
		 \leq \sum_{k=m}^{M-1} \|f_{v_k} - f_{v_{k+1}}\|_\infty \leq \sum_{k=m}^{M-1} 2^{-k} \leq 2^{-m+1}\,.
\end{equation*}
Therefore, since $w'\in \cL_v$ as well,
	$
	\|f_w - f_{w'}\|_\infty = \|f_v - f_w\|_\infty  +  \|f_v - f_{w'}\|_\infty \leq 2^{-m+2}
	$,
which proves the first inequality. Now, since $\ell_t$ is $1$-Lipschitz,
\begin{align*}
	\big|\ell_t(y_{i_t(w)}) - \ell_t(y_{i_t(w')})\big| 
&\leq
	\big|y_{i_t(w)},y_{i_t(w')}\big|
\leq
	\big(2^{-M+1} + \|f_w -f_{w'}\|_\infty \big)
\\&\leq
	\big(2^{-M+1} + 2^{-m+2}\big)
\leq
	2^{-m+3}
\end{align*}
where the second inequality uses the definition of $i_t(v)$ in Section~\ref{sec:chaining} and the fact that $\cK$ is a $(2^{-M})$-covering of $[0,1]$. This concludes the proof.
\end{proof}
We are now ready to prove Theorem~\ref{thm:HierarchicalExp4} from the main text.
\begin{proofref}{Theorem~\ref{thm:HierarchicalExp4}}
Each node $v$ at level $m=0,\dots,M-1$ is running an instance of the variant of Exp4 described in Algorithm~\ref{alg:exp4} (Appendix \ref{sec:Exp4range}) over expert set $\cC_v$, where the advice of $w\in\cC_v$ is $\xi_t(w,\cdot) = p_t(w,\cdot)$, and effective action set $\cK_t = \theset{i}{(\exists w \in \cC_v)\; p_t(w,i) > 0} = \cK_t(v)$. Note that, for any $w\in\cC_v$, the distribution $p_t(w,\cdot)$ is a mixture of actions in $\cL_w \subseteq \cL_v$, so that $\cK_t(v)  \subseteq \theset{i_t(w')}{w'\in\cL_v}$. By Lemma~\ref{lem:treeDistance}, the losses of these actions belong to a range of size $E = 2^{-m+3}$.
Since $p^*_t(1) \geq \gamma$ by definition, we are in position to apply Theorem~\ref{thm:Exp4-node} in Appendix~\ref{sec:Exp4range} with $N = |\cC_v| \le N_{m+1}$, and obtain the bound
\[
	\max_{w \in \cC_v} \E\!\left[ \sum_{t=1}^T p_t(v,\cdot) \cdot \ell_t - \sum_{t=1}^T p_t(w,\cdot) \cdot \ell_t \right]
\leq
	2^{-m+5}\sqrt{\frac{T \ln N_{m+1}}{\gamma}} + \frac{2^{-m+3}}{\gamma}(4\ln N_{m+1} + 1)
\]
(for simplicity, we use $\ell_t$ to denote the vector of elements $\ell_t(y_i)$ for $i=1,\dots,K$). Now consider the path $v_0 \to v_{1} \to \dots \to v_M = w^*$ from the root $v_0$ to the leaf $v_M = w^*$ minimizing $\ell_1(y_{i_1(w)})+\cdots+\ell_T(y_{i_T(w)})$ over $w\in\cL$. Recalling that $p_t(w,i) = \indicator{i=i_t(w)}$ for any leaf $w$, we get
\begin{align}
\nonumber
	\E\!\left[\sum_{t=1}^T p_t(v_0,\cdot) \cdot \ell_t\right] &- \min_{w \in \cL} \sum_{t=1}^T \ell_{t}(y_{i_t(w)})
\\&=
\nonumber
	\sum_{m=0}^{M-1}\E\!\left[ \sum_{t=1}^T p_t(v_m,\cdot) \cdot \ell_t - \sum_{t=1}^T p_t(v_{m+1},\cdot) \cdot \ell_t \right]
\\&\leq
\label{eq:regretNode}
	2^5\sum_{m=0}^{M-1}2^{-m}\left(\sqrt{\frac{T \ln N_{m+1}}{\gamma}} + \frac{1}{\gamma}(\ln N_{m+1} + 1)\right)~.
\end{align}
Now, recalling that $I_t \sim p_t^*$ and $p_t^*(i)\ell_t(y_i) = (1-\gamma)p_t(v_0,i)\ell_t(y_i) + \gamma\,\indicator{i = 1} \ell_t(y_1)$ we get
\begin{align*}
	\E\!&\left[\sum_{t=1}^T p^*_t \cdot \ell_t\right]
\leq
	\min_{w \in \cL} \left\{\sum_{t=1}^T \ell_t\bigl(y_{i_t(w)}\bigr) + \gamma \sum_{t=1}^T \Bigl(\ell_t(y_1) - \ell_t\bigl(y_{i_t(w)}\bigr) \Bigr) \right\}
\\ &+
	2^5\sum_{m=0}^{M-1}2^{-m}\left(\sqrt{\frac{T \ln N_{m+1}}{\gamma}} + \frac{1}{\gamma}(\ln N_{m+1} + 1)\right)~.
\end{align*}
Now clearly $\bigl|\ell_t(y_1) - \ell_t\bigl(y_{i_t(w)}\bigr)\bigr| \leq 1$. Moreover, because $\cL$ is a $(2^{-M}$)-covering of $\cF$ and $\cK$ is a $(2^{-M}$)-covering of $[0,1]$, for any $f \in \cF$ there exists $w \in \cL$ such that $\bigl| \ell_t\bigl(y_{i_t(w)}\bigr) - \ell_t\bigl(f(x_t)\bigr) \bigr| \leq \bigl| y_{i_t(w)} - f(x_t)\bigr| \leq 1 \bigl| y_{i_t(w)} - f_w(x_t)+ f_w(x_t) - f(x_t)\bigr| \leq 2^{1-M}$ by definition of $i_t(w)$. Hence,
\begin{align}
\nonumber
	\E\!\left[\sum_{t=1}^T p^*_t \cdot \ell_t\right] &- \inf_{f \in \cF} \sum_{t=1}^T \ell_{t}\bigl(f(x_t)\bigr)
\\ &\leq
\label{eq:regsum}
	(2^{1-M} + \gamma)T
	+ 2^5\sum_{m=0}^{M-1}2^{-m}\left(\sqrt{\frac{T \ln N_{m+1}}{\gamma}}
	+ \frac{1}{\gamma}(\ln N_{m+1} + 1)\right)~.
\end{align}
We now use $N_{m+1} = \cN_\infty(\cF,2^{-(m+1)})$, and follow the standard chaining approach approximating the sums by integrals,
\begin{align}
\nonumber
	\sum_{m=0}^{M-1} 2^{-m} \sqrt{\ln N_{m+1}} 
&=
	4 \sum_{m=0}^{M-1} \left(2^{-(m+1)} - 2^{-(m+2)}\right) \sqrt{\ln \cN_\infty(\cF,2^{-(m+1)})}
\\&\leq
\label{eq:sumint1}
	4 \sum_{m=0}^{M-1} \int_{2^{-(m+2)}}^{2^{-(m+1)}} \sqrt{\ln \cN_{\infty}(\cF,\epsilon)}\dd \epsilon
\leq
	4 \int_{\gamma/2}^{1/2} \sqrt{\ln \cN_\infty(\cF,\epsilon)}\dd \epsilon
\end{align}
where the second inequality is by monotonicity of $\epsilon \mapsto \ln \cN_\infty(\cF,\epsilon)$ and the last inequality follows from $\gamma \leq 2^{-M}$ due to $M = \lfloor \ln_2(1/\gamma) \rfloor$. Similarly,
\begin{equation}
\label{eq:sumint2}
    \sum_{m=0}^{M-1} 2^{-m} \ln N_{m+1}
\leq
	4 \int_{\gamma/2}^{1/2} \ln \cN_{\infty}(\cF,\epsilon) \dd \epsilon~.
\end{equation}
We conclude the proof by substituting~\eqref{eq:sumint1} and~\eqref{eq:sumint2} into~\eqref{eq:regsum}, and using  $2^{-M} \leq 2 \gamma$.
\end{proofref}

\begin{proofref}{Corollary \ref{cor:HierearchicalExp4}}
The metric entropy satisfies $\ln\cN_\infty(\cF,\epsilon) = \cO\big(\epsilon^{-d}\big)$. Therefore, substituting into the regret bound of Theorem~\ref{thm:HierarchicalExp4} and computing the integrals, the regret satisfies
\[
\Reg_T(\cF)
	\leq 5T\gamma + \left\{ 
		\begin{array}{ll}
			\cO\Big(\sqrt{\frac{T}{\gamma}} + \frac{1}{\gamma}\ln\frac{1}{\gamma}\Big) & \text{if $d=1$} \\
			\cO\Big(\sqrt{\frac{T}{\gamma}}\ln\frac{1}{\gamma} + \gamma^{-2}\Big) & \text{if $d=2$} \\
			\cO\big(\sqrt{T}\gamma^{(1-d)/2} + \gamma^{-d}\big) & \text{if $d\geq 3$.}
        \end{array} 
        \right.
 \]
Optimizing $\gamma$ for the different choices of $d$ concludes the proof.
\end{proofref}


\section{Algorithm $\textrm{HierExp4}^\star$ and proof of Theorem~\ref{thm:HierExp4star}}
\label{as:proofHierExp4star}

\subsection{Algorithm $\textrm{HierExp4}^\star$}
We first construct the $\textrm{HierExp4}^\star$ algorithm used in Theorem~\ref{thm:HierExp4star}. It is a variant of HierExp4 based on a special hierarchical covering of $\cF$ that we first define below. Recall that $\cF$ is the set of all $1$-Lipschitz functions from $[0,1]^d$ to $[0,1]$.

In the sequel, we use a dyadic discretization of the input space $[0,1]^d$. For each depth $m = 0,1,\ldots$ we define a partition of $[0,1]^d$ with $2^{m d}$ equal cubes of width $2^{-m}$. Each cube $\cX_m(\sigma_{1:m}) \subseteq [0,1]^d$ at depth $m$ is indexed by $\sigma_{1:m} = (\sigma_1,\ldots,\sigma_m) \in \{1,\ldots,2^d\}^m$. Note that the partitions at different depths are nested. We can thus represent them via a tree $\cP$ whose root $\emptyset$ is labeled by $[0,1]^d$ and whose each node $\sigma_{1:m}$ at depth $m$ is labeled with the cube $\cX_m(\sigma_{1:m})$. The children of node $\sigma_{1:m}$ are the nodes at depth $m+1$ of the form $(\sigma_1,\ldots,\sigma_m,\sigma_{m+1})$ with $\sigma_{m+1} \in \{1,\ldots,2^d\}$. They correspond to sub-cubes of $\cX_m(\sigma_{1:m})$.

\paragraph{Wavelet-like approximation of  $\cF$.} Using the above dyadic discretization $\cP$, we approximate any $f \in \cF$ with piecewise-constant functions $f_M:[0,1]^d \to [-1/2,3/2]$ of the form
\begin{equation}
\label{eq:wavelet-decomposition}
f_M(x) = \frac{1}{2} + \sum_{m=1}^M \, \sum_{\sigma_{1:m} \in \{1,\ldots,2^d\}^m} \hspace*{-6mm} 2^{-m} c_m(\sigma_{1:m}) \, \indicator{x \in \cX_m(\sigma_{1:m})} \quad \text{where $c_m(\sigma_{1:m}) \in \bigl\{ - 1, 0, 1 \bigr\}$.}
\end{equation}
As shown in the next lemma, the functions $f_M$ form a $\bigl(2^{-M}\bigr)$-covering of $\cF$ in the sup norm $\|g\|_{\infty} = \sup_{x \in [0,1]^d} \big|g(x)\big|$. We use the following important property: for any given $x \in [0,1]^d$, there exists a  unique $\sigma_{1:M} \in \{1,\ldots,2^d\}^M$ such that $x \in \cX_m(\sigma_{1:m})$ for all $m=1,\ldots,M$, so that
\begin{equation}
\label{eq:wavelet-decomposition-givenx}
f_M(x) = \frac{1}{2} + \sum_{m=1}^M 2^{-m} c_m(\sigma_{1:m}) \,.
\end{equation}
\begin{lemma}
\label{lem:approxStar}
Let $f \in \cF$ and $M \geq 1$. There exist coefficients $c_m(\sigma_{1:m}) \in \bigl\{ - 1, 0, 1 \bigr\}$ such that $f_M$ defined in \eqref{eq:wavelet-decomposition} satisfies $\| f_M - f \|_{\infty} \leq 2^{-M}$.
\end{lemma}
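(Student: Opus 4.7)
The plan is to build the coefficients $c_m(\sigma_{1:m})$ top-down by greedy rounding. For each cube, let $\xbar_{\sigma_{1:m}}$ denote its center, and define the running value $v_m(\sigma_{1:m}) := \tfrac{1}{2} + \sum_{k=1}^{m} 2^{-k} c_k(\sigma_{1:k})$ (so that $v_0 := \tfrac{1}{2}$ and $v_m(\sigma_{1:m})$ is the constant that the partial sum $f_m$ of~(\ref{eq:wavelet-decomposition}) takes on $\cX_m(\sigma_{1:m})$). Two elementary facts about the sup norm on $[0,1]^d$ will be used throughout: every $x \in \cX_m(\sigma_{1:m})$ satisfies $\|x - \xbar_{\sigma_{1:m}}\|_\infty \leq 2^{-(m+1)}$, and any child center $\xbar_{\sigma_{1:m}}$ is at $\|\cdot\|_\infty$-distance exactly $2^{-(m+1)}$ from its parent center $\xbar_{\sigma_{1:m-1}}$ (each coordinate of the child center differs from the parent's by $\pm 2^{-(m+1)}$).

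Given $v_{m-1}(\sigma_{1:m-1})$, I choose $c_m(\sigma_{1:m}) \in \{-1,0,1\}$ to minimize $\big|v_{m-1}(\sigma_{1:m-1}) + 2^{-m} c_m(\sigma_{1:m}) - f(\xbar_{\sigma_{1:m}})\big|$ — equivalently, I round $\delta_m := 2^{m}\bigl[f(\xbar_{\sigma_{1:m}}) - v_{m-1}(\sigma_{1:m-1})\bigr]$ to the nearest integer in $\{-1,0,1\}$. The core claim, proved by induction on $m$, is that
\[
\big|v_m(\sigma_{1:m}) - f(\xbar_{\sigma_{1:m}})\big| \leq 2^{-(m+1)} \qquad \text{for every } m = 0,1,\ldots,M \text{ and every } \sigma_{1:m}.
\]
The base case $m=0$ is immediate since $v_0 = \tfrac{1}{2}$ and $f(\xbar_\emptyset) \in [0,1]$. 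For the inductive step, the hypothesis at depth $m-1$, the 1-Lipschitzness of $f$, and the second geometric fact give
\[
\big|f(\xbar_{\sigma_{1:m}}) - v_{m-1}(\sigma_{1:m-1})\big| \leq 2^{-m} + 2^{-(m+1)} = \tfrac{3}{2} \cdot 2^{-m},
\]
so $|\delta_m| \leq 3/2$. Rounding any real number in $[-3/2,3/2]$ to the nearest integer in $\{-1,0,1\}$ leaves a residual of absolute value at most $1/2$, which rescaled back by $2^{-m}$ is precisely the desired bound $2^{-(m+1)}$.

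To conclude, pick any $x \in [0,1]^d$ and let $\sigma_{1:M}$ be the unique path with $x \in \cX_M(\sigma_{1:M})$. By~(\ref{eq:wavelet-decomposition-givenx}), $f_M(x) = v_M(\sigma_{1:M})$, so the inductive bound at $m=M$ together with Lipschitzness and the first geometric fact yields
\[
\big|f_M(x) - f(x)\big| \leq \big|v_M(\sigma_{1:M}) - f(\xbar_{\sigma_{1:M}})\big| + \big|f(\xbar_{\sigma_{1:M}}) - f(x)\big| \leq 2 \cdot 2^{-(M+1)} = 2^{-M}.
\]
Finally, $|v_m(\sigma_{1:m}) - \tfrac{1}{2}| \leq \sum_{k=1}^m 2^{-k} \leq 1$, so $f_M$ automatically takes values in $[-1/2, 3/2]$ as required by~(\ref{eq:wavelet-decomposition}). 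I do not anticipate any serious obstacle: the whole argument reduces to the one-line rounding case analysis at each level, which goes through provided the residual at level $m$ is kept at most $3/2$ in units of $2^{-m}$, and this is exactly what the doubling of the bound from $2^{-m}$ to $\tfrac{3}{2}\cdot 2^{-m}$ in the inductive step allows.
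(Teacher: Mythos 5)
Your proof is correct and takes essentially the same approach as the paper: induct on the depth $m$ to show that the running value at the current cube center approximates $f$ there to within $2^{-(m+1)}$, using 1-Lipschitzness of $f$ and the fact that child and parent centers are $2^{-(m+1)}$-apart, then conclude with one more application of Lipschitzness inside the depth-$M$ cube. The paper's "project $f_{M-1}(x_M(\sigma_{1:M})) + c/2^M$ onto $f(x_M(\sigma_{1:M}))$" is exactly your rounding of $\delta_m$ to the nearest integer in $\{-1,0,1\}$, so the two arguments coincide up to notation.
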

\begin{proof} We denote the center of each cube $\cX_m(\sigma_{1:m})$ by $x_m(\sigma_{1:m})$ and prove by induction on $M \geq 1$ that there exist coefficients $c_m(\sigma_{1:m})$, $m = 0, \ldots,M$, such that
\begin{equation}
\Big| f_M\bigl(x_M(\sigma_{1:M})\bigr) - f\bigl(x_M(\sigma_{1:M})\bigr) \Big| \leq 2^{-(M+1)} \,.
\label{eq:approx-centers}
\end{equation}
This directly yields the conclusion, since every $x \in \cX_M(\sigma_{1:M})$ is $2^{-(M+1)}$-close to the center $x_M(\sigma_{1:M})$ in sup norm and $f$ is $1$-Lipschitz, which by~\eqref{eq:approx-centers} and $f_M(x)=f_M\bigl(x_M(\sigma_{1:M})\bigr)$ implies
\begin{align*}
\big| f_M(x) - f(x) \big|  & \leq \big| f_M\bigl(x_M(\sigma_{1:M})\bigr) - f\bigl(x_M(\sigma_{1:M})\bigr) \big| + \big| f\bigl(x_M(\sigma_{1:M})\bigr) - f(x) \big| \\
& \leq 2^{-(M+1)}+2^{-(M+1)} = 2^{-M} \,.
\end{align*}
We now carry out the induction. For $M=1$ and $\sigma_1 \in \{1,\ldots,2^d\}$, we set
\[
c_1(\sigma_1) \in \argmin_{c \in \{-1,0,1\}} \left| \frac{1}{2} + \frac{c}{2} - f\bigl(x_1(\sigma_1)\bigr) \right| \,,
\]
which corresponds to projecting the value of $f$ at the center $x_1(\sigma_1)$ onto the coarse grid $\{0,1/2,1\}$. Therefore $\big| 1/2 + c_1(\sigma_1)/2 - f(x_1(\sigma_1)) \bigr| \leq 1/4$, which implies~\eqref{eq:approx-centers} by~\eqref{eq:wavelet-decomposition-givenx}.

Now, let $M \geq 2$ and assume that there exist coefficients $c_m(\sigma_{1:m})$,  $m = 0, \ldots,M-1$, such that~\eqref{eq:approx-centers} holds true at level $M-1$. We complete these coefficients at level $M$ as follows: for all $\sigma_{1:{M}} \in \{1,\dots,2^d\}^M$, we set
\[
        c_M(\sigma_{1:M}) \in \argmin_{c \in \{-1,0,1\}} \left|f_{M-1}\big(x_{M}(\sigma_{1:M})\big) + \frac{c}{2^M} - f\bigl(x_{M}(\sigma_{1:M})\bigr) \right| \,.
\]
Note that by~\eqref{eq:wavelet-decomposition-givenx} and $x_M(\sigma_{1:M}) \in \cX_{M}(\sigma_{1:M}) \subset \cX_{M-1}(\sigma_{1:(M-1)})$, we have
\begin{equation}
f_{M}\big(x_{M}(\sigma_{1:M})\big) = f_{M-1}\big(x_{M}(\sigma_{1:M})\big) +  \frac{c_M(\sigma_{1:M})}{2^{M}} \,.
\label{eq:approx-recursion}
\end{equation}
Therefore, the definition of $c_M(\sigma_{1:M})$ above corresponds to making $f_{M}\big(x_{M}(\sigma_{1:M})\big)$ as close to $f\big(x_{M}(\sigma_{1:M})\big)$ as possible. Now, note that $f_{M-1}$ is constant over $\cX_{M-1}(\sigma_{1:(M-1)})$, so that the difference $\Delta_{M-1} \eqdef f_{M-1}\big(x_{M}(\sigma_{1:M})\big)  - f\bigl(x_{M}(\sigma_{1:M})\bigr)$ satisfies
\begin{align*}
     & \big| \Delta_{M-1} \big| = \left|f_{M-1}\big(x_{M-1}(\sigma_{1:(M-1)})\big) - f\bigl(x_{M}(\sigma_{1:M})\bigr) \right| \\
     & \leq \Big|f_{M-1}\big(x_{M-1}(\sigma_{1:(M-1)})\big) - f\bigl(x_{M-1}(\sigma_{1:(M-1)})\bigr) \Big| + \Big| f\bigl(x_{M-1}(\sigma_{1:(M-1)})\bigr) - f\bigl(x_{M}(\sigma_{1:M})\bigr) \Big| \\
     & \stackrel{\eqref{eq:approx-centers}}{\leq} 2^{-M}  + \big\|x_{M-1}(\sigma_{1:(M-1)}) - x_{M}(\sigma_{1:M})\big\|_\infty \\
     & = 2^{-M} + 2^{-(M+1)}\,,
\end{align*}
where the last inequality is by \eqref{eq:approx-centers} at level $M-1$, and where the last equality is by comparison of two cube centers at subsequent depths.

To conclude the proof, we note that the bound $\big|\Delta_{M-1}\big| \leq 2^{-M} + 2^{-(M+1)}$ implies the existence of $c_M(\sigma_{1:M}) \in \{-1,0,1\}$ such that $\big| f_M\bigl(x_M(\sigma_{1:M})\bigr) - f\bigl(x_M(\sigma_{1:M})\bigr) \big| \leq 2^{-(M+1)}$, as required to finish the induction. We can indeed consider three possible cases. If $\big|\Delta_{M-1}\big| \leq 2^{-(M+1)}$, then setting $c_M(\sigma_{1:M}) = 0$ concludes the proof (by~\eqref{eq:approx-recursion}). If $\Delta_{M-1} > 2^{-(M+1)}$, then $\Delta_{M-1}$ lies in the interval $\bigl[2^{-(M+1)}, 2^{-M} + 2^{-(M+1)} \bigr]$, so that setting $c_M(\sigma_{1:M}) = -1$ also concludes the proof (using~\eqref{eq:approx-recursion} again). Similarly we set $c_M(\sigma_{1:M}) = 1$ when $\Delta_{M-1} < - 2^{-(M+1)}$.
\end{proof}
We are now ready to define our hierarchical covering of $\cF$. 

\paragraph{Construction of $\cT_{\cF}^{\star}$.} We build a tree $\cT_{\cF}^{\star}$ such that the value of any function $f_M$ at any point $x \in [0,1]^d$ ---as given by \eqref{eq:wavelet-decomposition-givenx}--- can be computed by following a path from the root to a leaf of $\cT_{\cF}^{\star}$. Unlike Section~\ref{sec:chaining}, this tree is not labeled by functions but instead by either values in $\R$ or cubes $\cX_m(\sigma_{1:m}) \subseteq [0,1]^d$, as illustrated in Figure~\ref{fig:tree2}. More precisely, our tree $\cT_{\cF}^{\star}$ is composed of two types of nodes:
\begin{itemize}
	\item \textit{Nodes at depths $m$}. The root $v_0 = \emptyset$ (at depth $m=0$) is labeled by $1/2$. Each node at depth $m=1,2,\ldots$ is indexed by a tuple $(\sigma_1,c_1,\sigma_2,c_2,\ldots,\sigma_{m},c_m)$ with $\sigma_k \in \{1,\ldots,2^d\}$ and $c_k \in \{-1,0,1\}$; the node is labeled by $1/2+\sum_{k=1}^m 2^{-k} c_k$, which corresponds to \eqref{eq:wavelet-decomposition-givenx}.
	\item \textit{Nodes at depths $m+1/2$}. Each node at depth $m+1/2$, $m \geq 0$, is indexed by a tuple $(\sigma_1,c_1,\sigma_2,c_2,\ldots,\sigma_{m},c_m,\sigma_{m+1})$ with $\sigma_k \in \{1,\ldots,2^d\}$ and $c_k \in \{-1,0,1\}$; the node is labeled by the cube $\cX_m(\sigma_{1:m+1})$.
\end{itemize}
We connect nodes in the following natural way. Nodes $v=(\sigma_1,c_1,\sigma_2,c_2,\ldots,\sigma_{m},c_m)$ at depth~$m$ are connected to all nodes at depth $m+1/2$ of the form $(v,\sigma_{m+1})$, with $\sigma_{m+1} \in \{1,\ldots,2^d\}$. Nodes $w=(\sigma_1,c_1,\sigma_2,c_2,\ldots,\sigma_{m},c_m,\sigma_{m+1})$ at depth $m+1/2$ are connected to all nodes at depth $m+1$ of the form $(w,c_{m+1})$, with $c_{m+1} \in \{-1,0,1\}$.

We stop the tree at depth $M \geq 1$. As previously, we denote by $\cL$ the set of all the leaves of $\cT_{\cF}^{\star}$, by $\cL_v$ the set of all the leaves under $v \in \cT_{\cF}^{\star}$, and by $\cC_v$ the set of children of $v \in \cT_{\cF}^{\star}$.

\begin{figure}[!th]
\begin{center}
\resizebox{\textwidth}{!}{
\begin{tikzpicture}[level/.style={sibling distance=60mm/#1,circle,inner sep=0pt, minimum size=1cm},
  level 1/.style={sibling distance=103mm},
  level 2/.style={sibling distance=33mm},
  level 3/.style={sibling distance=20mm},
  level 4/.style={sibling distance=5mm}]

\node [circle,draw,inner sep=0pt, minimum size=1cm,fill=blue!20] (u) {$\frac{1}{2}$}
  child {node [circle,draw,fill=red!20] (a) {\footnotesize $\big[0,\frac{1}{2}\big[$}
    child {node [circle,draw,fill=blue!20] (a1) {$0$}
      child {node [circle,draw,fill=red!20] (a11) {$[0,\frac{1}{4}[$}}
      child {node [circle,draw,fill=red!20] (a12) {$[\frac{1}{4},\frac{1}{2}[$}}
    }
    child {node [circle,draw,fill=blue!20] (a2) {$\frac{1}{2}$}
      child {node [circle,draw,fill=red!20] (a21) {$[0,\frac{1}{4}[$}}
      child {node [circle,draw,fill=red!20] (a22) {$[\frac{1}{4},\frac{1}{2}[$}}
    }
    child {node [circle,draw,fill=blue!20] (a3) {$1$}
      child {node [circle,draw,fill=red!20] (a31) {$[0,\frac{1}{4}[$}}
      child {node [circle,draw,fill=red!20] (a32) {$[\frac{1}{4},\frac{1}{2}[$}}
    }
  }
  child {node [circle,draw,fill=red!20] (b) {\footnotesize $[\frac{1}{2},1[$}
    child {node [circle,draw,fill=blue!20] (b1) {$0$}
      child {node [circle,draw,fill=red!20] (b11) {$[\frac{1}{2},\frac{3}{4}[$}}
      child {node [circle,draw,fill=red!20] (b12) {$[\frac{3}{4},1]$}}
    }
    child {node [circle,draw,fill=blue!20] (b2) {$\frac{1}{2}$}
      child {node [circle,draw,fill=red!20] (b21) {$[\frac{1}{2},\frac{3}{4}[$}}
      child {node [circle,draw,fill=red!20] (b22) {$[\frac{3}{4},1]$}}
   }
    child {node [circle,draw,fill=blue!20] (b3) {$1$}
      child {node [circle,draw,fill=red!20] (b31) {$[\frac{1}{2},\frac{3}{4}[$}}
      child {node [circle,draw,fill=red!20] (b32) {$[\frac{3}{4},1]$}}
   }
};
\draw (u.south) node[below,color=blue!50]{binning};
\draw (b.east) node[right,color=red!50]{Exp4};
\draw (a.west) node[left,color=red!50]{Exp4};

\draw (a11.south) node[below]{$\vdots$};
\draw (a12.south) node[below]{$\vdots$};
\draw (a21.south) node[below]{$\vdots$};
\draw (a22.south) node[below]{$\vdots$};
\draw (a31.south) node[below]{$\vdots$};
\draw (a32.south) node[below]{$\vdots$};
\draw (b11.south) node[below]{$\vdots$};
\draw (b12.south) node[below]{$\vdots$};
\draw (b21.south) node[below]{$\vdots$};
\draw (b22.south) node[below]{$\vdots$};
\draw (b31.south) node[below]{$\vdots$};
\draw (b32.south) node[below]{$\vdots$};


 \begin{scope}[every node/.style={right}]
   \path (u  -| b32) ++(15mm,0) node {} ++(5mm,0) node {$\color{blue!50} \textrm{depth} \quad 0$};
   \path (b  -| b32) ++(15mm,0) node {} ++(5mm,0) node {$\color{red!50} \textrm{depth} \quad 0 + \frac{1}{2}$};
   \path (b3 -| b32) ++(15mm,0) node {} ++(5mm,0) node {$\color{blue!50} \textrm{depth} \quad 1$};
   \path (b32 -| b32) ++(15mm,0) node {} ++(5mm,0) node {$\color{red!50} \textrm{depth} \quad 1 + \frac{1}{2}$};
 \end{scope}

\end{tikzpicture}}
\end{center}
\caption{The structure of the efficient chaining tree in dimension $d=1$. }
\label{fig:tree2}
\end{figure}

\paragraph{Algorithm $\textrm{HierExp4}^{\star}$.} Our new bandit algorithm $\textrm{HierExp4}^{\star}$ (Algorithm~\ref{alg:HierExp4star}) is an efficient modification of HierExp4 based on the tree $\cT_{\cF}^{\star}$ defined above. $\textrm{HierExp4}^{\star}$ alternates between ``binning nodes" at depths~$m$ and ``Exp4 nodes" at depths $m+1/2$ (see Figure~\ref{fig:tree2}).
We discretize the action space $\cY = [0,1]$ with $\cK = \bigl\{k 2^{-M}: k=1,\ldots,2^M\}$ of cardinality $K=2^{M}$. Note that, after projecting onto~$\bigl[2^{-M},1\bigr]$, the set of labels of all the leaves $v \in \cL$ exactly corresponds to this discretization, i.e.,
\begin{align*}
\left\{ \min\!\left( \max\!\left( \frac{1}{2} + \sum_{m=1}^M 2^{-m} c_m, \, \frac{1}{2^M} \right), \, 1 \right): c_m \in \{-1,0,1\} \right\} = \left\{\frac{k}{2^M}: k=1,\ldots,2^{M}\right\} = \cK\,.
\end{align*}
At every round $t$, after observing context $x_t \in [0,1]^d$, the algorithm activates the root $v_0$ and all nodes $\bigl(\sigma_1,c_1,\ldots,\sigma_{m}(,c_m)\bigr)$ such that $x_t \in \cX_m(\sigma_{1:m})$; the other nodes are asleep during that round. In particular, there are $3^M$ activated leaves $v =  (\sigma_1,c_1,\ldots,\sigma_{M},c_M) \in \cL$. Each such leaf recommends the action $i_t(v) \in \{1,\ldots,K\}$ matching its label after projection onto~$\bigl[2^{-M},1\bigr]$, i.e., such that
\begin{equation}
\frac{i_t(v)}{2^M} = \min\!\left( \max\!\left( \frac{1}{2} + \sum_{m=1}^M 2^{-m} c_m, \, \frac{1}{2^M} \right), \, 1 \right) \,.
\label{eq:leaves-actions}
\end{equation}
Each node $v \in \cT_{\cF}^{\star}$ maintains a probability distribution $p_t(v,\cdot) \in \Delta(K)$ over actions in~$\cK$, which is only used and updated when $v$ is activated. The activated leaves $v \in \cL$ correspond to unit masses $p_t(v,i) = \indicator{i=i_t(v)}$. As mentioned earlier, the internal nodes $v \in \cT_{\cF}^{\star} \setminus \cL$ are of two types:
\begin{itemize}
	\item Each internal node $v = (\sigma_1,c_1,\sigma_2,c_2,\ldots,\sigma_{m},c_m)$ at depth $m=0,\ldots,M-1$ is a ``binning node". When activated, it identifies its only child $w = (v,\sigma_{m+1})$ handling $x_t$ (i.e., such that $x_t \in \cX_{m+1}(\sigma_{1:(m+1)})$) and outputs $p_t(v,\cdot) = p_t(w,\cdot)$. We denote by $\cT_{\text{bin}} \subset \cT_{\cF}^{\star} \setminus \cL$ the set of ``binning nodes".
	\item Each internal node $v = (\sigma_1,c_1,\sigma_2,c_2,\ldots,\sigma_{m},c_m,\sigma_{m+1})$ at depth $m+1/2$ is an ``Exp4 node". It runs an instance of Exp4 using the children of $v$ as experts. The variant of Exp4 we use is Algorithm~\ref{alg:exp4} (see Appendix~\ref{sec:Exp4range}) applied to penalized loss estimates (cf.~\eqref{eq:estimatedloss}), which adapts to the range of the losses and allows for a careful control of the variance terms along the tree. Let $\automEWA_v$ be the instance of the Exp4 variant run at node $v$. On all the rounds $t$ when $v$ is activated, this instance updates a distribution $q_t(v,\cdot) \in \Delta(|\cC_v|)$ over experts in $\cC_v$ and outputs $p_t(v,i) = \sum_{w \in \cC_v} q_t(v,w) p_t(w,i)$ for all actions $i = 1,\ldots,K$. We denote by $\cT_{\text{Exp4}} \subset \cT_{\cF}^{\star} \setminus \cL$ the set of ``Exp4 nodes".
\end{itemize}
Note that the root $v_0$ of $\cT_{\cF}^{\star}$ is activated at all rounds $t$. The prediction of $\textrm{HierExp4}^\star$ at time $t$ is $\hat{y}_t = I_t 2^{-M} \in \cK$, where $I_t$ is drawn according to a mixture of $p_t(v_0,\cdot)$ and a unit mass on the minimal action $i=1$. A pseudo-code for $\textrm{HierExp4}^\star$ can be found in Algorithm~\ref{alg:HierExp4star}.

\begin{algorithm2e}[!h]
\SetKwInOut{Input}{Input}
\SetKwInOut{Init}{Initialization}

\Input{Tree $\cT_{\cF}^{\star}$ with root $v_0$ and leaves $\cL$, exploration parameter $\gamma \in (0,1)$, penalization parameters $\alpha_0,\dots,\alpha_{M-1} > 0$, learning rates $\eta_0,\dots, \eta_{M-1} > 0$.
}
\Init{
Set $q_1(v,\cdot)$ to the uniform distribution in $\Delta(|\cC_v|)$ for every $v \in \cT_{\text{Exp4}}$.
} 

\For{$t=1,2,\ldots$}{
\begin{enumerate}[topsep=0pt,parsep=0pt,itemsep=0pt]
  \item Get context $x_t \in \cX$;
  \item Activate $v_0$ and all nodes $v = \big(\sigma_1,c_1,\dots,\sigma_m(,c_m)\big) \in \cT_\cF^{\star}$ such that $x_t \in \cX_m(\sigma_{1:m})$;
  \item Set $p_t(v,i) = \indicator{i=i_t(v)}$ for all $i\in \cK$ and all activated leaves $v \in \cL$;
  \item Set $p_t(v,i) = q_t(v,\cdot) \cdot p_t(\cdot,i)$ for all $i\in \cK$ and all activated nodes $v \in \cT_{\text{Exp4}}$;
  \item Set $p_t(v,\cdot) = p_t(w,\cdot)$ for all activated nodes $v \in \cT_{\text{bin}}$, where $w \in \cC_v$ is the unique activated child of $v$;
  \item Draw $I_t \sim p^*_t$ and play $\hat{y}_t = I_t 2^{-M}$, where $p^*_t(i) = (1-\gamma) p_t(v_0,i) + \gamma\indicator{i = 1}$ for all $i \in \cK$;
  \item Observe $\ell_t(y)$ for all $y \geq \hat{y}_t$;
  \item For all $m = 0,\dots,M-1$ and all activated nodes $v \in \cT_{\text{Exp4}}$ at level $m+1/2$, with $v = (\sigma_1,c_1,\sigma_2,c_2,\ldots,\sigma_{m},c_m,\sigma_{m+1})$, compute the loss estimate for each $i \in \cK_t(v)$,
  \begin{equation}
    \label{eq:estimatedloss1}
    \hat{\ell}_t(v,i) = \frac{\ell_t(i/2^M) - \ell_t\big(j_t(v)/2^{M}\big) + 2^{1-m}}{\sum_{k=1}^i p^*_t(k)}\indicator{I_t \leq i} - \frac{\alpha_m}{\sum_{k=1}^i p^*_t(k)} + \frac{\alpha_m}{\gamma} \,,
  \end{equation}
  where $\cK_t(v) = \theset{i}{(\exists w \in \cC_v)\; p_t(w,i) > 0}$ and $j_t(v) = \max\cK_t(v)$.\\[0.2cm]
  Then, for each $w \in \cC_v$, compute the expert loss
  $
    \tilde{\ell}_t(v,w) = p_t(w,\cdot)\cdot\hat{\ell}_t(v,\cdot)
  $ and perform the update
  \begin{equation}
  \label{eq:EWAweights1}
    q_{t+1}(v,w) = \frac{\displaystyle \exp\!\left(-\eta_m \sum_{s=1}^t \tilde{\loss}_s(v,w) \indicator{x_s \in \cX_{m+1}(\sigma_{1:(m+1)})} \right)}{\displaystyle \sum_{w'\in\cC_v} \exp\!\left(-\eta_m \sum_{s=1}^t \tilde{\loss}_s(v,w') \indicator{x_s \in \cX_{m+1}(\sigma_{1:(m+1)})} \right)} \;.
  \end{equation}
\end{enumerate}
}
\caption{
\label{alg:HierExp4star}
$\textrm{HierExp4}^\star$ (for the one-sided full information feedback)
}
\end{algorithm2e}

\subsection{Proof of Theorem~\ref{thm:HierExp4star}}
As for the proof of Theorem~\ref{thm:HierarchicalExp4}, we start by stating a lemma indicating that the losses associated with neighboring leaves are close to one another.
\begin{lemma}
  \label{lem:treeDistancestar}
  Let $v \in \cT_{\text{Exp4}}$, with depth $m+1/2$, $m \in \{0,\ldots,M-1\}$. Then, all leaves $w,w' \in \cL_v$ in the subtree rooted at $v$ satisfy 
    \[
       \left|\ell_t\left(\frac{i_t(w)}{2^M}\right) - \ell_t\left(\frac{i_t(w')}{2^M}\right) \right| \leq 2^{1-m}
    \]
  at all rounds $t \geq 1$ when $w$ and $w'$ are both activated. 
\end{lemma}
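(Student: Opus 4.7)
The plan is to unfold the definitions of $\cL_v$ and of the actions $i_t(w)/2^M$ given in~\eqref{eq:leaves-actions}, and then use the $1$-Lipschitzness of $\ell_t$ together with the non-expansiveness of the projection onto $[2^{-M},1]$.

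First, I would describe the activated leaves of $\cL_v$ explicitly. Since $v \in \cT_{\text{Exp4}}$ is at depth $m+1/2$, it has the form $v=(\sigma_1,c_1,\ldots,\sigma_m,c_m,\sigma_{m+1})$ and any leaf $w \in \cL_v$ has the form $w=(\sigma_1,c_1,\ldots,\sigma_m,c_m,\sigma_{m+1},c_{m+1},\sigma_{m+2},c_{m+2},\ldots,\sigma_M,c_M)$ for some choice of additional symbols. Now, by construction of $\cT_\cF^\star$, a leaf $w$ is activated at round $t$ if and only if $x_t \in \cX_M(\sigma_{1:M})$; in particular, the indices $\sigma_1,\ldots,\sigma_M$ are the unique sequence determined by $x_t$ via the nested dyadic partition. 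Hence if $w,w' \in \cL_v$ are both activated at round $t$, they share exactly the same $(\sigma_1,\ldots,\sigma_M)$ as well as the first $m$ coefficients $(c_1,\ldots,c_m)$ (because of the common ancestor $v$), and may only differ in the coefficients $(c_{m+1},\ldots,c_M)$, with $c_k, c'_k \in \{-1,0,1\}$.

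Next, I would compute the difference of labels before projection. Setting $L(w) = 1/2 + \sum_{k=1}^M 2^{-k} c_k$ and similarly $L(w')$, the first $m$ terms cancel and the remaining terms satisfy $|c_k - c'_k| \leq 2$, so
\[
\bigl| L(w) - L(w') \bigr|
= \left| \sum_{k=m+1}^M 2^{-k}(c_k - c'_k) \right|
\leq 2 \sum_{k=m+1}^M 2^{-k}
= 2^{1-m} - 2^{1-M}
\leq 2^{1-m}.
\]

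Finally, by~\eqref{eq:leaves-actions}, $i_t(w)/2^M$ and $i_t(w')/2^M$ are obtained from $L(w)$ and $L(w')$ by the same projection onto $[2^{-M},1]$, which is non-expansive. Combining this with the $1$-Lipschitzness of $\ell_t$ yields
\[
\left|\ell_t\!\left(\tfrac{i_t(w)}{2^M}\right) - \ell_t\!\left(\tfrac{i_t(w')}{2^M}\right)\right|
\leq \left|\tfrac{i_t(w)}{2^M} - \tfrac{i_t(w')}{2^M}\right|
\leq \bigl| L(w) - L(w') \bigr|
\leq 2^{1-m},
\]
which is the claimed bound. There is no real obstacle here; the only point to be careful about is the explicit characterization of activated leaves under $v$, which is what forces the $\sigma$-coordinates to agree and leaves only the $c$-coordinates past level $m$ free to differ.
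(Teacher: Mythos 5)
Your proof is correct and follows essentially the same route as the paper's: both bound $\bigl|L(w)-L(w')\bigr|$ by $2\sum_{k=m+1}^{M}2^{-k}\leq 2^{1-m}$ using that the common ancestor $v$ (an Exp4 node at depth $m+1/2$) pins down $c_1,\dots,c_m$, and then pass through the non-expansiveness of the projection onto $[2^{-M},1]$ and the $1$-Lipschitzness of $\ell_t$. One small remark: your closing comment that the activation condition is ``what forces the $\sigma$-coordinates to agree and leaves only the $c$-coordinates past level $m$ free to differ'' somewhat overstates what is actually needed --- the label $L(w)=1/2+\sum_{k}2^{-k}c_k$ does not depend on the $\sigma$-coordinates at all, so the bound holds for any two leaves of $\cL_v$ regardless of activation; the paper accordingly invokes only the agreement of $\sigma_i$ for $i\leq m+1$ and $c_i$ for $i\leq m$, which follows from the common ancestor alone. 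The ``when both activated'' clause in the lemma matters only because those are the leaves whose losses an Exp4 node actually sees, not because it is required for the inequality.
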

\begin{proof}
Denote $w = (\sigma_1,c_1,\dots,\sigma_M,c_M)$ and $w' = (\sigma_1',c_1',\dots,\sigma_M',c_M')$. First, we remark that since $w$ and $w'$ have the common ancestor $v$ at level $m$, we have
$
    \sigma_i  = \sigma'_i  
$
and $c_i = c'_i$
for all $i \leq m$, as well as $\sigma_{m+1}=\sigma'_{m+1}$ (recall that $v \in \cT_{\text{Exp4}}$). Thus,
\[
  \left| \sum_{j=1}^M 2^{-j} c_j - \sum_{j=1}^M 2^{-j} c_j' \right| \leq \sum_{j=m+1}^M 2^{-j} |c_j - c_j'| \leq \sum_{j=m+1}^{M} 2^{-j+1 } \leq 2^{1-m} \,.
\]
Therefore, by definition~\eqref{eq:leaves-actions} of $i_t(w)$ and $i_t(w')$, and since projecting two real numbers onto $\bigl[2^{-M},1\bigr]$ can only reduce their distance, we get, when $w$ and $w'$ are both activated,
\[
     \left|\frac{i_t(w)}{2^M} - \frac{i_t(w')}{2^M} \right| \leq  \left| \frac{1}{2} + \sum_{j=1}^M 2^{-j} c_j - \frac{1}{2} - \sum_{j=1}^M 2^{-j} c_j' \right| \leq 2^{1-m} \,,
\]
which implies the result since $\ell_t$ is 1-Lipschitz.
\end{proof}
We are now ready to prove Theorem~\ref{thm:HierExp4star} from the main text.
\begin{proofref}{Theorem~\ref{thm:HierExp4star}}
Let $f \in \cF$. By Lemma~\ref{lem:approxStar}, we can fix $f_M:[0,1]^d \to \bigl[-1/2,3/2\bigr]$ of the form~\eqref{eq:wavelet-decomposition}, with coefficients $c_m(\sigma_{1:m}) \in \bigl\{- 1, 0, 1 \bigr\}$, and such that $\| f_M - f \|_{\infty} \leq 2^{-M}$.

In the proof of Theorem~\ref{thm:HierarchicalExp4}, we rewrite the regret with respect to $f_M$ as the sum of $M$ regrets along the path joining the root of $\cT_{\cF}$ to the leaf corresponding to~$f_M$. Next, we proceed similarly except that now $f_M$ corresponds to a $(2^d)$-ary subtree of $\cT_{\cF}^{\star}$ indexed by $\sigma_{1:M} \in \{1,\ldots,2^d\}^M$. The $2^{d M}$ leaves  $\bigl(\sigma_1,c_1(\sigma_1),\sigma_2,c_2(\sigma_{1:2}),\ldots,\sigma_M,c_M(\sigma_{1:M})\bigr)$ are labeled with the values of $f_M$ (before projecting onto $\bigl[2^{-M},1\bigr]$) over the cubes $\cX_M(\sigma_{1:M})$.

We control the regret of $\textrm{HierExp4}^\star$ with respect to $f_M$ by starting from the root $v_0$ of $\cT_{\cF}^{\star}$ and by progressively moving down the tree. We write $\Prob_{t-1}$ and $\E_{t-1}$ for conditioning on $I_1,\ldots,I_{t-1}$, and set $P_t^*(k) \eqdef \sum_{i=1}^k p^*_t(i) = \Prob_{t-1}(I_t \leq k)$. In the sequel, we repeatedly use Corollary~\ref{cor:Exp4-penalized} with parameters $E_m \eqdef 2^{1-m}$, $\alpha_m$ and $\gamma$ to control the regret on each ``Exp4 node". This corollary applies to loss estimates penalized with $\pen_t(k) = - \alpha_m/P_t^*(k)$ for ``Exp4 nodes" at depth $m+1/2$, with $m = 0,\ldots,M-1$.

Since the root $v_0$ is a ``binning node", we have $p_t(v_0,k) = p_t(\sigma_1,k)$ at all rounds $t$ such that $x_t \in \cX_1(\sigma_1)$. Therefore, writing $\ell_t(k)$ instead of $\ell_t\bigl(k/2^M\bigr)$,
\begin{align}
& \E\!\left[\sum_{t=1}^T \sum_{k=1}^K p_t(v_0,k) \left(\ell_t(k) - \frac{\alpha_0 + 4 \eta_0 E_0^2}{P^*_t(k)}\right) \right] \nonumber \\
& = \sum_{\sigma_1 \in \{1,\ldots,2^d\}} \E\!\left[\sum_{t:\, x_t \in \cX_1(\sigma_1)} \sum_{k=1}^K p_t(\sigma_1,k) \left(\ell_t(k) - \frac{\alpha_0 + 4 \eta_0 E_0^2}{P^*_t(k)}\right) \right] \nonumber \\
 &  \leq \sum_{\sigma_1 \in \{1,\ldots,2^d\}} \, \Biggl( \min_{c_1 \in \{-1,0,1\}}  \E\!\left[\sum_{t:\, x_t \in \cX_1(\sigma_1)} \sum_{k=1}^K p_t\bigl((\sigma_1,c_1),k\bigr) \left(\ell_t(k) - \frac{\alpha_0}{P^*_t(k)}\right) \right]  \nonumber \\
& \qquad \qquad \qquad \qquad + \frac{\ln 3}{\eta_0} + \frac{4 \eta_0 T(\sigma_1) \alpha_0^2}{\gamma^2} \Biggr) \label{eq:efficientRegret-root1} \\
 & \leq \sum_{\sigma_1} \, \E\!\left[\sum_{t:\, x_t \in \cX_1(\sigma_1)} \sum_{k=1}^K p_t\bigl([\sigma_1,c_1(\sigma_1)],k\bigr) \left(\ell_t(k) - \frac{\alpha_0}{P^*_t(k)}\right) \right]  + \frac{2^d \ln 3}{\eta_0} + \frac{4 \eta_0 T \alpha_0^2}{\gamma^2} \,, \label{eq:efficientRegret-root2}
\end{align}
where \eqref{eq:efficientRegret-root1} follows from Corollary~\ref{cor:Exp4-penalized} with parameters $E_0 = 2$, $\alpha_0$ and $\gamma$, and where $T(\sigma_1)$ is the number of rounds $t$ such that $x_t \in \cX_1(\sigma_1)$. Indeed, the probability distributions $p_t\bigl((\sigma_1,c_1),\cdot\bigr)$ are mixtures of actions $i_t(w)$ associated with activated leaves $w \in \cL_{\sigma_1}$, whose losses are at most at distance $2$ by Lemma~\ref{lem:treeDistancestar} with $m=0$. The last inequality \eqref{eq:efficientRegret-root2} above is obtained by choosing $c_1=c_1(\sigma_1)$.

So far we have showed how to move from depth $m=0$ to depth $1$. Repeating the same argument at all subsequent depths $m=1,\ldots,M-1$, and noting that $\alpha_{m-1} = \alpha_m + 4 \eta_m 2^{2-2m} = \alpha_{m} + 4 \eta_{m} E_{m}^2$, we get
\begin{align}
& \E\!\left[\sum_{t=1}^T \sum_{k=1}^K p_t(v_0,k) \left(\ell_t(k) - \frac{\alpha_0 + 4 \eta_0 E_0^2}{P^*_t(k)}\right) \right] \nonumber \\
 & \leq \sum_{\sigma_{1:M}} \, \E\!\left[\sum_{t:\, x_t \in \cX_1(\sigma_{1:M})} \sum_{k=1}^K p_t\bigl([\sigma_1,c_1(\sigma_1),\ldots,\sigma_M,c_M(\sigma_{1:M})],k\bigr) \left(\ell_t(k) - \frac{\alpha_{M-1}}{P^*_t(k)}\right) \right]  \nonumber \\
& \qquad + \sum_{m=0}^{M-1} \left(\frac{2^{(m+1)d} \ln 3}{\eta_m} + \frac{4 \eta_m T \alpha_m^2}{\gamma^2} \right) \nonumber \\
 & \leq \sum_{\sigma_{1:M}} \, \left(\sum_{t:\, x_t \in \cX_1(\sigma_{1:M})} \ell_t\bigl(i_t(v)\bigr) \right) + \sum_{m=0}^{M-1} \left(\frac{2^{(m+1)d} \ln 3}{\eta_m} + \frac{4 \eta_m T \alpha_m^2}{\gamma^2} \right) \,, \label{eq:efficientRegret-root3}
\end{align}
where in the last inequality $v$ denotes the leaf $\bigl(\sigma_1,c_1(\sigma_1),\ldots,\sigma_M,c_M(\sigma_{1:M})\bigr)$, whose probability distribution $p_t(v,\cdot)$ is the unit mass on its action $i_t(v)$.

We define $\tilde{f}_M(x) \eqdef \min\bigl(\max\bigl(f_M(x), 2^{-M} \bigr), 1\bigr)$ to be the projection of $f_M(x)$ onto $\bigl[2^{-M},1\bigr]$, and, similarly,  $\tilde{f}(x) \eqdef \min\bigl(\max\bigl(f(x), 2^{-M} \bigr), 1\bigr)$. Since projection can only reduce the distance between two points, $\big|\tilde{f}_M(x) - \tilde{f}(x) \big| \leq \big|f_M(x) - f(x) \big| \leq \| f_M - f \|_{\infty} \leq 2^{-M}$. But, noting that $\tilde{f}(x)$ is $2^{-M}$-close to $f(x) \in [0,1]$, this yields $\big|\tilde{f}_M(x) - f(x) \big| \leq \big|\tilde{f}_M(x) - \tilde{f}(x) \big| + \big|\tilde{f}(x) - f(x) \big| \leq 2^{1-M}$. Therefore, since $\ell_t$ is $1$-Lipschitz, and using both~\eqref{eq:wavelet-decomposition-givenx} and the definition~\eqref{eq:leaves-actions} of $i_t(v)$, we get that, when the leaf $v = \bigl(\sigma_1,c_1(\sigma_1),\ldots,\sigma_M,c_M(\sigma_{1:M})\bigr)$ is activated,
\[
\ell_t\bigl(i_t(v)\bigr) = \ell_t\bigl(\tilde{f}_M(x_t)\bigr) \leq \ell_t\bigl(f(x_t)\bigr) + \big|\tilde{f}_M(x_t) - f(x_t) \big| \leq \ell_t\bigl(f(x_t)\bigr) + 2^{1-M} \,.
\]
We plug the last bound into~\eqref{eq:efficientRegret-root3} and multiply the resulting inequality by $1-\gamma$. Then, using $\E_{t-1}\bigl[\ell_t\bigl(\hat{y}_t\bigr)\bigr] = (1-\gamma) p_t(v_0,\cdot) \cdot \ell_t + \gamma \ell_t(1)$, together with $\ell_t(y) \in [0,1]$, we obtain
\begin{align}
\Reg_T(\cF)
& \leq \sum_{m=0}^{M-1} \left(\frac{2^{(m+1)d} \ln 3}{\eta_m} + \frac{4 \eta_m T \alpha_m^2}{\gamma^2} \right) + (1-\gamma) \, \E\!\left[\sum_{t=1}^T \sum_{k=1}^K p_t(v_0,k) \frac{\alpha_0 + 4 \eta_0 E_0^2}{P^*_t(k)} \right] \nonumber 
\\ &+
	\gamma T + 2^{1-M} T \,. \label{eq:efficientRegret-root4}
\end{align}
Now, we use that by definition $p_t^\ast(k) = (1-\gamma) p_t(v_0,k) + \gamma \indicator{k = 1} \geq (1-\gamma)p_t(v_0,k)$.
Therefore, similarly to the analysis of Exp3-RTB, we can control the variance term 
\begin{multline*}
   (1-\gamma) \sum_{k=1}^K  \frac{p_t(v_0,k)}{P^*_t(k)}  \leq  \sum_{k=1}^K \frac{p_t^\ast(k)}{P_t^\ast(k)} =  1 + \sum_{k=2}^K \frac{P_t^\ast(k) - P_t^\ast(k-1)}{P_t^\ast(k)} 
   = 1 + \sum_{k=2}^K \int_{P_t^\ast(k-1)}^{P_t^\ast(k)} \frac{\dd x}{P_t^\ast(k)} \\
   \leq 1 + \sum_{k=2}^K \int_{P_t^\ast(k-1)}^{P_t^\ast(k)} \frac{\dd x}{x} = 1 +  \int_{P_t^\ast(1)}^{1} \frac{\dd x}{x} \leq 1 - \ln P_t^\ast(1) \leq 1 + \ln \frac{1}{\gamma} = \ln \frac{e}{\gamma} \,,
\end{multline*}
where we used $P_t^\ast(1) = p_t^\ast(1) \geq \gamma$.
Hence, substituting into the previous bound~\eqref{eq:efficientRegret-root4} and recalling that $E_0 = 2$, we get
\begin{equation} \label{eq:efficientRegret-root5}
\Reg_T(\cF) \leq \sum_{m=0}^{M-1} \left(\frac{2^{(m+1)d} \ln 3}{\eta_m} + \frac{4 \eta_m T \alpha_m^2}{\gamma^2} \right) + \,   (\alpha_0 + 16  \eta_0) T \ln \Big(\frac{e}{\gamma} \Big) + \gamma T + 2^{1-M} T \,.
\end{equation}


\paragraph{Optimization of the parameters.} In the sequel, we show that the prescribed choices of $M$, $\gamma$, $(\alpha_m)$, and $(\eta_m)$ lead to the stated regret bound. Before that, we explain these particular choices by approximately optimizing~\eqref{eq:efficientRegret-root5}, which will lead to~\eqref{eq:defetam} and~\eqref{eq:alpham}. For the sake of readability, we will sometimes write $f \lesssim g$ instead of $f = \cO(g)$. First, recall that $\alpha_{m-1}  = \alpha_m + 4 \eta_m E_m^2  = \alpha_m + 2^{4-2m} \eta_m$. We thus make the approximation $\alpha_m \approx 2^{4-2m} \eta_m$, and start by optimizing in $\eta_m$ the terms inside the sum appearing in~\eqref{eq:efficientRegret-root5}. To do so, we equalize the terms to get the equality
\begin{equation*}
  \frac{2^{(m+1)d} \ln 3}{\eta_m} = \frac{4 \eta_m T \alpha_m^2}{\gamma^2}
\end{equation*}
which we approximate, using the previous remark, by
\begin{equation}
  \label{eq:defetam}
   \frac{2^{m d} }{\eta_m} \approx \frac{T 2^{-4m} \eta_m^3}{\gamma^2}  \quad \textrm{hence our choice}  \quad \eta_m =  c 2^{m(\frac{d}{4}+1)} \gamma^{\frac{1}{2}}T^{-\frac{1}{4}} \,,
\end{equation}
where $c>0$ will be optimized by the analysis.
Now, from these values of $\eta_m$, we can compute $\alpha_m$ by choosing $\alpha_M = 0$ and using the recursion $\alpha_{m-1} = \alpha_m + 2^{4-2m}\eta_m$. This yields
\begin{equation}
  \label{eq:alpham}
  \alpha_m = \sum_{j=m+1}^{M} 2^{4-2j} \eta_j = c 2^4 \gamma^{\frac{1}{2}}T^{-\frac{1}{4}} \sum_{j=m+1}^M 2^{j(\frac{d}{4}-1)} \,.
\end{equation}
We now upper bound~\eqref{eq:efficientRegret-root5} by distinguishing between the two cases $1 \leq d \leq 4$ and $d>4$.
\begin{itemize}
  \item If $d \leq 4$, then the terms inside the sum in~\eqref{eq:alpham} are non-increasing, so that
  \begin{equation}
    \label{eq:boundalphadleq4}
    \alpha_m \leq c  2^4 M_1 \gamma^{\frac{1}{2}}T^{-\frac{1}{4}} 2^{m(\frac{d}{4} - 1)} \,,
  \end{equation}
  where $M_1 = M$ if $2\leq d\leq 4$ and $M_1 = 2 \geq 1/\bigl(2^{1-d/4}-1\bigr)$ if $d=1$.
  The sum in the right-hand side of~\eqref{eq:efficientRegret-root5} then becomes, by substituting the definition of $\eta_m$ (see~\eqref{eq:defetam}) and the above upper bound on $\alpha_m$,
  \begin{align*}
    \sum_{m=0}^{M-1} \left(\frac{2^{(m+1)d} \ln 3}{\eta_m} + \frac{4 \eta_m T \alpha_m^2}{\gamma^2} \right) 
      & \leq \sum_{m=0}^{M-1} \left(\frac{2^{m(\frac{3}{4}d-1)+ d} T^{\frac{1}{4}} \ln 3}{c \gamma^{\frac{1}{2}}} + \frac{2^{10} c^3 M_1^2 2^{m(\frac{3d}{4}-1)} T^{\frac{1}{4}}}{\gamma^{\frac{1}{2}}} \right) \\
      & = \Big(\frac{2^d \ln 3}{c} + 2^{10} M_1^2 c^3\Big) T^{\frac{1}{4}} \gamma^{-\frac{1}{2}} \sum_{m=0}^{M-1} 2^{m(\frac{3}{4}d -1)} \\
      & \stackrel{(\ast)}{\leq} 2^7 M_1^{\frac{1}{2}} T^{\frac{1}{4}} \gamma^{-\frac{1}{2}} \sum_{m=0}^{M-1} 2^{m(\frac{3}{4}d -1)}\\
      & \leq \left\{ 
      \begin{array}{ll}
        2^{11} T^{\frac{1}{4}} \gamma^{-\frac{1}{2}} & \text{if $d=1$} \\
        2^7 M^{\frac{3}{2}} T^{\frac{1}{4}} \gamma^{-\frac{1}{2}} 2^{M(\frac{3}{4}d-1)} & \text{if $2\leq d\leq 4$}
      \end{array} \right.
  \end{align*}
  where inequality ($\ast$) is by using $d \leq 4$ and choosing $c =  2^{-5/4} M_1^{-1/2}$, while the last inequality is because $\sum_{m=0}^\infty 2^{-m/4} \leq 2^3$ for $d=1$. Plugging this inequality into~\eqref{eq:efficientRegret-root5}, upper bounding $\alpha_0$ using~\eqref{eq:boundalphadleq4} and $\eta_0$ using~\eqref{eq:defetam}, and recalling that $M = \lceil \ln_2(1/\gamma) \rceil$, we get
  \begin{itemize}
    \item case $d=1$: choosing $\gamma = T^{-1/2}/\ln(T)$,
  \begin{equation}
    \Reg_T(\cF)
	= \cO\big(\sqrt{T \ln T}\big) \,.
    \label{eq:resultd1}
  \end{equation}
    \item case $2\leq d\leq 4$: choosing $\gamma = T^{-1/(d+2/3)}$,
  \begin{align}
    \Reg_T(\cF) & \lesssim  \ln (1/\gamma)^{\frac{3}{2}} T^{\frac{1}{4}} \gamma^{-\frac{1}{2}} \gamma^{-(\frac{3}{4}d-1)}  + \ln (1/\gamma)^2  \gamma^{\frac{1}{2}} T^{\frac{3}{4}}  + \gamma T \nonumber \\
      &  \lesssim (\ln T)^{\frac{3}{2}} T^{\frac{d-1/3}{d+2/3}}  + (\ln T)^2 T^{\frac{3d/4}{d+2/3}} + T^{\frac{d-1/3}{d+2/3}} \lesssim (\ln T)^{\frac{3}{2}} T^{\frac{d-1/3}{d+2/3}} \,,
      \label{eq:resultd24}
  \end{align}
  where the last inequality is because $3d/4 < d-1/3$ for $d \geq 2$.
  \end{itemize}
  \item If $d \geq  5$, then the terms inside the sum in~\eqref{eq:alpham} are exponentially increasing, so that
  \begin{equation}
    \label{eq:alphamboundgeq4}
    \alpha_m \leq c  2^4 \gamma^{\frac{1}{2}} T^{-\frac{1}{4}} 2^{M(\frac{d}{4}-1)} \frac{2^{\frac{d}{4}-1}}{2^{\frac{d}{4}-1}-1} \leq \frac{c 2^4}{1 - 2^{-1/4}} \gamma^{\frac{1}{2}} T^{-\frac{1}{4}} 2^{M(\frac{d}{4}-1)} \leq c 2^7 \gamma^{\frac{1}{2}} T^{-\frac{1}{4}} 2^{M(\frac{d}{4}-1)}\,.
  \end{equation}
  Then, following the lines of the case $d\leq 4$, by substituting $\eta_m$ (see~\eqref{eq:defetam}) and $\alpha_m$ in the sum in the right-hand side of~\eqref{eq:efficientRegret-root5} 
   \begin{align*}
    \sum_{m=0}^{M-1} &\left(\frac{2^{(m+1)d} \ln 3}{\eta_m} + \frac{4 \eta_m T \alpha_m^2}{\gamma^2} \right)
\\ 
      & \leq \sum_{m=0}^{M-1} \left(\frac{2^{m(\frac{3}{4}d-1)+ d} T^{\frac{1}{4}} \ln 3}{c \gamma^{\frac{1}{2}}} + \frac{2^{16} c^3 2^{m(\frac{d}{4}+1)+M(\frac{d}{2}-2)} T^{\frac{1}{4}}}{\gamma^{\frac{1}{2}}} \right) \\
      & =  \frac{T^{\frac{1}{4}}}{\gamma^{\frac{1}{2}}} \left(\frac{2^d \ln 3}{c} \sum_{m=0}^{M-1} 2^{m(\frac{3}{4}d-1)} + 2^{16}c^3 2^{M(\frac{d}{2}-2)} \sum_{m=0}^{M-1} 2^{m(\frac{d}{4}+1)} \right)\\
      & \leq  \frac{T^{\frac{1}{4}}}{\gamma^{\frac{1}{2}}} \left(\frac{2^{4+d}}{c}  2^{M(\frac{3}{4}d-1)} + 2^{16} c^3 2^{M(\frac{3d}{4}-1)} \right)\\
     & =  \frac{T^{\frac{1}{4}}}{\gamma^{\frac{1}{2}}} \left(\frac{2^{4+d}}{c} +  2^{16} c^3  \right) 2^{M(\frac{3}{4}d-1)}   \\
      & \leq {T^{\frac{1}{4}}}{\gamma^{-\frac{1}{2}}}  2^{M(\frac{3}{4}d-1) + 8 + \frac{3}{4}d} 
  \end{align*}
where in the last inequality we used $c = 2^{d/4-3}$.
Plugging into the regret bound~\eqref{eq:efficientRegret-root5}, upper bounding $\alpha_0$ using~\eqref{eq:alphamboundgeq4} and $\eta_0$ using~\eqref{eq:defetam}, and recalling that $M = \lceil\ln_2(1/\gamma)\rceil$, we get
  \begin{multline}
    \Reg_T(\cF) \lesssim T^{\frac{1}{4}}\gamma^{\frac{1}{2} - \frac{3}{4}d} +  \gamma^{\frac{3}{2}-\frac{d}{4}} T^{\frac{3}{4}} \ln(1/\gamma) + \gamma T \\
     \lesssim T^{\frac{d-1/3}{d+2/3}} + T^{\frac{d-1}{d+2/3}} \ln(T)  + T^{\frac{d-1/3}{d+2/3}} 
     \lesssim  T^{\frac{d-1/3}{d+2/3}} \,,
     \label{eq:resultdgeq5}
  \end{multline}
where the second inequality is by setting $\gamma = T^{-1(d+2/3)}$.
\end{itemize}
Putting together the three cases~\eqref{eq:resultd1},~\eqref{eq:resultd24}, and~\eqref{eq:resultdgeq5} concludes the proof of the regret bound.

\paragraph{Running time of $\textrm{HierExp4}^\star$.}
We only address the case $d \geq 2$.
First note that the total number of ``Exp4 nodes'' in $\cT_{\cF}^\star$ is
\begin{align*}
\big|\cT_{\text{Exp4}}\big| & = \sum_{m=0}^{M-1} \left|\Bigl\{v=(\sigma_1,c_1,\sigma_2,c_2,\ldots,\sigma_{m},c_m,\sigma_{m+1}) \,:\, \sigma_k \in \{1,\ldots,2^d\}, c_k \in \{-1,0,1\} \Bigr\}\right| \\
& = \sum_{m=0}^{M-1} 3^m 2^{d(m+1)} = \frac{1}{3} \frac{\bigl(3 \cdot 2^d\bigr)^{M+1}- 3 \cdot 2^d}{\bigl(3 \cdot 2^d\bigr) - 1} \leq \bigl(3 \cdot 2^d\bigr)^M \,.
\end{align*}
Similarly, $\big|\cT_{\text{bin}}\big| \leq \bigl(3 \cdot 2^d\bigr)^M$ and $\big|\cL\big| \leq \bigl(3 \cdot 2^d\bigr)^M$. Summing the three upper bounds, we get $\big|\cT_{\cF}^\star\big| \leq 3\bigl(3 \cdot 2^d\bigr)^M$. Therefore, using $M = \lceil\ln_2(1/\gamma)\rceil$ and $\gamma = T^{-1/(d+2/3)}$, we obtain
\[
M \leq 1 + \frac{\ln_2 T}{d+2/3} \qquad \textrm{so that} \qquad \big|\cT_{\cF}^\star\big| = \cO\left(T^{\frac{d + \ln_2 3}{d+2/3}}\right)~.
\]
The number of actions in $\cK$ is $2^M = \cO\big(T^{\frac{1}{d+2/3}}\big)$. The running time at every round $t$, which is at most proportional to the total number of tuples $(v,w,i)$ for $(v,i) \in \cT_{\cF}^\star \times \cK$ and $w \in \cC_v$, is at most of the order of $\big|\cT_{\cF}^\star\bigr| \times \bigl|\cK\bigr| \times \max\bigr\{2^d, 3\bigr\}$. Therefore, the running time is at most of the order of $T^{\frac{d + 1 + \ln_2 3}{d+2/3}} \leq T^{1.8}$ for all $d \geq 2$.

A tighter analysis however yields a smaller time complexity than the rate $T^{\frac{d + 1 + \ln_2 3}{d+2/3}}$ derived above. This is because ---from Algorithm~\ref{alg:HierExp4star}--- all elementary computations only need to be computed on \emph{activated} ``Exp4 nodes'' nodes, on their (finite number of) children, and on actions in $\cK$. Fix a round $t \geq 1$. From a computation similar to the one above, but noting that all activated nodes share the same coefficients $\sigma_m$ for all depths $m$, we can see that the total number of activated ``Exp4 nodes'' at round $t$ is at most of
\[
\sum_{m=0}^{M-1} 3^m = \frac{3^M - 1}{2}
\]
so that the running time per round is at most of the order of $3^M 2^M \lesssim T^{\frac{1+\ln_2 3}{d+2/3}}$, which concludes the proof for $d \geq 2$. Similarly, in the case $d=1$, the choice of $\gamma = T^{-1/2}/ (\ln T)$ entails a running time at most of the order of
$
3^M 2^M = \cO\big(\sqrt{T} \ln T)^{1+\ln_2 3}\big) = o\Big(T^{\frac{1+\ln_2 3}{1+2/3}}\Bigr)
$,
so that the stated running time also holds true for $d=1$.

\end{proofref}

\section{Algorithm HierHedge and proof of Theorem~\ref{thm:HierarchicalEWA}}
\label{as:proofHierarchicalEWA}
\begin{algorithm2e}[!h]
\SetKwInOut{Input}{Input}
\SetKwInOut{Init}{Initialization}

\Input{Tree $\cT_{\cF}$ with root $v_0$ and leaves $\cL$, learning rate sequences $\eta_2(v) \geq \eta_3(v) \geq\cdots > 0$ for $v \in \cT_{\cF} \setminus \cL$.
}
\Init{
Set $q_1(v,\cdot)$ to the uniform distribution in $\Delta(|\cC_v|)$ for every $v \in \cT_{\cF} \setminus \cL$.
} 

\For{$t=1,2,\ldots$}{
\begin{enumerate}[topsep=0pt,parsep=0pt,itemsep=0pt]
	\item Get context $x_t \in \cX$;
	\item Set $p_t(v,i) = \indicator{i=i_t(v)}$ for all $i=1,\dots,K$ and for all $v \in \cL$;
	\item Set $p_t(v,i) = q_t(v,\cdot) \cdot p_t(\cdot,i)$ for all $i=1,\dots,K$ and for all $v \in \cT_{\cF} \setminus \cL$;
	\item Draw $I_t \sim p_t(v_0,\cdot)$;
	\item Observe $\ell_t(i)$ for all $i=1,\dots,K$;
	\item For each $v \in \cT_{\cF} \setminus \cL$ and for each $w \in \cC_v$ compute the expert loss
	$
		\tilde{\ell}_t(v,w) = p_t(w,\cdot)\cdot\ell_t
	$ and perform the update
	\begin{equation}
	\label{eq:Hedgeweights}
		q_{t+1}(v,w) = \frac{\exp\!\left(-\eta_{t+1}(v) \sum_{s=1}^t \tilde{\loss}_s(v,w) \right)}{\sum_{w'\in\cC_v} \exp\!\left(-\eta_{t+1}(v) \sum_{s=1}^t \tilde{\loss}_s(v,w') \right)}
	\end{equation}
\end{enumerate}
}
\caption{
\label{alg:HierarchicalEWA}
HierHedge (for the full information feedback)
}
\end{algorithm2e}
The proof goes along the same lines as the proof of Theorem~\ref{thm:HierarchicalExp4}. Let $v$ be any internal node at depth $m=0,\dots,M-1$. By construction of HierHedge, the distribution $p_t(v,\cdot) \in \Delta(K)$ is computed by Hedge and is supported on the set $\theset{i_t(w)}{w \in \cL_v}$. By Lemma~\ref{lem:treeDistance}, the losses of any pair of actions in this set differ by at most $2^{-m+3}$. Hence, we may apply \cite[Theorem~5]{Cesa-Bianchi2007} with $N = |\cC_v| \leq N_{m+1}$, $E = 2^{-m+3}$, and $\tilde{V}_t \leq E^2$ obtaining
\begin{align*}
    \max_{w \in \cC_v}\E\!\left[\sum_{t=1}^T p_t(v,\cdot) \cdot \ell_t - \sum_{t=1}^T p_t(w,\cdot) \cdot \ell_{t}\right]
\leq
	2^{-m+5}\left(\sqrt{T \ln N_{m+1}} + \ln N_{m+1} + 1\right)~.
\end{align*}
As in the proof of Theorem~\ref{thm:HierarchicalExp4}, we sum the above along a path $v_0 \to v_{1} \to \dots \to v_M = w$ for any leaf $w \in \cL$ and get
\begin{align*}
    \E\!\left[\sum_{t=1}^T p_t(v_0,\cdot) \cdot \ell_t\right] - \min_{w\in\cL}\sum_{t=1}^T \ell_{t}(i_t(w))
\leq
	2^5\sum_{m=0}^{M-1} 2^{-m}\left(\sqrt{T \ln N_{m+1}} + \ln N_{m+1} + 1\right)~.
\end{align*}
Since $\cL$ is a $(2^{-M})$-covering of $\cF$, $2^{-M} \le 2\ve$, and $\cK_{\ve}$ is a $\epsilon$-covering of $\cY$, we get
\begin{align*}
	\E\!\left[\sum_{t=1}^T \ell_t\bigl(y_{I_t}\bigr)\right] - \inf_{f \in \cF} \sum_{t=1}^T \ell_t\bigl(f(x_t)\bigr)
\leq
	5\ve T + 2^5\sum_{m=0}^{M-1} 2^{-m}\left(\sqrt{T \ln N_{m+1}} + \ln N_{m+1} + 1\right)~.
\end{align*}
Overapproximating the sums with integrals similarly to~\eqref{eq:sumint1} and~\eqref{eq:sumint2},
\[
	\Reg_T(\cF)
\leq
	5\ve T + 2^7 \int_{\ve/2}^{1/2} \left( 2\sqrt{T\ln\cN_{\infty}(\cF,x)} + \ln\cN_{\infty}(\cF,x) \right) \dd x~.
\]
If $\cF$ is a set of Lipschitz functions $f:[0,1]^d \to [0,1]^p$, where $[0,1]^d$ is endowed with the norm $\|x-x'\|_\infty$, then $\ln\cN_\infty(\cF,\ve) = \cO\big(p \ve^{-d}\big)$ implying
\[
    \Reg_T(\cF) \leq  5T\epsilon + \left\{ 
        \begin{array}{ll}
            \cO\big(\sqrt{pT} +  p\ln(1/\epsilon)\big) & \text{if $d=1$} \\
            \cO\big(\sqrt{pT}\ln(1/\epsilon) + p\epsilon^{1-d}\big) & \text{if $d=2$} \\
            \cO\big(\sqrt{pT}\epsilon^{1-d/2} + p\epsilon^{1-d}\big) & \text{if $d\geq 3$.}
        \end{array} 
        \right.
\]
Optimizing in $\epsilon$, we obtain the stated result.

\fi

\end{document}